\def\eqref#1{equation~\ref{#1}}
\def\1{\bm{1}}
\def\vmu{{\bm{\mu}}}
\def\mSigma{{\bm{\Sigma}}}
\DeclareMathAlphabet{\mathsfit}{\encodingdefault}{\sfdefault}{m}{sl}
\SetMathAlphabet{\mathsfit}{bold}{\encodingdefault}{\sfdefault}{bx}{n}
\def\gG{{\mathcal{G}}}
\newcommand{\parents}{Pa} 
\newcommand{\ancestors}{An}
\newtheorem{remark}{Remark}
\renewcommand{\eqref}[1]{Eq.~(\ref{#1})\xspace}
\newcommand{\cmark}{\textcolor{ForestGreen}{\ding{51}}}
\newcommand{\xmark}{\textcolor{BrickRed}{\ding{55}}}
\definecolor{attribute}{rgb}{1.000,0.443,0.443}
\definecolor{observed}{rgb}{0.400,0.636,0.847}
\definecolor{outcome}{rgb}{0.436, 0.675,0.275}
\definecolor{unobserved}{rgb}{0.557,0.314,0.722}
\newcommand{\attribute}{\textcolor{attribute}}
\newcommand{\observed}{\textcolor{observed}}
\newcommand{\outcome}{\textcolor{outcome}}
\newcommand{\unobserved}{\textcolor{unobserved}}
\DeclareRobustCommand\circled[1]{\tikz[baseline=(char.base)]{
            \node[shape=circle,draw,inner sep=2pt, scale=0.75] (char) {#1};}}
\title{\mbox{Causal fairness under unobserved confounding:}\\ A neural sensitivity framework} 
\author{Maresa Schröder, Dennis Frauen \& Stefan Feuerriegel\\
Munich Center for Machine Learning\\LMU Munich \\ 
\texttt{\{maresa.schroeder,frauen,feuerriegel\}@lmu.de}
}
\begin{document}

\maketitle

\begin{abstract}
Fairness of machine learning predictions is widely required in practice for legal, ethical, and societal reasons. Existing work typically focuses on settings without unobserved confounding, even though unobserved confounding can lead to severe violations of causal fairness and, thus, unfair predictions. In this work, we analyze the sensitivity of causal fairness to unobserved confounding. Our contributions are three-fold. First, we derive bounds for causal fairness metrics under different sources of unobserved confounding. This enables practitioners to examine the sensitivity of their machine learning models to unobserved confounding in fairness-critical applications. Second, we propose a novel neural framework for learning fair predictions, which allows us to offer worst-case guarantees of the extent to which causal fairness can be violated due to unobserved confounding. Third, we demonstrate the effectiveness of our framework in a series of experiments, including a real-world case study about predicting prison sentences. To the best of our knowledge, ours is the first work to study causal fairness under unobserved confounding. To this end, our work is of direct practical value as a refutation strategy to ensure the fairness of predictions in high-stakes applications.
\end{abstract}

\section{Introduction}
\label{sec:introduction}

\vspace{-0.2cm}
Fairness of machine learning predictions is crucial to prevent harm to individuals and society. For this reason, fairness of machine learning predictions is widely required by legal frameworks \citep{Barocas.2016,De-Arteaga.2022,Dolata.2022,Feuerriegel.2020,Frauen.2023a,Kleinberg.2019}. Notable examples where fairness is commonly mandated by law are predictions in credit lending and law enforcement.

\begin{wrapfigure}[12]{r}{0.45\textwidth}
    \vspace{-1.1cm}
    \includegraphics[width=1\linewidth]{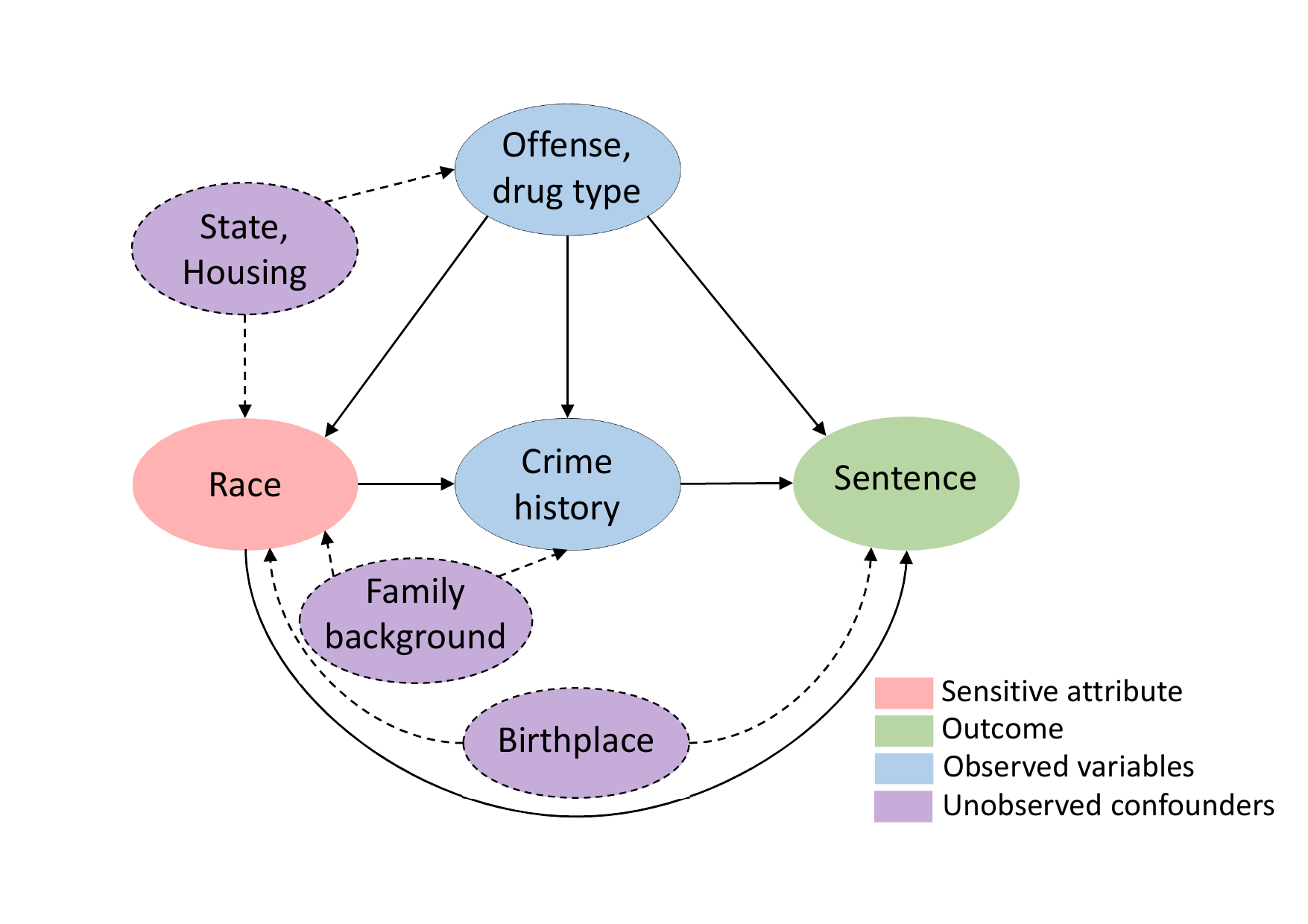}
    \vspace{-1cm}
    \caption{Example: Causal graph for predicting prison sentences.}
    \label{fig:prison_causal_graph}
\end{wrapfigure}
A prominent fairness notion is \emph{causal fairness} \citep[e.g.,][]{Kilbertus.2017, Plecko.2022, Zhang.2018b}. Causal fairness leverages causal theory to ensure that a given sensitive attribute does not affect prediction outcomes. For example, in causal fairness, a prison sentence may vary by the type of offense but should not be affected by the defendant's race. Causal fairness has several advantages in practice as it directly relates to legal terminology in that outcomes must be independent of sensitive attributes \citep{Barocas.2016}.\footnote{In the literature, there is some ambiguity in the terminology around causal fairness. By referring to our fairness notion as ``causal fairness'', we are consistent with previously established literature \citep{Plecko.2022}. However, we acknowledge that there are other fairness notions based on causal theory (e.g., counterfactual fairness), which do \emph{not} present the focus here. More details about the difference are in Supplement~\ref{appendix:rw_fair_ml}.} 

Existing works on causally fair predictions \citep[e.g.,][]{Nabi.2018, Zhang.2018a, Zhang.2018b} commonly focus on settings with \emph{no unobserved confounding} and thus assume that the causal structure of the problem is \emph{fully} known. However, the assumption of no unobserved confounding is fairly strong and unrealistic in practice \citep{Carey.2022, Fawkes.2022, Kilbertus.2019, Loftus.2018, Plecko.2022}. This can have severe negative implications: due to unobserved confounding, the notion of causal fairness may be violated, and, as a result, predictions can cause harm as they may be actually \emph{unfair}.

An example is shown in Fig.~\ref{fig:prison_causal_graph}. Here, a prison sentence ($=$\outcome{outcome}) should be predicted while accounting for the offense and crime history ($=$\observed{observed confounders}). Importantly, race ($=$\attribute{sensitive attribute}) must not affect the sentence. However, many sources of heterogeneity are often missing in the data, such as the housing situation, family background, and birthplace ($=$\unobserved{unobserved confounders}). If not accounted for, the prediction of prison sentences can be \emph{biased} by the confounders and, therefore, \emph{unfair}. Additional examples from credit lending and childcare are in Supplement~\ref{appendix:examples}.

\textbf{Our paper:} We analyze the sensitivity of causal fairness to unobserved confounding. 
Our main contributions are three-fold (see Table~\ref{tbl:contributions}):\footnote{Both data and code for our framework are available in our \href{https://github.com/m-schroder/FairSensitivityAnalysis}{GitHub }repository.} 
\begin{enumerate}
\item We \emph{derive bounds for causal fairness metrics} under different sources of unobserved confounding. 
Our bounds serve as a refutation strategy to help practitioners examine the sensitivity of machine learning models to unobserved confounding in fairness-critical applications.
\item We \emph{develop a novel neural prediction framework}, 
which offers worst-case guarantees for the extent to which causal fairness may be violated due to unobserved confounding. 
\item We \emph{demonstrate the effectiveness of our framework} in experiments on synthetic and real-world datasets. 
To the best of our knowledge, we are the first to address causal fairness under unobserved confounding.
\end{enumerate}

\begin{wrapfigure}[7]{r}{0.55\textwidth}
\vspace{-1.8cm}
\fbox{\includegraphics[width=.99\linewidth]{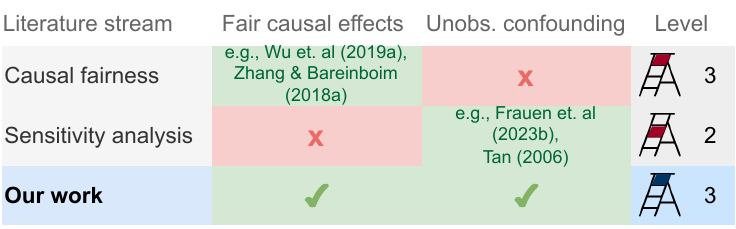}}
\captionof{table}{We make contributions to multiple literature streams. Level according to Pearl's causality ladder.}
\label{tbl:contributions}
\end{wrapfigure}

\section{Related work}
\label{sec:related_work}

\textbf{Fairness notions based on causal reasoning:} Fairness notions based on causal reasoning have received increasing attention
\citep[e.g.,][]{Chiappa.2019, Chikahara.2021, Huang.2022, Khademi.2019, Kilbertus.2017, Kilbertus.2019, Nabi.2018, Nabi.2019, Wu.2019a, Wu.2019b, Wu.2022, Yao.2023, Zhang.2017, Zhang.2018a, Zhang.2018b}. For a detailed overview, we refer to \citep{Loftus.2018, Nilforoshan.2022, Plecko.2022}. 

Different fairness notions have emerged under the umbrella of causal reasoning (see Supplement~\ref{appendix:related_work} for details). For example, the notion of \emph{counterfactual fairness} \citep{Kusner.2017} makes restrictions on the factual and counterfactual outcomes. A different notion is \emph{causal fairness} \citep{Nabi.2018, Plecko.2022}. Causal fairness makes restrictions on the causal graph. It blocks specific causal pathways that influence the outcome variable and are considered unfair. In this paper, we focus on the notion of causal fairness.

\textbf{Fairness bounds in non-identifiable settings:} One literature stream \citep{Nabi.2018,Wu.2019b} establishes fairness bounds for settings that are non-identifiable due to various reasons. While we also derive fairness bounds, both types of bounds serve \emph{different} purposes (i.e., to address general non-identifiability vs. specificity to unobserved confounding).

\textbf{Fair predictions that are robust to miss-specification:} The works by \citet{Chikahara.2021} and \citet{Wu.2019a} aim to learn prediction models that ensure fairness even in the presence of model misspecification. However, none of these works trains prediction models for causal fairness that are robust to unobserved confounding. This is one of our novelties.   

\textbf{Sensitivity of fairness to unobserved confounding:} To the best of our knowledge, only one work analyzes the sensitivity of fairness to unobserved confounding \citep{Kilbertus.2019}, yet it has crucial differences: (1)~Their paper focuses on counterfactual fairness, whereas we focus on causal fairness. (2)~Their paper allows for unobserved confounding only between covariates, while we follow a fine-grained approach and model different sources of unobserved confounding (e.g., between covariates and sensitive attribute). (3)~Their paper builds upon non-linear additive noise models. We do not make such restrictive parametric assumptions. 

\textbf{Sensitivity models:} 
Sensitivity models \citep[e.g.,][]{Jesson.2022, Rosenbaum.1987, Tan.2006} are common tools for partial identification under unobserved confounding. We later adopt sensitivity models to formalize our setting. However, it is \underline{not} possible to directly apply sensitivity models to our setting (see Table~\ref{tbl:contributions}). The reason is that sensitivity models perform partial identification on \emph{interventional effects} ($=$level~2 in Pearl's causality ladder), whereas causal fairness involves (nested) \emph{counterfactual effects} ($=$level~3). 
For an in-depth discussion, see Supplement~\ref{appendix:contribution}.

\textbf{Research gap:} To the best of our knowledge, no work has analyzed the sensitivity of causal fairness to unobserved confounding. We are thus the first to mathematically derive tailored fairness bounds. Using our bounds, we propose a novel framework for learning fair predictions that are robust to the misspecification of the underlying causal graph in terms of unobserved confounding.

\section{Setup}
\label{sec:setup}

\vspace{-0.2cm}

\textbf{Notation:} We use capital letters to indicate random variables $X$ with realizations $x$.
We denote probability distributions over $X$ by $P_X$, i.e., $X \sim P_X$. For ease of notation, we omit the subscript whenever it is clear from the context.
If $X$ is discrete, we denote its probability mass function by $P(x) = P(X=x)$ and the conditional probability mass functions by $P(y \mid x) = P(Y=y \mid X=x)$ for a discrete random variable $Y$. If $X$ is continuous, $p(x)$ then denotes the probability density function w.r.t. the Lebesgue measure. A counterfactual outcome to $y$ under intervention $\tilde{x}$ is denoted as $y_{\tilde{x}}$. We denote vectors in bold letters (e.g., $\mathbf{X}$). For a summary of notation, see Supplement~\ref{appendix:notation}.

We formulate the problem based on the structural causal model (SCM) framework \citep{Pearl.2014}. 
\begin{definition}[SCM]
    \emph{Let $\mathbf{V} = \{V_1,\ldots,V_n\}$ denote a set of observed endogenous variables, let $\mathbf{U} \sim P_{\mathbf{U}}$ denote a set of unobserved exogenous variables, and let $\mathcal{F} = \{f_{V_1}, \ldots, f_{V_n}\}$ with $f_{V_i}: \parents(V_i) \subseteq \mathbf{V} \cup \mathbf{U} \rightarrow V_i$. The tuple $(\mathbf{V}, \mathbf{U},\mathcal{F},P_{\mathbf{U}})$ is called a \emph{structural causal model} (SCM).}
\end{definition}
\vspace{-0.2cm}
We assume a directed graph $\gG_{\mathcal{C}}$ induced by SCM $\mathcal{C}$ to be acyclic.

\begin{wrapfigure}[14]{r}{0.35\textwidth}
    \vspace{-0.5cm}
    \includegraphics[width=1\linewidth]{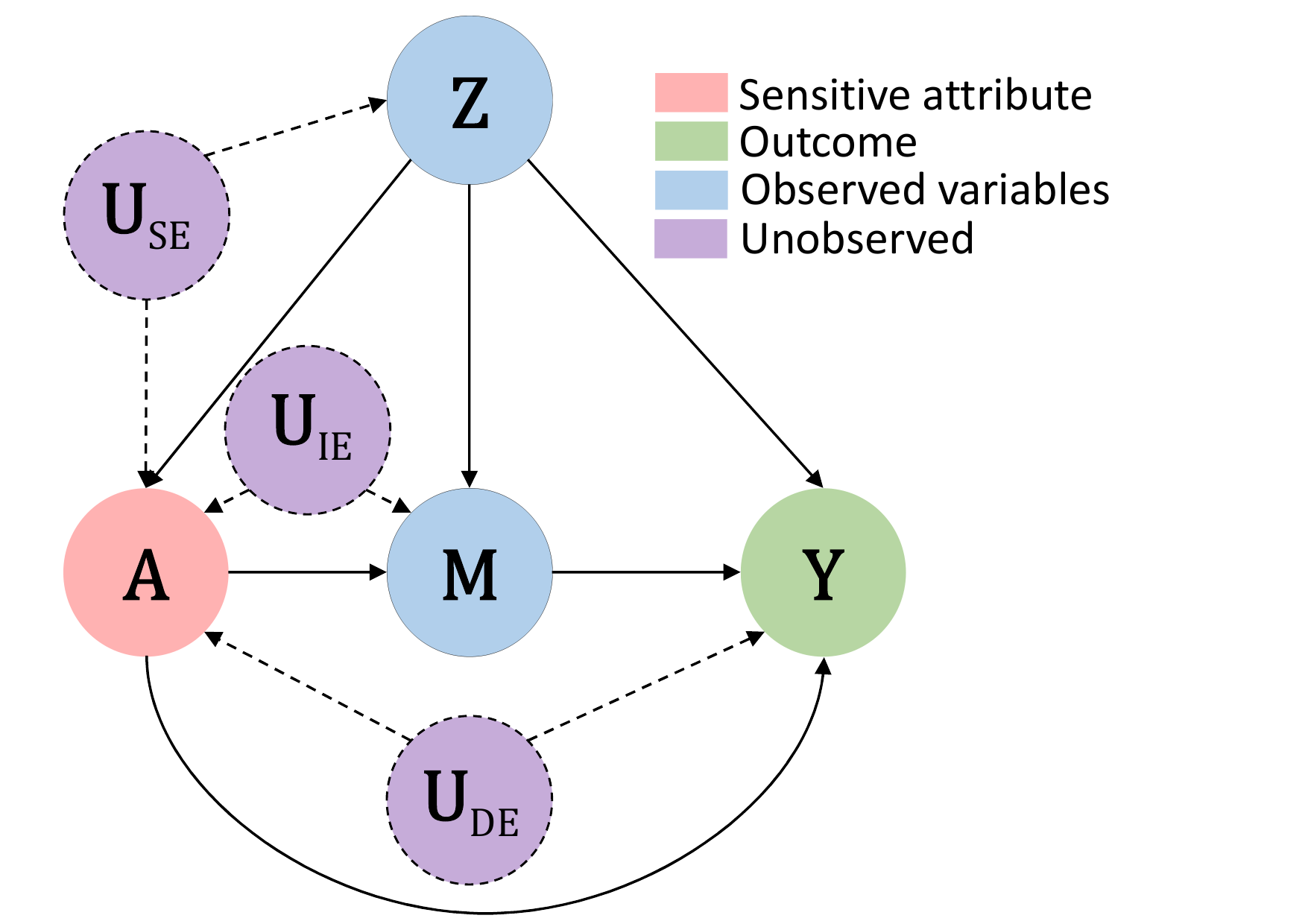}
    \caption{Causal graph of the Standard Fairness Model with different sources of unobserved confounding $U_{\mathrm{DE}}$, $U_{\mathrm{IE}}$, and $U_{\mathrm{SE}}$.}
    \label{fig:fairness_graph}
\end{wrapfigure}
\textbf{Setting:} We consider the \emph{Standard Fairness Model} \citep{Plecko.2022}. The data $\mathbf{V}=\{\mathbf{A},\mathbf{Z},\mathbf{M}, Y\}$ consist of a \attribute{sensitive attribute} $\mathbf{A} \in \mathcal{A}$, \observed{observed confounders} $\mathbf{Z} \in \mathcal{Z}$, \observed{mediators} $\mathbf{M} \in \mathcal{M}$, and \outcome{outcome} $Y \in \mathbb{R}$. 

Unique to our setup is that we allow for three types of \unobserved{unobserved confounding} in the model: (1)~unobserved confounding between $\mathbf{A}$ and $Y$, which we refer to as {\emph{direct unobserved confounding}} $U_{\mathrm{DE}}$; (2)~between $\mathbf{A}$ and $\mathbf{M}$, 
called {\emph{indirect unobserved confounding}} $U_{\mathrm{IE}}$; and (3)~between $\mathbf{A}$ and $\mathbf{Z}$, which we refer to as {\emph{spurious confounding}} $U_{\mathrm{SE}}$. The set of unobserved confounders is then given by $\mathbf{U} = \{U_{\mathrm{DE}}, U_{\mathrm{IE}}, U_{\mathrm{SE}}\}$. Fig.~\ref{fig:fairness_graph} shows the causal graph of the Standard Fairness Model, where we show the different sources of unobserved confounding.

For ease of notation, we focus on discrete variables $\mathbf{Z}, \mathbf{M}$ and binary $\mathbf{A}$, where for simplicity $|\mathbf{A}| = |\mathbf{M}| = |\mathbf{Z}| = 1$. This is common in the literature \citep[e.g.,][]{Khademi.2019, Plecko.2022, Wu.2019a}. Further, it also matches common applications of our framework in law enforcement (Fig.~\ref{fig:prison_causal_graph}), as well as credit lending and childcare (Supplement~\ref{appendix:examples}). In Supplement~\ref{appendix:cont_setting}, we discuss the extension to continuous features and provide additional experimental results.

\textbf{Path-specific causal fairness:} Following \citet{Zhang.2018a}, we define the following path-specific causal fairness as: (1)~\emph{counterfactual direct effect} (Ctf-DE) for the path $\mathbf{A} \rightarrow Y$; (2)~\emph{indirect effect} (Ctf-IE) for the path $\mathbf{A} \rightarrow \mathbf{M} \rightarrow Y$; and (3)~\emph{spurious effects} (Ctf-SE) for the path $\mathbf{Z} \rightarrow \mathbf{A}$. 
 
Ctf-DE measures direct discrimination based on the sensitive attribute. Ctf-IE and Ctf-SE together measure indirect discrimination through factors related to the sensitive attribute, potentially working as a proxy for the latter. The Ctf-IE corresponds to all indirect causal paths, whereas the Ctf-SE measures all non-causal paths. In the prison sentence example (Fig.~\ref{fig:prison_causal_graph}), the Ctf-DE could measure unequal prison sentences based on race. The Ctf-IE could be driven by the defendant's court verdict history, which itself might have been illegitimately affected by the offender's race.
 
\textbf{Causal fairness:} Discrimination in terms of causal fairness is formulated through nested counterfactuals \footnote{Nested counterfactuals express multi-layered hypothetical scenarios by considering counterfactual statements of subsequent events such as $\mathbf{A} \rightarrow \mathbf{M} \rightarrow \mathbf{Y}$. One can view nested counterfactuals as multi-intervention counterfactuals in which one intervention is (partly) intervened on by a second one, e.g., $P(y_{a_i, m_{a_j}})$, which reads as the probability of outcome $Y=y$ under the intervention $A=a_i$ and the intervention $M=m$ where $m$ itself is not affected by intervention $A=a_i$ but $A=a_j$.} for each of the above path-specific causal fairness effects. Formally, one defines the discrimination of realization $A=a_i$ compared $A=a_j$ for each path conditioned on a realization $A=a$, i.e.,
\begin{align}\label{eqn:ctf_effects}
    &\mathrm{DE}_{a_i ,a_j} (y \mid a) := P(y_{a_j, m_{a_i}} \mid a) - P(y_{a_i}\mid a),\\
    &\mathrm{IE}_{a_i ,a_j} (y \mid a) := P(y_{a_i, m_{a_j}} \mid a) - P(y_{a_i} \mid a),\\
    &\mathrm{SE}_{a_i ,a_j} (y) := P(y_{a_i} \mid a_j) - P(y \mid a_i).
\end{align}
The total variation, i.e., the overall discrimination of individuals with $A=a_i$ compared to $A=a_j$, can then be explained as $TV_{a_i, a_j} = \mathrm{DE}_{a_i ,a_j}(y \mid a_i) - \mathrm{IE}_{a_j ,a_i}(y \mid a_i) - \mathrm{SE}_{a_j ,a_i}(y)$ \citep{Zhang.2018a}.
Causal fairness is achieved when $\mathrm{DE}_{a_i ,a_j} (y \mid a)$, $\mathrm{IE}_{a_i ,a_j} (y \mid a)$, $\mathrm{SE}_{a_i ,a_j}(y)$ are zero or close to zero. In practice, this is typically operationalized by enforcing that the effects are lower than a user-defined threshold. For ease of notation, we abbreviate the three effects as DE, IE, and SE.
We further use  $\mathrm{CF}$ to refer to any of the above three effects, i.e., $\mathrm{CF} \in \{ \mathrm{DE}, \mathrm{IE}, \mathrm{SE} \}$.
We note that, although we focus on the three effects specified above, our framework for deriving fairness bounds can also be employed for other fairness notions (see Supplement \ref{appendix:further_bounds}).

\textbf{Our aim:} We aim to ensure causal fairness of a prediction model in settings where there exists unobserved confounding, i.e., $U_{\mathrm{DE}}, U_{\mathrm{IE}}, U_{\mathrm{SE}} \neq 0$. Existing methods \citep[e.g.,][]{Khademi.2019, Zhang.2018a} commonly assume that unobserved confounding does \emph{not} exist, i.e., $U_{\mathrm{DE}}, U_{\mathrm{IE}}, U_{\mathrm{SE}} = 0$. Therefore, existing methods are \emph{not} applicable to our setting. Hence, we first derive novel bounds for our setting and then propose a tailored framework.

\section{Bounds for causal fairness under unobserved confounding}
\label{sec:bounds}

In the presence of unobserved confounding, the different path-specific causal fairness effects -- i.e., $\mathrm{CF} \in \{ \mathrm{DE}, \mathrm{IE}, \mathrm{SE} \}$ -- are \emph{not} identifiable from observational data, and estimates thereof are thus biased \citep{Pearl.2014}. As a remedy, we derive bounds for DE, IE, and SE under unobserved confounding. To this end, we move from point identification to partial identification, where estimate bounds under a sensitivity model $\mathcal{S}$ (elaborated later). Formally, we are interested in an upper bound $\mathrm{CF}_{\mathcal{S}}^{+}$ and lower bound $\mathrm{CF}_{\mathcal{S}}^{-}$ corresponding to the path-specific causal fairness effect $\mathrm{CF}$ for all $\mathrm{CF} \in \{ \mathrm{DE}, \mathrm{IE}, \mathrm{SE} \}$.

To do so, we proceed along the following steps (see Fig.~\ref{fig:framework}): \circled{1}~We decompose each $\mathrm{CF} \in \{\mathrm{DE},\mathrm{IE},\mathrm{SE}\}$ into identifiable parts and unnested interventional effects. \circled{2}~We then use a suitable sensitivity model $\mathcal{S}$ and the interventional effects to derive the upper $\mathrm{CF}^+$ and lower bound $\mathrm{CF}^-$ for each $\mathrm{CF} \in \{ \mathrm{DE}, \mathrm{IE}, \mathrm{SE} \}$. Building upon our bounds, we later develop a neural framework for learning a causally fair prediction model that is robust to unobserved confounding (see Sec.~\ref{sec:fair_model}).

\begin{figure}[h]
    \centering
    \includegraphics[width=\linewidth]{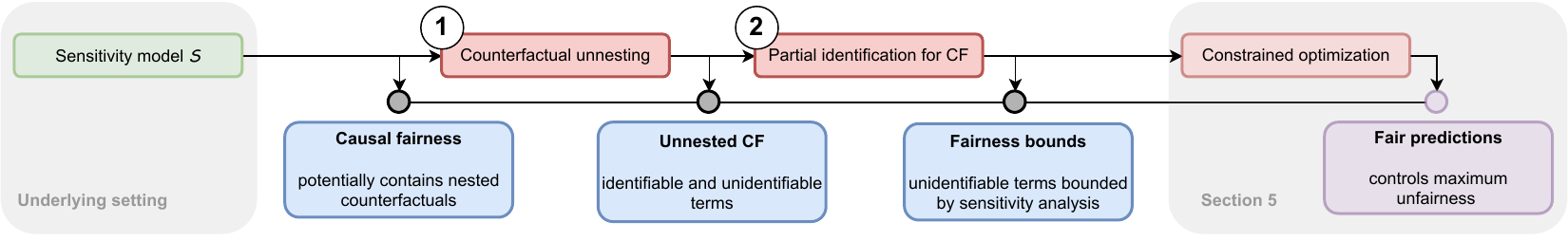}
    \caption{Our workflow for deriving bounds on causal fairness under unobserved confounding.}
    \label{fig:framework}
    \vspace{-0.4cm}
\end{figure}

\subsection{Setting: Generalized marginal sensitivity model}

We derive fairness bounds in the setting of causal sensitivity analysis \citep{Imbens.2003, Rosenbaum.1987}. Here, the underlying idea is to relax the assumption of \emph{no} unobserved confounders by allowing for a certain strength of unobserved confounding through so-called sensitivity models. In our work, we adopt the {generalized marginal sensitivity model} (GMSM) \citep{Frauen.2023b}. 
A key benefit of the GMSM is that it can deal with discrete mediators and both discrete and continuous outcomes. 
Importantly, the GMSM includes many existing sensitivity models and thus allows to derive optimality results for a broad class of models.
\footnote{Other sensitivity models from the literature such as the marginal sensitivity model (MSM) \citep{Tan.2006} are special cases of the aforementioned and therefore only of limited use for our general setting (e.g., the MSM can only handle sensitive attributes that are binary but \emph{not} discrete, or they can \emph{not} estimate path-specific causal fairness). In practice, it is up to the user to replace the GMSM with a suitable sensitivity model in our framework.} 
For an overview of causal sensitivity models and a discussion of their applicability to derive fairness bounds, we refer the reader to Supplement~\ref{appendix:related_work_sensitivity}.

\begin{definition}[Generalized marginal sensitivity model (GMSM)] \label{def:sensitivity_model}
    \emph{Let $\mathbf{V} = \{\mathbf{Z}, \mathbf{A},\mathbf{M}, Y\}$. Let $\mathbf{A}, \mathbf{M}$ and $ \mathbf{Z}$ denote a set of observed endogenous variables, $\mathbf{U}$ a set of unobserved exogenous variables, and $\gG$ a causal directed acyclic graph (DAG) on $\mathbf{V} \cup \mathbf{U}$. For an observational distribution $P_{\mathbf{V}}$ on $\mathbf{V}$ and a family $\mathcal{P}$ of joint probability distributions on $\mathbf{V} \cup \mathbf{U}$ that satisfy 
    \begin{align}
        \frac{1}{(1-\Gamma_W) \, p(a \mid z) + \Gamma_W} \leq \frac{P(U_w = u_W \mid z,a)}{P(U_w = u_W \mid z,\mathrm{do}(a))} \leq \frac{1}{\Gamma_W^{-1} \, p(a \mid z) + \Gamma_W^{-1}}
    \end{align}
    for $W \in \{\mathbf{M}, Y\}$, the tuple $S = (\mathbf{V}, \mathbf{U},\mathcal{G},P_{\mathbf{V}}, \mathcal{P})$ is called a \emph{weighted generalized marginal sensitivity model} (GMSM) with sensitivity parameter $\Gamma_W \geq 1$.}
\end{definition}

The sensitivity parameter $\Gamma_W$ controls the width of the interval defined by the bounds. In practice, it is determined based on domain knowledge of the magnitude of unobserved confounding on the association between $W$ and $\mathbf{A}$ or through data-driven heuristics \citep[e.g.,][]{Jin.2023, Kallus.2019}. Importantly, our framework applies not only for known $\Gamma_W$ but is also of practical value for unknown $\Gamma_W$. We discuss practical considerations for both cases in Supplement~\ref{appendix:UC_knowledge}.
    
We introduce further notation: An acyclic SCM $\mathcal{C} = (\mathbf{V}, \mathbf{U}_{\mathcal{C}}, \mathcal{F}, P_{\mathbf{U}})$, with $\mathbf{U} \subseteq \mathbf{U}_{\mathcal{C}}$, $\gG \subseteq \gG_{\mathcal{C}}$, is \emph{compatible} with sensitivity model $S$ if $\mathbf{U}_{\mathcal{C}}$ does not introduce additional unobserved confounding and the probability distribution $P_{\mathbf{V} \cup \mathbf{U}_{\mathcal{C}}}$ induced by $\mathcal{C}$ belongs to $\mathcal{P}$, i.e., $P_{\mathbf{V} \cup \mathbf{U}_{\mathcal{C}}} \in \mathcal{P}$.\footnote{For a rigorous definition, see Supplement~\ref{appendix:extended_theory}.} The class of SCMs $\mathcal{C}$ compatible with $S$ is denoted as $\mathcal{K}(S)$.

\textbf{Objective for bounding path-specific causal fairness:} We now formalize our objective for identifying $\mathrm{CF}_{\mathcal{S}}^{+}$ and $\mathrm{CF}_{\mathcal{S}}^{-}$.
For $\mathrm{CF}_{\mathcal{C}} \in \{\mathrm{DE}, \mathrm{IE}, \mathrm{SE}\}$, we aim to find the causal effects that maximize (minimize) $\mathrm{CF}_{\mathcal{C}}$ over all a possible SCMs compatible with our generalized sensitivity model $\mathcal{S}$, i.e.,
\begin{align}
\label{eqn:optimality_bounds}
    \mathrm{CF}^+_{\mathcal{S}} = \sup_{\mathcal{C} \in \mathcal{K}(S)} \mathrm{CF}_{\mathcal{C}} 
    \qquad
    \textnormal{and}
    \qquad
    \mathrm{CF}^-_{\mathcal{S}} = \inf_{\mathcal{C} \in \mathcal{K}(S)} \mathrm{CF}_{\mathcal{C}}.     
\end{align}
As a result, we yield upper ($+$) and lower ($-$) bounds $\mathrm{DE}^{\pm}_{\mathcal{S}}, \mathrm{IE}^{\pm}_{\mathcal{S}}$, and $\mathrm{SE}^{\pm}_{\mathcal{S}}$ for the different effects.

\subsection{Counterfactual unnesting (Step 1)}

The path-specific causal fairness effects $\mathrm{CF}_{\mathcal{C}} \in \{\mathrm{DE}, \mathrm{IE}, \mathrm{SE}\}$ contain nested counterfactuals ($=$level 3 of Pearl's ladder of causality). Sensitivity analysis, however, is built upon interventional effects ($=$level~2 of the ladder). Thus, incorporating sensitivity analysis into the fair prediction framework is non-trivial. First, we must decompose the expressions CF into identifiable parts and interventional effects, which solely depend on one intervention at a time. Hence, we now derive an unnested expression of $\mathrm{CF}_{\mathcal{C}}$ in the following.

\begin{lemma}\label{lem:unnested_CF}
$\mathrm{CF}_{\mathcal{C}}\in \{\mathrm{DE}, \mathrm{IE}, \mathrm{SE}\}$ can be defined as a monotonically increasing function $h$ over a sum of unidentifiable effects $e \in \mathcal{E}$ in the SCM $\mathcal{C}$, where $\mathcal{E}$ denotes the set of all effects in $\mathcal{C}$, and an identifiable term $q$, i.e.,
\begin{align}\label{eqn:general_CF}
    \mathrm{CF}_{\mathcal{C}} = h \Big( \sum_{e \in \mathcal{E}} e(\mathbf{v}, \mathcal{C}) + q \Big) \qquad\text{for}\quad \mathrm{CF}_{\mathcal{C}}\in \{\mathrm{DE}, \mathrm{IE}, \mathrm{SE}\}.
\end{align}  
\end{lemma}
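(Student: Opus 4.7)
The plan is to prove Lemma~\ref{lem:unnested_CF} by explicitly constructing the decomposition for each of $\mathrm{DE}, \mathrm{IE}, \mathrm{SE}$ via the counterfactual unnesting identity (the standard mediation-style decomposition of nested counterfactuals into single-intervention, or interventional, terms times purely observational terms). Because $h$ only needs to be a monotonically increasing function of the scalar sum $\sum_{e} e + q$, taking $h$ to be the identity already suffices; the real content of the lemma is therefore the existence of an \emph{additive} decomposition of $\mathrm{CF}_{\mathcal{C}}$ into unidentifiable interventional effects plus an identifiable (observational) term.

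Concretely, for $\mathrm{DE}_{a_i,a_j}(y \mid a)$, I would unnest $P(y_{a_j, m_{a_i}} \mid a)$ by conditioning on $Z$ and summing over $M$: under the DAG of Fig.~\ref{fig:fairness_graph}, $Y_{a_j, m}$ and $M_{a_i}$ are functions of disjoint subsets of exogenous noises, so in the twin-network representation they are conditionally independent given $(A, Z)$. This yields
\begin{equation*}
P(y_{a_j, m_{a_i}} \mid a) \;=\; \sum_{z, m} P(y_{a_j, m} \mid z, a)\, P(m_{a_i} \mid z, a)\, P(z \mid a),
\end{equation*}
and similarly $P(y_{a_i} \mid a) = \sum_{z} P(y_{a_i} \mid z, a)\, P(z \mid a)$ for the subtracted term. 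Each appearing term $P(y_{a',m}\mid z, a)$, $P(m_{a'}\mid z,a)$, $P(y_{a'}\mid z,a)$ is an interventional effect (level~2) that is unidentifiable under unobserved confounding but is no longer nested, while the weights $P(z \mid a)$ and (for SE) $P(y \mid a_i)$ are purely observational.

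Proceeding analogously, IE unnests symmetrically by exchanging the roles of $a_i$ and $a_j$ in the mediator intervention; SE requires only $P(y_{a_i} \mid a_j) = \sum_z P(y_{a_i} \mid z, a_j)\, P(z \mid a_j)$, since its second summand is already observational. Collecting, each $\mathrm{CF}_{\mathcal{C}}$ becomes a linear combination of interventional effects $e \in \mathcal{E}$ with identifiable nonnegative weights plus an identifiable additive constant $q$. Writing this as $h\bigl(\sum_{e \in \mathcal{E}} e(\mathbf{v}, \mathcal{C}) + q\bigr)$ with $h$ the identity produces a (trivially) monotonically increasing function, completing the lemma.

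The main obstacle is justifying the factorization $P(y_{a_j, m}, m_{a_i} \mid z, a) = P(y_{a_j, m} \mid z, a)\, P(m_{a_i} \mid z, a)$ under the three unobserved confounders $U_{\mathrm{DE}}, U_{\mathrm{IE}}, U_{\mathrm{SE}}$, because conditioning on the collider $A$ can in principle reopen back-door paths through shared exogenous noises. I would handle this by writing $Y_{a_j, m}$ as a deterministic function of $(Z, U_{\mathrm{DE}}, U_Y)$ alone and $M_{a_i}$ as a deterministic function of $(Z, U_{\mathrm{IE}}, U_M)$ alone — both intervention targets $A$ and $M$ being severed in the respective counterfactual submodel — and then invoking mutual independence of the exogenous noises guaranteed by the product form of $P_{\mathbf{U}}$ in the SCM. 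Conditioning on $(A=a, Z=z)$ only constrains $(U_{\mathrm{DE}}, U_{\mathrm{IE}}, U_{\mathrm{SE}})$ via the equation $A = f_A(\cdot)$, and the desired conditional independence then follows from a careful d-separation check in the counterfactual graph.
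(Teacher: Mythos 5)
There is a genuine gap, and it sits exactly at the step you yourself flag as ``the main obstacle.'' The factorization
\begin{equation*}
P(y_{a_j,m},\, m_{a_i} \mid z, a) \;=\; P(y_{a_j,m} \mid z, a)\, P(m_{a_i} \mid z, a)
\end{equation*}
is false in the setting of this lemma. $Y_{a_j,m}$ depends on $U_{\mathrm{DE}}$ and $M_{a_i}$ on $U_{\mathrm{IE}}$, but $A$ is a common child of $U_{\mathrm{DE}}$, $U_{\mathrm{IE}}$, and $U_{\mathrm{SE}}$, so conditioning on $A=a$ opens the collider $U_{\mathrm{DE}} \rightarrow A \leftarrow U_{\mathrm{IE}}$ and couples the two noise terms; the ``careful d-separation check'' you defer to would fail, not succeed. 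The paper's own derivation makes this explicit: after applying the ancestral-set and counterfactual factorization theorems of Correa et al.\ it arrives at the single counterfactual factor $P(y_{a_i,z,m}, a_z, z, m_{a_i,z})$ and notes that when all of $U_{\mathrm{DE}}, U_{\mathrm{IE}}, U_{\mathrm{SE}}$ are present, $\{Y,A,M,Z\}$ forms one c-component, so no further product factorization is available. A secondary problem is that even if your factorization held, the surviving terms $P(y_{a_j,m}\mid z,a)$ and $P(m_{a_i}\mid z,a)$ with $a\neq a_j$ (resp.\ $a\neq a_i$) are still cross-world conditional counterfactuals (level 3), not the interventional quantities $P(y\mid z,m,\mathrm{do}(a_j))$ and $P(m\mid z,\mathrm{do}(a_i))$ that the sensitivity model can bound, so the decomposition would not serve the purpose the lemma is stated for.

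The idea you are missing is the consistency/total-probability step the paper uses in its subsequent lemmas: for binary $A$,
\begin{equation*}
P(y_{a_i}, z, m_{a_i}) \;=\; P(y, z, m \mid a_i)\,P(a_i) \;+\; P(y_{a_i}, z, m_{a_i} \mid a_j)\,P(a_j),
\end{equation*}
which is solved for the cross-world conditional to give $\tfrac{1}{P(a_j)}\big[P(y,z,m \mid \mathrm{do}(a_i)) - P(y,z,m \mid a_i)P(a_i)\big]$. Here the first term is a genuinely interventional (level-2) quantity that factorizes as $P(y \mid z,m,\mathrm{do}(a_i))\,P(m \mid z,\mathrm{do}(a_i))\,P(z)$ with no collider issue, and the second term is observational, supplying the identifiable $q$. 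Your overall framing --- an additive decomposition with $h$ an increasing affine map, handled case by case for $\mathrm{DE}$, $\mathrm{IE}$, $\mathrm{SE}$ --- matches the paper's, but without this consistency step the construction does not go through.
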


\begin{proof}
The result follows from the ancestral set factorization theorem and the counterfactual factorization theorem in \citep{Correa.2021}. A detailed proof is in Supplement~\ref{appendix:proofs}.
\end{proof}
The effects $e$ represent the single-intervention counterfactual effects, which can be bounded through causal sensitivity analysis. The term $q$ can be directly estimated from the data.
Now, using Lemma~\ref{lem:unnested_CF}, we can rewrite \eqref{eqn:optimality_bounds} in an unnested way.  
\begin{remark}
\emph{Let $\mathcal{E}^+$ be the set of single-intervention counterfactual effects $e$ with $e(\mathbf{v}, \mathcal{S}) \geq 0$ and $\mathcal{E}^-$ the set of $e$ with $e(\mathbf{v}, \mathcal{S}) < 0$. Then, the bounds for $\mathrm{CF}_{\mathcal{C}} \in \{\mathrm{DE}, \mathrm{IE}, \mathrm{SE}\}$ under GMSM $\mathcal{S}$ in \eqref{eqn:optimality_bounds} can be obtained as}

\begin{subequations}
\label{eqn:CF_bounds_general}
\begin{align}
    \mathrm{CF}^+_{\mathcal{S}} &=  h \Big( \sum_{e \in \mathcal{E^+}} \sup_{\mathcal{C} \in \mathcal{K}(S)} e(\mathbf{v}, \mathcal{C}) - \sum_{e \in \mathcal{E^-}} \inf_{\mathcal{C} \in \mathcal{K}(S)} \mid e(\mathbf{v}, \mathcal{C})\mid + q \Big),\label{eqn:CF_bounds_general_a}\\
    \mathrm{CF}^-_{\mathcal{S}} &= h \Big( \sum_{e \in \mathcal{E^+}} \inf_{\mathcal{C} \in \mathcal{K}(S)} e(\mathbf{v}, \mathcal{C}) - \sum_{e \in \mathcal{E^-}} \sup_{\mathcal{C} \in \mathcal{K}(S)} \mid e(\mathbf{v}, \mathcal{C})\mid + q \Big). \label{eqn:CF_bounds_general_b}
\end{align}
\end{subequations}
\end{remark}

\subsection{Sensitivity analysis to bound path-specific causal fairness (Step 2)}

We now use the expressions from counterfactual unnesting to derive upper ($+$) and lower ($-$) bounds for the three path-specific causal effects (i.e., DE$^\pm$, IE$^\pm$, and SE$^\pm$) under unobserved confounding ($\mathbf{U} = \{U_{\mathrm{DE}}, U_{\mathrm{IE}}, U_{\mathrm{SE}}\} \neq 0$). The following theorem states our main theoretical contribution.  

\begin{theorem}[Bounds on path-specific causal fairness] \label{thm:Bounds_Ctf_effects}
    Let the sensitivity parameters for $M$ and $Y$ in a GMSM be denoted by $\Gamma_M$ and $\Gamma_{Y}$. For binary sensitive attribute $A \in \{0,1\}$, where, for simplicity, we focus on discrete $Z$, the individual upper $(+)$ and lower $(-)$ bounds on the path-specific causal fairness effects for specific $y, a_i, a_j$  are given by
    \begin{equation}
    \small
    \begin{split}
       \mathrm{DE}^{\pm}_{a_i, a_j} (y \mid a_i) &= \frac{1}{P(a_i)}\sum_{\substack{z \in \mathbf{Z}, \\ m \in \mathbf{M}}} P^{\pm}(y \mid m, z, a_j)P^{\pm}(m \mid z, a_i)P(z)\\
        &\hspace{0.5cm}- \frac{P(a_j)}{P(a_i)}\sum_{\substack{z \in \mathbf{Z}, \\m \in \mathbf{M}}}P(y \mid m, z, a_j)P(m \mid z, a_i)P(z) - P(y \mid a_i) ,
    \end{split}
    \end{equation}
    \begin{equation}
    \small
    \begin{split}
        \mathrm{IE}^{\pm}_{a_i, a_j} (y \mid a_j) &= \frac{P(a_i)}{P(a_j)}\sum_{\substack{z \in \mathbf{Z},\\ m \in \mathbf{M}}} {P(y \mid m, z, a_i)}[P(m \mid z, a_i) - P(m \mid z, a_j)]P(z)\\
        &\hspace{0.5cm}+ \sum_{\substack{z \in \mathbf{Z}, \\m \in \mathbf{M}}} \frac{P(z)}{P(a_j)} \bigg( P^{\pm}(y \mid m, z, a_i)P^{\pm}(m \mid z, a_j)\\
        &\hspace{0.5cm}\qquad\qquad\qquad\quad - P^{\mp}(y \mid m, z, a_i)P^{\mp}(m \mid z, a_i) \bigg)  ,
    \end{split}
    \end{equation}       
    \begin{equation}
    \small
        \mathrm{SE}^{\pm}_{a_i, a_j} (y) = \frac{1}{P(a_j)}\sum_{\substack{z \in \mathbf{Z},\\ m \in \mathbf{M}}} P^{\pm}(y \mid m, z, a_i)P^{\pm}(m \mid z, a_i)P(z) - \bigg( 1 +\frac{P(a_i)}{P(a_j)} \bigg) P(y \mid a_i),
    \end{equation}
    for $a_i, a_j \in A$. Note that, in the continuous case, the sum over $Z$ has to be replaced by the integral. For a discrete $M$ with $F(m)$ denoting the CDF of $P(m \mid z, a_i)$, we define 
    \begin{align*}
    \small
        P^+(m \mid z, a_i) = \begin{cases} 
        P(m \mid z, a_i) ((1-\Gamma_M^{-1})P(a_i \mid z) + \Gamma_M^{-1}) , & \text{if } F(m) <  \frac{\Gamma_M}{1+\Gamma_M}\\
        P(m \mid z, a_i) ((1-\Gamma_M)P(a_i \mid z) + \Gamma_M), & \text{if } F(m-1) > \frac{\Gamma_M}{1+\Gamma_M} , \\
        \frac{((1-P(a_i \mid z))(1-\Gamma_M)}{1+\Gamma_M} + F(m)\Gamma_M - F(m-1)\Gamma_M^{-1}\\
        \quad + P(a_i \mid z)( F(m)(1-\Gamma_M) - F(m-1)(1-\Gamma_M^{-1}))
        , & \text{otherwise},
        \end{cases}
    \end{align*}
    \begin{align*}
    \small
        P^-(m \mid z, a_i) = \begin{cases}
        P(m \mid z, a_i)((1-\Gamma_M)P(a_i \mid z) + \Gamma_M), & \text{if } F(m) <  \frac{1}{1+\Gamma_M}\\
        P(m \mid z, a_i)((1-\Gamma_M^{-1})P(a_i \mid z) + \Gamma_M^{-1}), & \text{if } F(m-1) > \frac{1}{1+\Gamma_M}\\
        \frac{(1-P(a_i \mid z))(1-\Gamma_M)}{\Gamma_M}+ \frac{P(a_i \mid z)}{(1+\Gamma_M)} + F(m)\Gamma_M^{-1} - F(m-1)\Gamma_M \\
        \quad + P(a_i \mid z)(F(m)\Gamma_M^{-1} - F(m-1)(1-\Gamma_M)).
        \end{cases}
    \end{align*}
    If $Y$ is discrete, the probability functions $P^+(y \mid m, z, a_i)$ and $P^-(y \mid m, z, a_i)$ are defined analogously. The probability functions for continuous $Y$ are presented in Supplement~\ref{appendix:extended_theory}.
\end{theorem}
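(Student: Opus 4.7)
The plan is to combine Lemma~\ref{lem:unnested_CF} and the pointwise sign-split scheme of \eqref{eqn:CF_bounds_general_a}--\eqref{eqn:CF_bounds_general_b} with the sharp GMSM pointwise bounds of Frauen et al.~(2023b), instantiated for each of DE, IE, and SE in turn. First, I would use the ancestral-set and counterfactual factorization theorems (already invoked in the proof of Lemma~\ref{lem:unnested_CF}) to rewrite the nested counterfactuals in \eqref{eqn:ctf_effects}. Each effect decomposes into (i) an identifiable back-door part $q$ involving only observational conditionals $P(y\mid m,z,a)$, $P(m\mid z,a)$, $P(z)$ and marginals $P(a)$, and (ii) a small number of single-intervention counterfactual factors $P(m\mid z,\mathrm{do}(a))$ and $P(y\mid m,z,\mathrm{do}(a))$ that are not identifiable under $U_{\mathrm{IE}}$ and $U_{\mathrm{DE}}$. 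DE and SE collapse to one interventional product plus an identifiable correction, while IE produces a difference of two interventional products plus the identifiable cross term $\frac{P(a_i)}{P(a_j)}\sum_{z,m} P(y\mid m,z,a_i)\,[P(m\mid z,a_i) - P(m\mid z,a_j)]\,P(z)$.

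Next, I would bound each surviving interventional factor using the sharp GMSM bounds. Writing $P(w\mid z,\mathrm{do}(a)) = \sum_{u} P(w\mid z,a,u)\,P(u\mid z,\mathrm{do}(a))$ and imposing the density-ratio inequality of Definition~\ref{def:sensitivity_model} reduces the computation of $\sup_{\mathcal{C} \in \mathcal{K}(S)} P(w \mid z,\mathrm{do}(a))$ to a constrained linear program on a simplex with a bounded density ratio. Solving this for discrete $M$ yields the three-case closed form for $P^{\pm}(m\mid z,a_i)$, with the case split governed by where the CDF $F$ crosses the pivots $\Gamma_M/(1+\Gamma_M)$ and $1/(1+\Gamma_M)$; the same argument with $W=Y$ and sensitivity parameter $\Gamma_Y$ gives $P^{\pm}(y\mid m,z,a_i)$ in the discrete-$Y$ case (the continuous $Y$ analog is deferred to Supplement~\ref{appendix:extended_theory}). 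Substituting these into the Step~1 decomposition and partitioning the interventional products into $\mathcal{E}^+$ and $\mathcal{E}^-$ by the sign of their prefactors yields the stated formulas: DE and SE pair $P^+\cdot P^+$ in the upper bound and $P^-\cdot P^-$ in the lower bound, while IE produces the coupled $(P^{\pm}, P^{\pm})$ vs.\ $(P^{\mp}, P^{\mp})$ pattern because its two interventional products enter with opposite signs.

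The main obstacle, and the delicate part of the argument, is the IE case: the same interventional conditional $P(y\mid m,z,\mathrm{do}(a_i))$ appears in both of IE's interventional products but with opposite signs, so no single SCM in $\mathcal{K}(S)$ can simultaneously saturate $P^+$ in one product and $P^-$ in the other. I would therefore be careful to emphasize that the pointwise sign-split of \eqref{eqn:CF_bounds_general_a}--\eqref{eqn:CF_bounds_general_b} delivers a sound upper/lower bound — each term is replaced by a value above or below its true counterpart in the direction consistent with the enclosing sign — but does not claim joint tightness for IE, whereas DE and SE (whose uncertain factors each appear exactly once) inherit sharpness directly from the GMSM pointwise bounds. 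The remaining routine case analysis establishing the explicit piecewise form of $P^{\pm}(m\mid z,a_i)$ is standard simplex optimization and I would defer it to the supplement.
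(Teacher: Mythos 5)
Your proposal follows essentially the same route as the paper: unnest the counterfactuals via the ancestral-set and counterfactual factorization theorems of Correa et al., reduce DE, IE, and SE to identifiable observational terms plus single-intervention products, bound the latter with the sharp pointwise GMSM bounds of Frauen et al., and recombine by sign (the paper packages this as Lemmas~\ref{lem:bounds_single} and~\ref{lem:bounds_double} and then takes differences of the form $P^{+}(\cdot) - P^{-}(\cdot)$, exactly as you do). Your caveat on IE is well taken and goes beyond the paper: since $P(y \mid m, z, \mathrm{do}(a_i))$ enters both interventional products of IE with opposite signs, no single compatible SCM can saturate $P^{+}$ and $P^{-}$ simultaneously, so the IE bound is valid but its sharpness does not follow from the pointwise argument alone --- a subtlety the paper's sharpness remark does not address.
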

\begin{proof}
    We prove the theorem in Supplement~\ref{appendix:proofs}. The proof proceeds as follows: We unnest each counterfactual term in DE, IE, and SE to receive expressions of the form given in \eqref{eqn:general_CF}. Subsequently, we derive bounds for each unidentifiable part in the unnested counterfactuals through sensitivity analysis. Finally, we combine the bounds to prove the above statement for DE, IE, SE.
\end{proof}

The bounds provide a worst-case estimate for $\mathrm{CF}\in \{\mathrm{DE}, \mathrm{IE}, \mathrm{SE}\}$ for specific $y, a_i, a_j$ under unobserved confounding. 
Training predictors based on the bounds reduces the risk of harmful and unfair predictions due to violating the no unobserved confounding assumption. The worst-case fairness estimates are guaranteed to contain the true path-specific causal effects for sufficiently large sensitivity parameters. The wider the interval defined through CF$^{+}$ and CF$^{-}$, the more sensitive the path-specific causal effect to unobserved confounding, i.e., the less confidence we can have in our results if not accounting for unobserved confounding. Thus, one can test the robustness of prediction models as to what extent they are sensitive to unobserved confounding in high-risk applications. 

We emphasize that Theorem~\ref{thm:Bounds_Ctf_effects} is independent of the distribution and dimensionality of unobserved confounders and specific SCM formulations. Hence, the bounds are valid for discrete, categorical, and continuous outcome variables, making them widely applicable to real-world problems. 

\begin{remark}
    \emph{Our bounds are {sharp}; that is, the bounds from Theorem~\ref{thm:Bounds_Ctf_effects} are optimal, and, therefore, the equality sign in \eqref{eqn:optimality_bounds} holds.} 
\end{remark}
The remark follows from \citep{Frauen.2023b}. Our bounds can be interpreted as the supremum or infimum of the path-specific causal effects in any SCM compatible with the sensitivity model.

\vspace{-0.2cm}
\section{Building a fair prediction model under unobserved confounding}
\label{sec:fair_model}
\vspace{-0.2cm}
Our goal is to train a prediction model $f_\theta$ that is fair under unobserved confounding, where fairness is denoted by the path-specific causal fairness effects. Recall that path-specific causal fairness is unidentifiable under unobserved confounding. Therefore, we use our bounds CF$^{+}$ and CF$^{-}$ for $\mathrm{CF} \in \{\mathrm{DE}, \mathrm{IE}, \mathrm{SE}\}$ from Theorem~\ref{thm:Bounds_Ctf_effects} to limit the worst-case fairness of the prediction, where worst-case refers to the maximum unfairness, i.e., maximum deviation from zero, of each of the path-specific causal effects under a given sensitivity model. Intuitively, we want our prediction model to provide both accurate predictions and fulfill user-defined fairness constraints for each path-specific causal effect, allowing the practitioner to incorporate domain knowledge and business constraints.  

\textbf{Bound computation:} Theorem~\ref{thm:Bounds_Ctf_effects} requires bounds $P^\pm(y \mid z,m,a)$ and $P^\pm(m \mid z,a)$ to calculate the overall bounds CF$^{+}$, CF$^{-}$ for $\mathrm{CF}\in \{\mathrm{DE}, \mathrm{IE}, \mathrm{SE}\}$ for specific $y, a_i, a_j$. For training a robustly fair prediction model, we are now interested in deriving effects $\mathrm{CF}^+_{\mathbb{E}}(a_i, a_j)$ of the form $\mathrm{DE}^{\pm}_{a_i, a_j} (\mathbb{E}[Y] \mid a_i)$ and similarly for $\mathrm{IE}$ and $\mathrm{SE}$.\footnote{For regression and binary classification, the expectation is a reasonable choice of distribution functional. For multiclass classification, replacing the expectation with other distributional quantiles or constraining the effects for each class separately is also common in practice. More details are in Supplement~\ref{appendix:training_algorithm_extended}.} As a result, we replace $P^\pm(y \mid z,m,a)$ by the non-random predicted outcome $f_{\theta}(a,z,m)$. This way, we can estimate the unidentifiable effect $P^\pm(y \mid z,m,a)$ and only need to obtain $P^\pm(m \mid z,a)$. This requires the conditional density estimation for $P(a \mid z)$, $P(m \mid z,a)$ through pre-trained estimators $g_A: \mathbf{Z} \mapsto \mathbf{A}$ and $g_M: \mathbf{A}, \mathbf{Z} \mapsto \mathbf{M}$.

\textbf{Objective:} Our objective for training the prediction model $f_\theta$ consists of two aims:~(1) a low prediction loss $l(f_\theta)$ and (2)~worst-case fairness estimates bounded by a user-defined threshold $\pmb{\gamma} = [\gamma_{\mathrm{DE}}, \gamma_{\mathrm{IE}}, \gamma_{\mathrm{SE}}]^T$. We formulate our objective as a constraint optimization problem: 
\begin{align}\label{eq:objective_func_general}
    \!\min_{\theta} \  l(f_\theta) \quad
    \textnormal{s.t.} 
    \max\{|\mathrm{CF}_{\mathbb{E}}^+(a_i, a_j)|, |\mathrm{CF}_{\mathbb{E}}^-(a_i, a_j)|\} \leq \gamma_{\mathrm{CF}}, \quad \forall \mathrm{\ CF} \in \{\mathrm{DE}, \mathrm{IE}, \mathrm{SE}\},
\end{align}

\begin{wrapfigure}[26]{L}{0.6\textwidth}
\LinesNumbered
\begin{algorithm}[H]
\caption{\footnotesize Training fair prediction models robust to unobserved confounding}
\label{alg:fair_prediction}
\tiny
\KwIn{Data $\{(A_i, Z_i, M_i, Y_i): i \in \{1, \ldots ,n\}\}$, sensitivity parameter $\Gamma_M$, vector of fairness constraints $\pmb{\gamma}$, Lagrangian parameter vectors $\pmb{\lambda}_0$ and $\pmb{\mu}_0$, pre-trained density estimators $g_A, g_M$, initial prediction model $f_{\theta_0}$, convergence criterion $\varepsilon$, Lagrangian update-rate $\alpha$}
\KwOut{Fair predictions $ \{\hat{y}_i: i \in \{1, \ldots ,n\}\}$, trained prediction model $f_{\theta}^*$}
$f_{\theta_j} \gets f_{\theta_0}$ ; $\pmb{\lambda}_k \gets \pmb{\lambda}_0$ ; $\pmb{\mu}_k \gets \pmb{\mu}_0$
\For{$k \in$ \textnormal{max iterations}}{
  \tcc{Train prediction model}
  \For{$l \in$ \textnormal{nested epochs}}{
  $\hat{y} \gets f_{\theta_l}(A, Z, M)$\;
  \tcc{Determine $\mathrm{CF}_{\mathbb{E}}^+, \mathrm{CF}_{\mathbb{E}}^-$ for $\mathrm{CF} \in \{\mathrm{DE}, \mathrm{IE}, \mathrm{SE}\}$ following Theorem~\ref{thm:Bounds_Ctf_effects}}
  $\mathrm{CF}_{\mathbb{E}}^+ \gets \mathrm{ub}(f_{\theta_l}, (A, Z, M), \Gamma_M, g_A, g_M)$\;
  $\mathrm{CF_{\mathbb{E}}}^- \gets \mathrm{lb}(f_{\theta_l}, (A, Z, M), \Gamma_M, g_A, g_M)$\;
  \tcc{Optimization objective following Eq.~(\ref{eq:objective_func_general}) }
  \For{$\mathrm{CF} \in \{\mathrm{DE}, \mathrm{IE}, \mathrm{SE}\}$}{
  $c_{\mathrm{CF}} \gets \max\{|(\mathrm{CF}_{\mathbb{E}}^+)|, |(\mathrm{CF}_{\mathbb{E}}^-)|\}$} 
  $\mathbf{c} \gets [c_{\mathrm{DE}}, c_{\mathrm{IE}}, c_{\mathrm{SE}}]^T$;
  $\mathrm{lagrangian} \gets \mathrm{loss}(f_{\theta_l}) - \pmb{\lambda}_k(\pmb{\gamma} - \mathbf{c}) - \frac{1}{2\pmb{\mu}_k}(\pmb{\lambda}_k - \pmb{\lambda}_{k-1})^2$\;
  \tcc{Update parameters of predictor}
  $f_{\theta_{l+1}} \gets \mathrm{optimizer}(\mathrm{lagrangian}, f_{\theta_l})$
  }

  {\If{$\mathbf{c} \leq \pmb{\gamma}$}{
      \tcc{Check for convergence}
      {\If{\textnormal{prediction loss}$(f_{\theta_l}) \leq \varepsilon$}{ 
        $f_{\theta}^* \gets f_{\theta_{l+1}}$\;
        \textbf{break} 
      }
    }
    }
    \tcc{Update Lagrangian parameters}
    $\pmb{\lambda}_{k+1} \gets \max \{\pmb{\lambda}_{k} - \mathbf{c} \pmb{\mu}_k$, $\mathbf{0}$ \} ; $\pmb{\mu}_{k+1} \gets \alpha \pmb{\mu}_k$ 
  }
}
\end{algorithm}
\end{wrapfigure}

To solve \eqref{eq:objective_func_general} by gradient-based solvers, we first need to rewrite the constrained optimization problem to a single loss function. For this, we make use of augmented Lagrangian optimization \citep{Nocedal.2006}, which allows us to add an estimate $\pmb{\lambda}$ of the exact Lagrange multiplier to the objective.\footnote{For a detailed description; see Chapter 17 in \citet{Nocedal.2006}.} We provide pseudocode for our training algorithm 
in Algorithm~\ref{alg:fair_prediction}. In each iteration, we calculate the upper $\mathrm{ub}(\cdot)$ and lower bound $\mathrm{lb}(\cdot)$ on each $\mathrm{CF}_{\mathbb{E}}(a_i, a_j)$, $\mathrm{CF} \in \{\mathrm{DE}, \mathrm{IE}, \mathrm{SE}\}$ according to Theorem~\ref{thm:Bounds_Ctf_effects}. 
In Supplement~\ref{appendix:extended_theory}, we provide additional explanations and an extended algorithm for multi-class classification.

\textbf{Implementation details:} We implement the underlying prediction model as a three-layer feed-forward neural network with leaky ReLU activation function and dropout. Our framework additionally requires pre-trained density estimators $g_A, g_M$, for which we also use a feed-forward neural network. Details about the model architectures and hyperparameter tuning are in Supplement~\ref{appendix:implementation_details}, in which we also report the performance of our prediction models.

\vspace{-0.3cm}
\section{Experiments}
\label{sec:experiments}
\vspace{-0.3cm}
\textbf{Baselines:} We emphasize that baselines for causal fairness under unobserved confounding are absent. We thus compare the following models: (1)~\textbf{Standard} refers to $f_\theta$ trained only on loss $l$ without causal fairness constraints. Hence, the standard model should lead to unfairness if trained on an unfair dataset, even without unobserved confounding. (2)~\textbf{Fair na{\"i}ve} is a prediction model where the loss additionally aims to minimize unfairness in terms of the unconfounded path-specific effects. 
Hence, causal fairness is considered only under the assumption of no unobserved confounding. (3)~\textbf{Fair robust}~(\emph{ours}) is our framework in Algorithm~\ref{alg:fair_prediction} in which we train the prediction model wrt. causal fairness while additionally accounting for unobserved confounding. Importantly, the architectures of the neural network $f_\theta$ for the different models are identical, so fairness improves only due to our tailored learning objective. 

\textbf{Performance metrics:} We report the path-specific causal fairness by computing bounds on the three effects following Theorem~\ref{thm:Bounds_Ctf_effects}. Ideally, the different values remain zero, even in the presence of unobserved confounding. We report the averaged results and the standard deviation over five seeds. We later report results for different sensitivity parameters $\Gamma_M$ in the Supplements.

\vspace{-0.4cm}
\subsection{Datasets}
\label{sec:datasets}
\vspace{-0.2cm}

\begin{wrapfigure}[12]{r}{0.65\textwidth}
\vspace{-1.2cm}
\begin{center}
\begin{tabular}{p{0.3cm}l}
    \tiny $U_{\mathrm{DE}}$
    & \raisebox{-.5\height}{\includegraphics[width=0.9\linewidth]{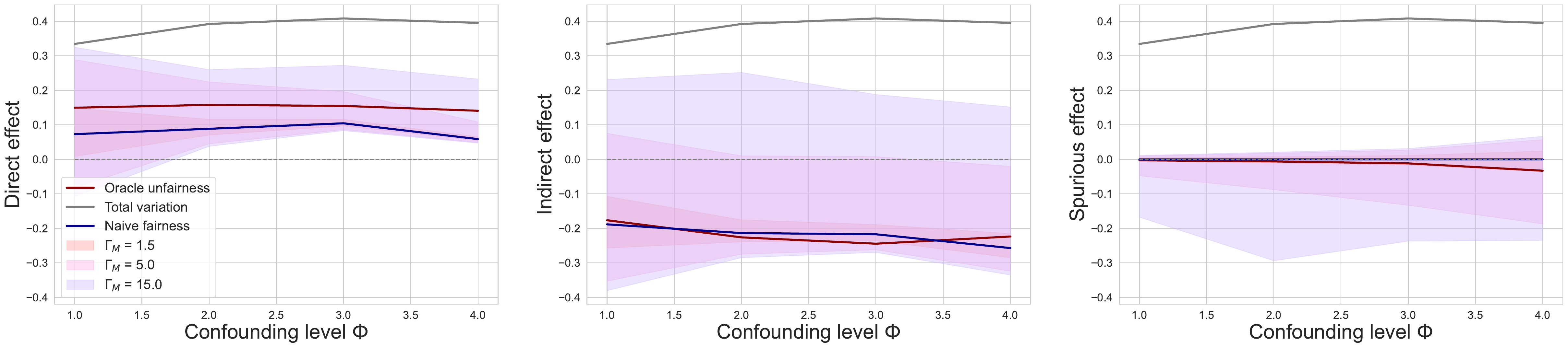}}\\
    \tiny $U_{\mathrm{IE}}$ 
    & \raisebox{-.5\height}{\includegraphics[width=0.9\linewidth]{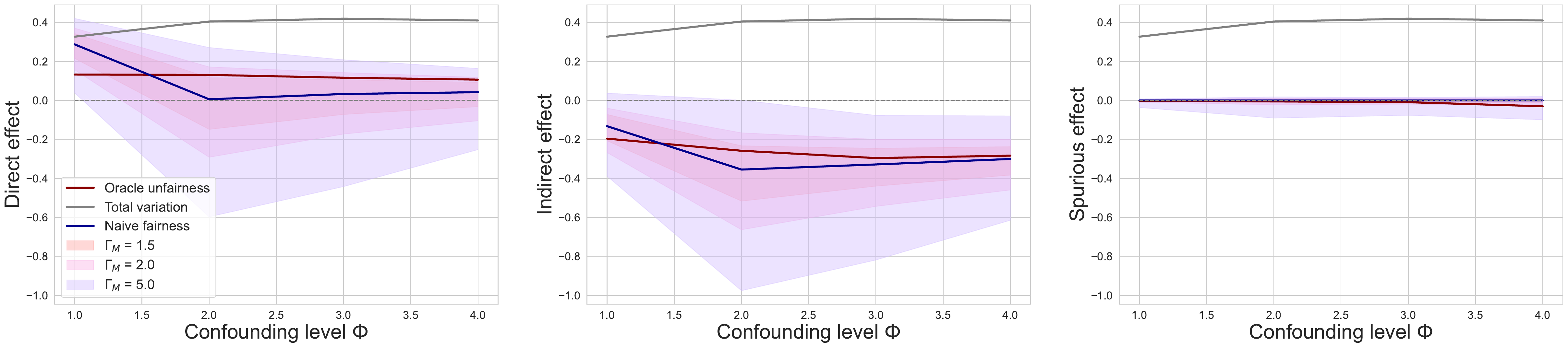}}
\end{tabular}
\captionof{figure}{Validity of our bounds. Our bounds successfully contain the oracle effects for different confounding levels $\Phi$. Results for different sensitivity parameters $\Gamma_M$ and $\Gamma_Y = 1$.}
\label{fig:data_fairness}
\end{center}
\end{wrapfigure}
\textbf{Synthetic datasets:} We follow common practice \citep[e.g.,][]{Frauen.2023b, Kusner.2017} building upon synthetic datasets for evaluation. This has two key benefits: (1)~ground-truth outcomes are available for bench-marking, and (2)~we can control the level of unobserved confounding to understand the robustness of our framework. 

We consider two settings with different types of unobserved confounding. Both contain a single binary sensitive attribute, mediator, confounder, and outcome variable. In setting~``$U_{\mathrm{DE}}$'', we introduce unobserved confounding on the direct effect with confounding level $\Phi$ and in setting~``$U_{\mathrm{IE}}$'', unobserved confounding on the indirect effect with level $\Phi$. We generate multiple datasets per setting with different confounding levels, which we split into train/val/test sets (60/20/20\%).
Details are in Supplement~\ref{appendix:data_generation}.

\textbf{Real-world dataset:} Our real-world study is based on the US Survey of Prison Inmates \citep{Prison.2016}. We aim to predict the prison sentence length for drug offenders. For this, we build upon the causal graph from Fig.~\ref{fig:prison_causal_graph}. We consider race as the sensitive attribute and the prisoner's history as a mediator. Details are in Supplement~\ref{appendix:real_world_data_prep}.

\begin{wrapfigure}[10]{R}{0.65\textwidth}
\vspace{-1.8cm}
\begin{center}
\begin{tabular}{p{0.3cm}l}
    \tiny $U_{\mathrm{DE}}$
    & \raisebox{-.5\height}{\includegraphics[width=0.9\linewidth]{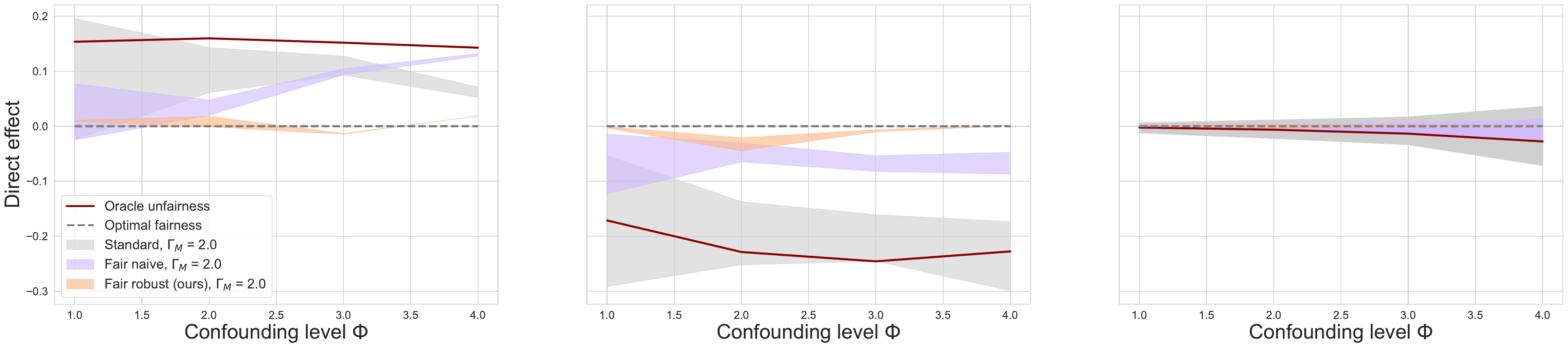}}\\
    \tiny $U_{\mathrm{IE}}$ 
    & \raisebox{-.5\height}{\includegraphics[width=0.9\linewidth]{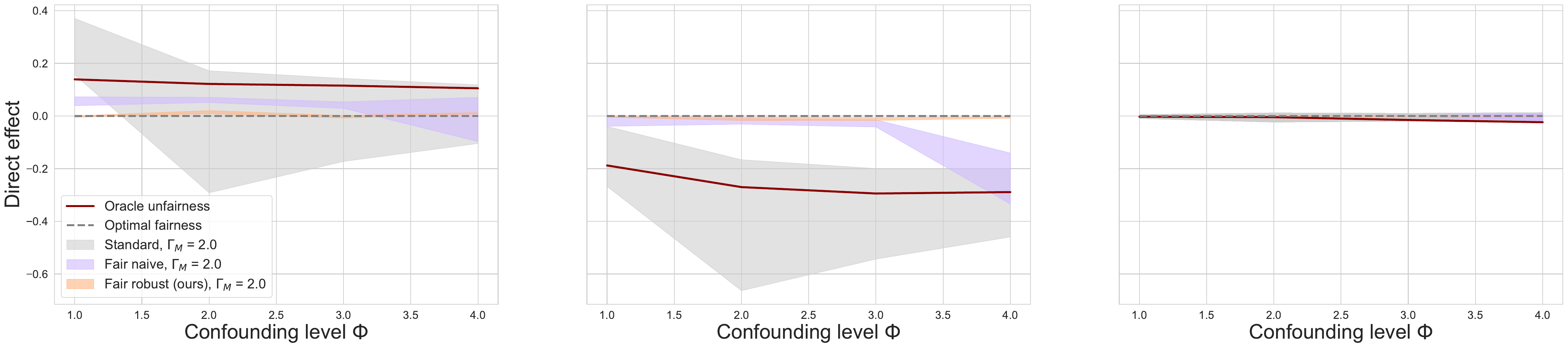}}
\end{tabular}
\captionof{figure}{Effectiveness of achieving causal fairness as measured by $\mathrm{DE}, \mathrm{IE}, \mathrm{SE}$ (from left to right), which should be close to zero.}
\label{fig:classifier_bounds}
\end{center}
\end{wrapfigure}
\subsection{Results}
\label{sec:results}
\vspace{-0.02cm}
\textbf{Results for the synthetic dataset:} We show that our framework generates valid bounds (Fig.~\ref{fig:data_fairness}). Specifically, we demonstrate that our bounds successfully capture the true path-specific causal fairness. We find: (1)~Our bounds (\textcolor{BrickRed}{red}) include the oracle effect as long as the sensitivity parameter is sufficiently large. This demonstrates the validity of our bounds. (2)~The na{\"i}ve effects, which assume no unobserved confounding (\textcolor{NavyBlue}{blue}), differ from the oracle effects. This shows that failing to account for unobserved confounding will give wrong estimates of $\mathrm{CF} \in \{ \mathrm{DE}, \mathrm{IE}, \mathrm{SE} \}$.

We now compare our framework against the baselines (Fig.~\ref{fig:classifier_bounds}) based on the estimated path-specific causal fairness. 
We find: (1)~The standard model (\textcolor{gray}{gray}) fails to achieve causal fairness. The path-specific causal fairness effects clearly differ from zero. 
(2)~The fair na{\"i}ve model (\textcolor{Periwinkle}{purple}) is better than the standard model but also frequently has path-specific causal fairness effects different from zero. (3)~Our fair robust model (\textcolor{Orange}{orange}) performs best. The predictions are consistently better in terms of causal fairness. Further, the results have only little variability across different runs, adding to the robustness of our framework (Table~\ref{tab:bounds_fairness_multiple_runs}). Importantly, our framework is also highly effective in dealing with larger confounding levels.
See Supplement~\ref{appendix:results} for further results.

\textbf{Results for the real-world dataset:} We aim to demonstrate the real-world applicability of our framework. Since benchmarking is impossible for real-world data, we seek to provide insights into how our framework operates in practice. In Fig.~\ref{fig:prison_prediction}, we compare the predicted prison sentence from (i)~the standard model and (ii)~our fair robust model. The standard model tends to assign much longer prison sentences to non-white (than to white) offenders, which is due to historical bias and thus deemed unfair. In contrast, our fair robust model assigns prison sentences of similar length.

\vspace{-0.2cm}
\begin{minipage}{\textwidth}
\begin{minipage}[b]{.58\linewidth}
\centering
\tiny
\begin{tabular}{ll rrr}
\toprule
& & \multicolumn{1}{c}{DE} & \multicolumn{1}{c}{IE}  & \multicolumn{1}{c}{SE} 
\\ \midrule 
\multirow{2}{*}{Standard} & upper & 0.14 $\pm$ 0.00 & $-$0.14 $\pm$ 0.00 & 0.01 $\pm$ 0.00 \\
& lower  & 0.06 $\pm$ 0.00 & $-$0.25 $\pm$ 0.00 & $-$0.02 $\pm$ 0.00 \\
\midrule
\multirow{2}{*}{Fair na{\"i}ve } & upper & 0.06 $\pm$ 0.05 & $-$0.02 $\pm$ 0.01 & 0.00 $\pm$ 0.00\\
& lower & 0.05 $\pm$ 0.05 & $-$0.04 $\pm$ 0.02 & $-$0.00 $\pm$ 0.00\\
\midrule
\multirow{2}{*}{Fair robust (ours) } & upper & 0.01 $\pm$ 0.02 & $-$0.01 $\pm$ 0.01& 0.00 $\pm$ 0.00\\
 & lower & 0.00 $\pm$ 0.02& $-$0.02 $\pm$ 0.02 & 0.00 $\pm$ 0.00\\
\bottomrule
\end{tabular}
\captionof{table}{Estimated bounds on the synthetic dataset ``$U_{\mathrm{DE}}$'' for confounding level $\Phi = 2$; mean and std. over 5 runs.}
\label{tab:bounds_fairness_multiple_runs}
\end{minipage}
\hfill
\begin{minipage}[b]{.38\linewidth}
\centering
\includegraphics[width=1\textwidth]{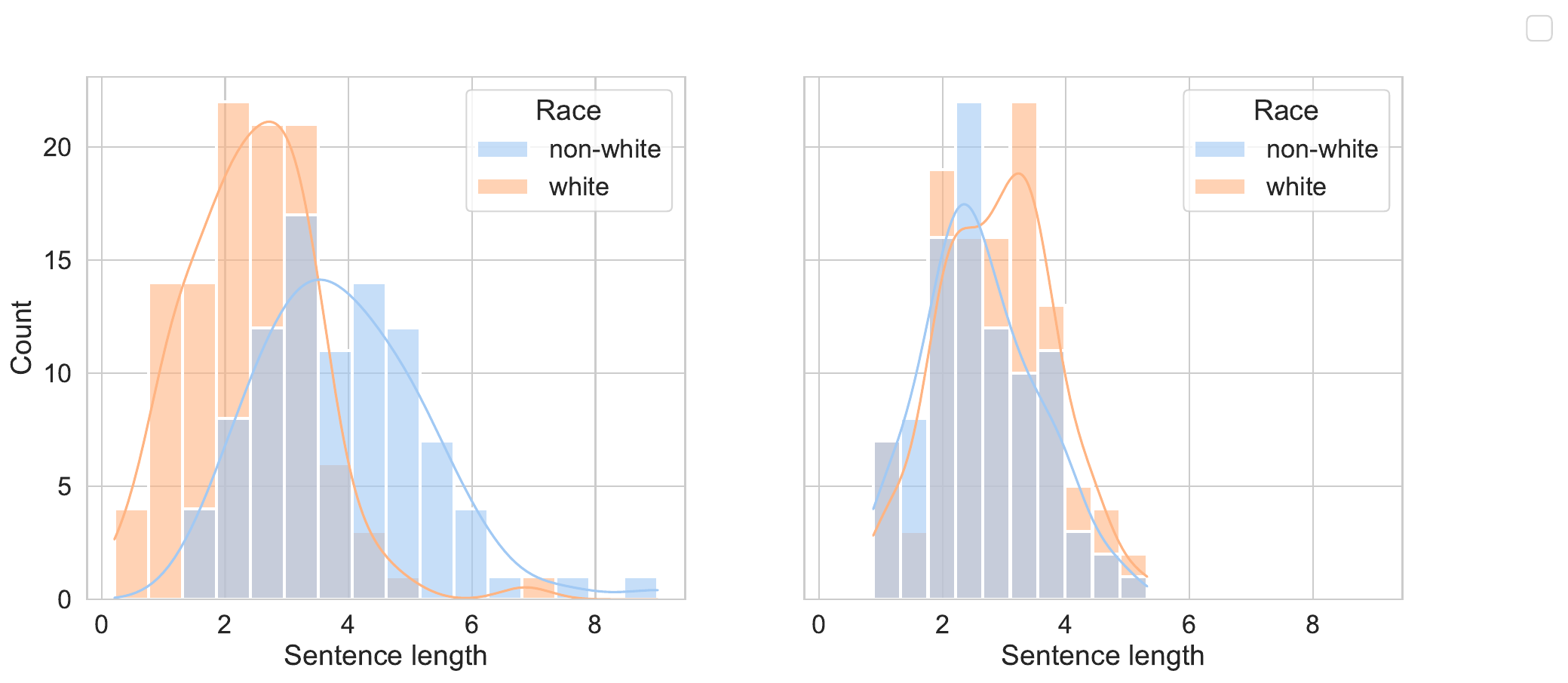}
\captionof{figure}{Predicted prison sentence length (months) per race group from the standard model (left) and the robustly fair model (right).}
\label{fig:prison_prediction}
\end{minipage}
\end{minipage}

\vspace{-0.2cm}
\section{Discussion}
\label{sec:Discussion}
\vspace{-0.2cm}
\textbf{Applicability:} We provide a general framework. First, our framework is directly relevant for many settings in practice (see Supplement~\ref{appendix:examples}). Second, it can be used with both discrete and continuous features (see Supplement~\ref{appendix:cont_setting}). Third, it is not only applicable to the specific notions above, but it can also be employed for other notions of causal fairness (see Supplement~\ref{appendix:further_bounds}).

\textbf{Conclusion:} 
Failing to account for unobserved confounding may lead to wrong conclusions about whether a prediction model fulfills causal fairness. First, our framework can be used to perform sensitivity analysis for causal fairness of existing datasets. This is often relevant as information about sensitive attributes (e.g., race) cannot be collected due to privacy laws. Second, our framework can be used to test how robust prediction models under causal fairness constraints are to potential unobserved confounding.  To this end, our work is of direct practical value for ensuring the fairness of predictions in high-stakes applications.

\bibliography{bibliography}
\bibliographystyle{iclr2024_conference}

\appendix
\newpage
\section{Notation}
\label{appendix:notation}
\bgroup
\def\arraystretch{1.5}
\begin{tabular}{p{1.25in}p{3.25in}}
$\displaystyle \mathbf{A}$ & Set of nodes corresponding to sensitive attributes\\
$\displaystyle \mathbf{M}$ & Set of nodes corresponding to mediators\\
$\displaystyle \mathbf{Z}$ & Set of nodes corresponding to observed confounders\\
$\displaystyle \mathbf{U}$ & Set of nodes corresponding to unobserved confounders\\
$\displaystyle U_{\mathrm{DE}}, U_{\mathrm{IE}}, U_{\mathrm{SE}}$ & Unobserved confounders on the direct, indirect and spurious effect, respectively\\
$\displaystyle Y, \hat{Y}$ & Outcome and predicted outcome\\
$\displaystyle \mathcal{X}$ & Set of possible values of random variable $X$\\
$\displaystyle P(\cdot)$ & Probability distribution over a random variable\\
$\displaystyle p(\cdot)$ & Probability density function of a continuous random variable\\
$\displaystyle \mathcal{P}$ & Family of probability distributions\\
$\displaystyle \mathcal{C}$ & Structural causal model\\
$\displaystyle \gG$ & A causal graph\\
$\displaystyle \ancestors_\gG(x_i)$ & Ancestors of $x_i$ in $\gG$\\
$\displaystyle \parents_\gG(x_i)$ & Parents of $x_i$ in $\gG$\\
$\displaystyle \mathcal{S}$ & Generalized marginal sensitivity model\\
$\displaystyle \Gamma_W$ & Sensitivity parameter\\
$\displaystyle \mathcal{K(\mathcal{S})}$ & Class of SCMs $\mathcal{C}$ compatible with sensitivity model $\mathcal{S}$\\
$\displaystyle \mathrm{DE}_{a_0 ,a_i}$ & Counterfactual direct effect of $a_i$ wrt. $a_0$ \\
$\displaystyle \mathrm{DE}^+_{a_0 ,a_i}, \mathrm{DE}^-_{a_0 ,a_i}$ & Upper and lower bound on counterfactual direct effect\\
$\displaystyle \mathrm{IE}_{a_0 ,a_i}$ & Counterfactual indirect effect of $a_i$ wrt. $a_0$ \\
$\displaystyle \mathrm{IE}^+_{a_0 ,a_i}, \mathrm{IE}^-_{a_0 ,a_i}$ & Upper and lower bound on counterfactual indirect effect\\
$\displaystyle \mathrm{SE}_{a_0 ,a_i}$ & Counterfactual spurious effect of $a_i$ wrt. $a_0$ \\
$\displaystyle \mathrm{SE}^+_{a_0 ,a_i}, \mathrm{SE}^-_{a_0 ,a_i}$ & Upper and lower bound on counterfactual spurious effect\\
$\displaystyle \mathrm{CF}, \mathrm{CF}^+, \mathrm{CF}^-$ & Causal fairness notion with upper and lower bound\\
$\displaystyle \mathcal{D}$ & Functional mapping a density to a scalar value\\
$\displaystyle f_{\theta}$ & Prediction model with parameters $\theta$\\
$\displaystyle g$ & Density estimator\\
$\displaystyle \gamma$ & Fairness constraint\\
$\displaystyle \Phi$ & Confounding level\\
$\displaystyle \mathcal{N} (\vmu , \mSigma)$ & Gaussian distribution %
with mean $\vmu$ and covariance $\mSigma$ \\
\end{tabular}
\egroup

\newpage
\section{Discussion of applications}

\subsection{Examples: causal fairness with unobserved confounding}
\label{appendix:examples}

\begin{wrapfigure}{r}{0.5\textwidth}
    \includegraphics[width=1\linewidth]{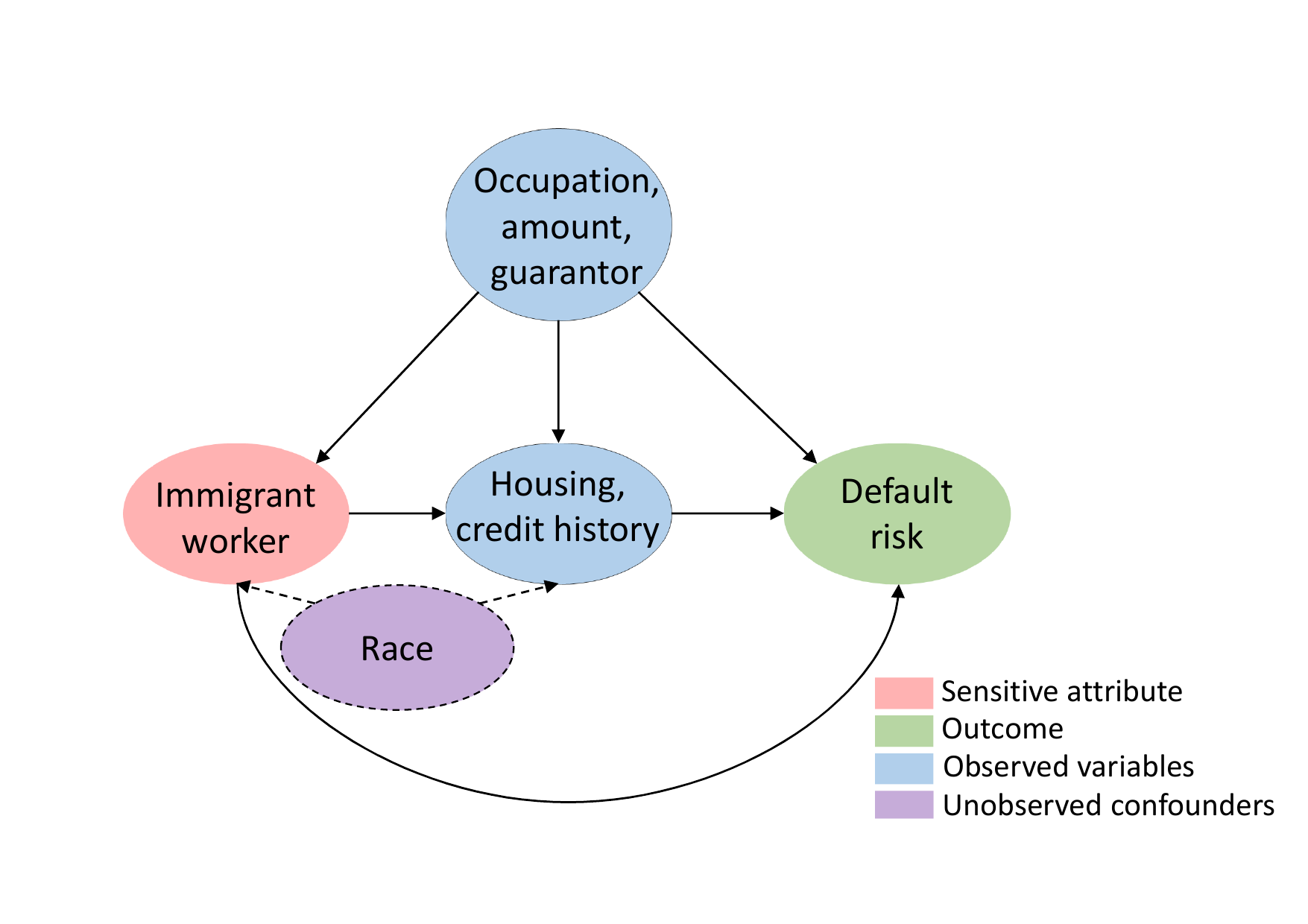}
    \caption{Example: causal graph for predicting credit default risk.}
    \label{fig:credit}
\end{wrapfigure}
\textbf{Prediction of credit default risk:}
When applying for a loan, banks commonly evaluate the applicant's credit default risk based on various factors, such as the applicant's credit history, occupation, and the presence of a guarantor. Following the goal of profit maximization and risk minimization, the bank will only grant the loan to {solvent applicants}, i.e., applicants with a low default risk.

Various fair lending laws across the globe require that the risk assessment must be fair and, to this end, should not be affected by sensitive information. An example could be whether or not the applicant is an immigrant worker. The example is shown in Figure~\ref{fig:credit}. Therein, the \outcome{default risk} is estimated based on information about the \observed{occupation and wage} of the applicant, the \observed{size of the loan}, the presence of a \observed{guarantor}, the \observed{housing} situation of the applicant (since hypothecary credits can be based on the property), and the \observed{credit history}. A fair and strategy-optimizing prediction model should (i)~control for any direct, indirect, and spurious effect stemming from the fact of an applicant being an \attribute{immigrant worker} and (ii)~ incorporate the potential effect of the unobserved variable \unobserved{race} on the immigration as well as credit history. Of note, laws in various countries forbid the collection and storage of information about race so that such information is often missing, and, therefore, it presents an unobserved confounder. 

Training a machine learning method for predicting a new applicant's default risk without accounting for fairness constraints and potential unobserved confounding will likely result in biased and unfair predictions and, thus, biased loan allocation. In the stated example, immigrant workers might not receive a loan, although they would constitute a good risk in an unbiased assessment, whereas nationals with a high default risk might falsely be granted a loan. This not only reinforces the societal bias present in the data but also fails to optimize the bank's strategy in terms of profit maximization and risk minimization.

\begin{wrapfigure}{r}{0.5\textwidth}
    \includegraphics[width=1\linewidth]{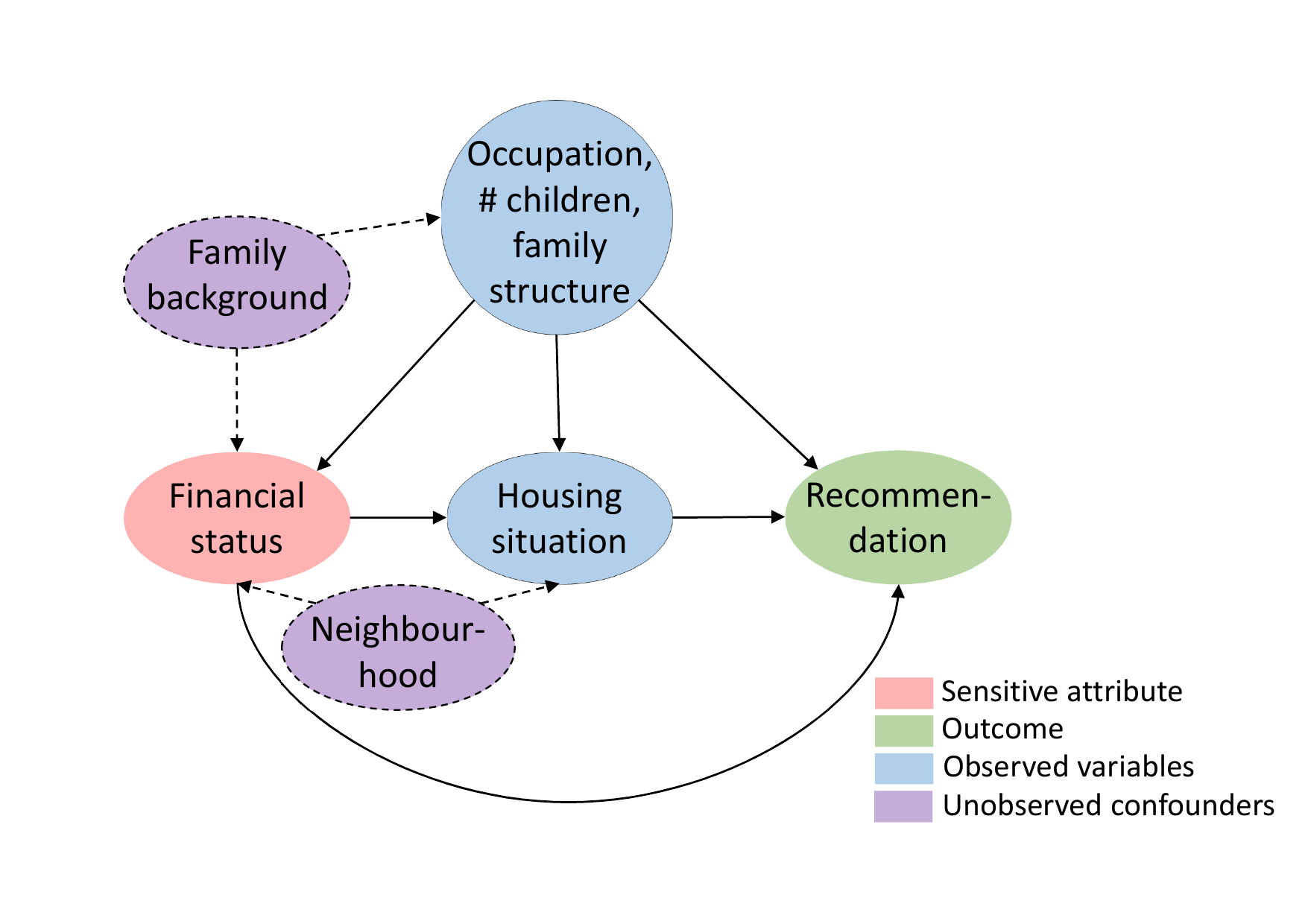}
    \caption{Example: causal graph for predicting acceptance to the nursery.}
    \label{fig:nursery}
\end{wrapfigure}

\textbf{Prediction of acceptance of child to nursery:} Several works aim to leverage machine learning models in public organizations. One example is the following. Due to a high birth rate and a shortage of available places at childcare facilities, not all children can often be admitted to a nursery. The decision of which child to admit might depend on the parents \observed{occupation}, the \observed{number of children} in the family, and the \observed{housing situation}. The causal diagram is shown in Figure~\ref{fig:nursery}, where we detail the mechanisms behind a \outcome{recommendation} for admission to the nursery. In the present example, we argue that the \attribute{financial status} of the family should only have a limited effect on the recommendation of admission, i.e., the prediction unfairness introduced by the financial status should be lower than a specific threshold. Furthermore, multiple unobserved factors typically influence the family's financial status and the observed variables. For example, the financial status and occupation could both be influenced by the \unobserved{family background}, potentially mediated through the parent's education. The \unobserved{residential neighborhood} is related to the family's financial status and the housing situation. Then, a good financial standing might indicate the family's ability to pay for personalized private childcare (e.g., a nanny) and, therefore, could have a certain effect on the recommendation for admission to a nursery. However, a machine learning model trained solely on the observed data will inevitably provide biased and unfair recommendations.

The above examples show two important applications of causal fairness in practice where unobserved confounding is prevalent and could be addressed by our framework.

\subsection{Discussion about knowledge of the strength of unobserved confounding}
\label{appendix:UC_knowledge}

In practice, knowledge of the unobserved confounding (UC) strength is beneficial but not fully necessary. Importantly, our framework is of direct help in practice when (i)~the maximum UC strength is known or even when (ii)~ unknown. We discuss both use cases below:
\begin{enumerate}
\item There are often good reasons why the UC strength is known or can be upper-bounded. Many fairness-relevant applications are rooted in social science, where there is a good understanding of potential causes for biases (e.g., around gender, age, etc.). Hence, it is often reasonable to assume that the UC strength is in a specific relationship with some other observed variable. For example, based on domain knowledge, one can often say that the UC strength is not as strong as (a multiple of) the observed variable \citep[e.g.,][]{Cinelli.2020, Franks.2019, Masten.2020, Zhang.2016}. In such cases, the UC strength $\Gamma_W$ can be set by practitioners accordingly, and our framework can be applied as discussed in the main paper
\item Even when the UC strength is unknown, our framework is of significant value in practice. In this case, practitioners can employ our bounds to test the sensitivity of causal fairness notions with respect to unobserved confounding in the data. One way to do so is to use our framework to calculate bounds for increasing sensitivity parameters until the interval defined by the bounds contains a certain value of interest, e.g., zero (indicating complete fairness). The minimal sensitivity parameter to achieve this goal corresponds to the level of unobserved confounding, which would be necessary to consider the data fair. Hence, one can also view the sensitivity parameter as our uncertainty about the fairness of our prediction model. Smaller sensitivity parameters that do not achieve the goal still provide information about the sign, i.e., direction, of the unfair effect \cite{Hsu.2013}. Generally, testing various sensitivity parameters is a common technique in real-world applications of sensitivity analysis \citep[e.g.,][]{Hsu.2013, Jin.2023, VanderWeele.2017}. In sum, even when the UC strength is unknown, our framework can thus be of large practical value.
\end{enumerate}

Furthermore, several approaches have been proposed to choose sensitivity parameters in practice. 
We refer to, e.g., \citet{Jin.2023, Kallus.2019, Díaz.2013, Imai.2013} for further discussions.

\newpage
\section{Extended related work}
\label{appendix:related_work}

\subsection{Fairness in machine learning}
\label{appendix:rw_fair_ml}

In the following, we present an extended discussion of related works. First, we give a brief taxonomy of algorithmic fairness. Then, we discuss related causal fairness notions and explain the difference between counterfactual and causal fairness. For the latter, we heavily rely on \citet{Plecko.2022} and refer to their work for an in-depth discussion of causal fairness analysis.

\textbf{Algorithmic fairness:}
There are mainly two classes of fairness notions, namely, statistical notions and notions based on causal reasoning \citep{Fawkes.2022}. Often, statistical fairness notions are incompatible with one another \citep{Fawkes.2022, Kilbertus.2019, Kleinberg.2017} and cannot provide intuitive results \citep{Nabi.2018}. Causality-based notions can help in overcoming such problems. Relatedly, fairness notions can be defined on a group or individual level, where the specific choice depends on practical considerations \citep{Loftus.2018}. Our framework focuses on group-level fairness. Although developed for training fair prediction models, the results can also be used to explain individual discrimination in a prediction model.

At a technical level, fairness in predictions can be achieved at different steps: during pre-processing, in-processing, or post-processing. Since a prediction model trained on data fair labels is not necessarily fair itself \citep{Ashurst.2022}, pre-processing fairness notions do not always mitigate the unfairness present in the resulting prediction. Therefore, in our work, we seek to ensure fairness in the presence of unobserved confounding through an in-processing approach.

\textbf{Causal vs. counterfactual fairness:} Causal fairness measures quantify the association of a sensitive attribute and the target variable through causal mechanisms in a structural causal model (SCM) \citep{Plecko.2022}. Specifically, a causal fairness notion must fulfill the three elementary structural fairness criteria: (1)~the structural direct criterion, assessing if the target is a function of the sensitive attribute; (2)~the structural indirect criterion, determining the presence of an effect of a mediator variable on the target which in turn is influenced by the sensitive attribute; and (3)~the structural spurious criterion, evaluating the presence of a confounder between the sensitive attribute and the target. Causal fairness notions can, therefore, differentiate between direct and indirect discrimination to mathematically account for legal definitions of fairness, such as \emph{disparate impact} and \emph{disparate treatment}, or incorporate business necessities.

As a specific notion, counterfactual fairness \citep{Kusner.2017} enforces the outcome variable to be identical in both the current observation and a hypothetical counterfactual world in which the protected attribute has a different realization. It specifically focuses on the total variation in the outcome caused by a change in the sensitive attribute and does not differentiate between multiple pathways of discrimination. Many extensions and methods for ensuring counterfactual fairness have
been introduced in the literature \citep[e.g.,][]{Chiappa.2019,Kilbertus.2017,Ma.2023}.

Several causal fairness measures are also based on counterfactuals (see \citet{Plecko.2022} for a taxonomy overview), so we warn that the naming in the literature may be misleading. Therefore, we highlight the differences in the following. Importantly, counterfactual fairness and any fairness notion built upon it differs from counterfactual-based causal fairness notions in three fundamental aspects: (1)~\emph{admissibility}, (2)~\emph{ancestral closure}, and (3)~\emph{identifiability}. We discuss the aspects in the following:
\begin{enumerate}
\item Admissibility: Even if the counterfactual fairness metric evaluates to zero and thus implies that a model is fair, there is still no guarantee for both the direct and indirect association of the sensitive attribute and the target being zero, as the effects might cancel out. To this end, counterfactual fairness is inadmissible w.r.t. the structural criteria. 
\item Ancestral closure: Ancestral closure requires all ancestors of the sensitive attribute to be observed. This implies that there cannot be any unobserved confounders on an association containing the sensitive attribute, which is unlikely to hold in practice. Causal fairness notions, on the other hand, allow for endogenous ancestors.
\item Identifiability: Throughout our paper, we have dealt with unidentifiability of causal fairness due to unobserved confounding, arguing that the assumption of no unobserved confounding does not hold in practice. If the full SCM was known, the fairness notion would be identifiable. Counterfactual fairness, however, is never identifiable from observational data if mediators between the sensitive attribute and the target exist, even if the full SCM is specified.
\end{enumerate}
In sum, both causal vs. counterfactual fairness are entirely different notions. In our work, we focus on causal fairness.

\subsection{Causal sensitivity analysis}
\label{appendix:related_work_sensitivity}

Causal sensitivity analysis has widely been employed for partial identification problems. By imposing assumptions on the strength of unobserved confounding and, therefore, relaxing the assumption of no unobserved confounding, various causal effects can be bounded in the presence of unobserved confounding. The resulting bounds can often prove that the causal quantities of interest cannot be explained away by unobserved confounding. Especially for real-world studies and policy learning, sensitivity models thus provide important tools for assessing the sensitivity of a causal estimate to unobserved confounding through deriving informative regions for the causal effect \citep{Kallus.2019}.

The main sensitivity models introduced in the literature are: the sensitivity model of Rosenbaum \citep{Rosenbaum.1987} employing randomization tests; the non-parametric marginal sensitivity model \citep{Tan.2006}; and the recent \textit{f}-sensitivity models \citep{Jin.2023}. For binary treatments $A \in \{0,1\}$, the marginal sensitivity model (MSM) \cite{Tan.2006} is defined as 
\begin{align}
    \frac{1}{\Gamma} \leq  \frac{\pi(z)}{1 - \pi(z)} \frac{1 - \pi(z, u)}{\pi(z, u)} \leq \Gamma,
\end{align}
where $\pi(z) = P(A = 1 \mid z)$ denotes the observed propensity score and $\pi(z, u) = P(A = 1 \mid z, u)$ denotes the full propensity score. 

Multiple extensions have been developed recently, especially for the MSM, which has received much attention in the literature. Most work has focused on the average treatment effect (ATE) and conditional average treatment effect (CATE) for binary treatment settings \citep[e.g.,][]{Dorn.2022, Dorn.2022b, Jesson.2021, Kallus.2019, Oprescu.2023, Zhao.2019}. Popular extensions for the MSM include sensitivity models for continuous treatments \citep{Jesson.2022, Marmarelis.2023}. For continuous treatments $A$, the continuous marginal sensitivity model (CMSM) \cite{Jesson.2022} is defined through
\begin{align}
    \frac{1}{\Gamma} \leq \frac{P(a \mid z, u)}{P(a \mid z)}  \leq \Gamma.
\end{align}
Other extensions include augmentations to individual treatment effects \citep{Yin.2022, Jin.2023, Marmarelis.2023b}.

The work by \citet{Frauen.2023b} introduces the Generalized Marginal Sensitivity Model (GMSM): For an observational distribution $P_{\mathbf{V}}$ on $\mathbf{V}$ and a family $\mathcal{P}$ of joint probability distributions on $\mathbf{V} \cup \mathbf{U}$ that satisfy 
\begin{align}
    \frac{1}{(1-\Gamma_W) \, q_W(a,z) + \Gamma_W} \leq \frac{P(U_w = u_W \mid z,a)}{P(U_w = u_W \mid z,\mathrm{do}(a))} \leq \frac{1}{\Gamma_W^{-1} \, q_W(a, z) + \Gamma_W^{-1}}
\end{align}
for $W \in \{M, Y\}$ and weight function $q_W(a,z) \in [0,1]$, the tuple $S = (\mathbf{V}, \mathbf{U},\mathcal{G},P_{\mathbf{V}}, \mathcal{P})$ is called a \emph{weighted generalized marginal sensitivity model} (GMSM) with sensitivity parameter $\Gamma_W \geq 1$. It can be shown that, for weight functions $q(a, z) = P(a \mid z)$ (MSM) and $q(a, z) = 0$, the MSM and the CMSM are special cases of the weighted GMSM \citep{Frauen.2023b}. Therefore, the GMSM generalizes many of the aforementioned approaches as it allows to derive bounds for binary and continuous treatments as well as different causal queries (e.g., CATE, distributional effects).

Causal sensitivity analysis has also been applied to other settings, such as off-policy learning \citep[e.g.,][]{Hatt.2022, Kallus.2018} or partial identification of counterfactual queries \citep{Melnychuk.2023b}. Recently, neural frameworks for automated generalized sensitivity analysis have been proposed \citep{Frauen.2024}.
Only one other work besides us has employed sensitivity analysis to study fairness notions under unobserved confounding \citep{Kilbertus.2019}. Nevertheless, this work focuses on the notion of counterfactual fairness (and \emph{not} causal fairness) and is limited to non-linear additive noise models.

\textbf{Rationale behind our choice of sensitivity models:}

In the following, we compare multiple extensions of the MSM with respect to applicability and discuss the strengths and weaknesses. Thereby, we provide a justification of why we adopt the GMSM in our paper. 

A general benefit of the MSM is that it does not impose parametric assumptions on the data-generating process, thus enabling wide applicability to many domains. Nevertheless, the sensitivity model is only defined for a single binary treatment variable. Mutliple extensions derive bounds based on the MSM \citep[e.g.,][]{Dorn.2022, Dorn.2022b, Jesson.2021, Kallus.2019, Oprescu.2023, Zhao.2019}. However, the bounds provided by the extensions above are unnecessarily conservative. Therefore, \citet{Dorn.2022} and \citet{Jin.2023} derived closed-form solutions for \emph{sharp} bounds under the MSM.

The main weaknesses of the above extensions are (i)~the limitation to binary treatment and (ii)~ the restricted focus on the (conditional) average treatment effect. To overcome weakness (i), \citet{Bonvini.2022, Jesson.2022} and \citet{Marmarelis.2023} proposed extensions for continuous treatments. Additionally, \citet{Bonvini.2022} extended the setting to time-varying treatment and confounding variables. To overcome weakness (ii), \citet{Jin.2023, Marmarelis.2023b} and \citet{Yin.2022} introduced sensitivity models to cover the individual treatment effect (ITE).

In our work, we adopt the GMSM. The generalized marginal sensitivity model (GMSM)~\citep{Frauen.2023b} provides a general causal sensitivity framework that can incorporate continuous, discrete, and time-varying treatments. The resulting  bounds are \emph{sharp} and applicable to multiple causal effects (e.g., CATE, ATE) as well as to mediation analysis, path analysis, and for distributional effects. Overall, the GMSM is thus highly suitable for deriving bounds on causal fairness metrics and thus makes our framework widely applicable to various settings (e.g., discrete and continuous variables, etc.). Of note, we use the GMSM only to formalize our setting, while the actual derivation of bounds is \emph{non-trivial} (and is \emph{not} a direction application of the GMSM). The reason is that the GMSM makes interventional queries ($=$level 2 in Pearl's causality ladder), while our task involves counterfactual queries ($=$level 3). Hence, existing bounds are \emph{not} applicable; instead, a new and careful derivation of bounds that are tailored to our setting is needed. 

We also summarize the above discussion in Table~\ref{tab:MSM_comparison}, which provides a systematic overview of the applicability of existing MSM extensions.

\begin{table}[h]
\caption{Overview of key extensions of the MSM for causal sensitivity analyses. Applicability/non-applicability is indicated by a green tick (\cmark) and a red cross (\xmark), respectively.}
\label{tab:MSM_comparison}
\begin{center}
\tiny
\begin{tabular}{lccccc}
\toprule
\multicolumn{1}{c}{MSM extensions} & \multicolumn{2}{c}{Treatment type} & \multicolumn{3}{c}{Causal query}\\
\cmidrule(lr){2-3} \cmidrule(lr){4-6} & Binary & Cont. &   (C)ATE  & Distributional & Individual treatment\\
& & & & effects & effect (ITE)\\
\midrule
\cite{Tan.2006} & \cmark & \xmark & \cmark & \xmark & \xmark\\
\cite{Kallus.2019} &  \cmark & \xmark& \cmark & \xmark & \xmark\\
\cite{Zhao.2019} &  \cmark & \xmark& \cmark &\xmark &\xmark \\
\cite{Jesson.2021} & \cmark  & \xmark&\cmark &\xmark & \xmark \\
\cite{Dorn.2022} &  \cmark  & \xmark&\cmark &\xmark & \xmark\\
\cite{Dorn.2022b} & \cmark  & \xmark&\cmark &\xmark&  \xmark \\
\cite{Oprescu.2023} &  \cmark &\xmark & \cmark&\xmark &  \xmark\\
\cite{Soriano.2023} &  \cmark  &\xmark& \cmark &\xmark &\xmark \\
\cite{Bonvini.2022} & \xmark  & \cmark & \cmark & \xmark&\xmark \\
\cite{Jesson.2022} & \xmark  & \cmark & \cmark & \xmark&\xmark \\
\cite{Marmarelis.2023} & \xmark  & \cmark & \cmark & \xmark&\xmark \\
\cite{Jin.2023} & \cmark & \xmark&\xmark &\xmark & \cmark \\
\cite{Yin.2022} & \cmark  &\xmark & \xmark& \xmark& \cmark \\
\cite{Marmarelis.2023b} & \cmark  &\xmark &\xmark &\xmark & \cmark \\ 
\cite{Frauen.2023b} & \cmark & \cmark & \cmark & \cmark & \cmark \\
\bottomrule
\end{tabular}
\end{center}
\end{table}

\subsection{Challenges in sensitivity analysis for causal fairness}
\label{appendix:contribution}

We transfer concepts from sensitivity analysis to the causal fairness literature by (i)~showing the applicability of sensitivity analysis outside the field of standard causal effects, and (ii)~providing a solution tool for assessing causal fairness under unobserved confounding as well as a direction for future research. Nevertheless, sensitivity models are \emph{not} directly compatible with causal fairness notions. We elaborate on the difficulty and our proposed solution below. 

In the following, we rephrase our workflow in terms of Pearl's ladder of causality (Fig.~\ref{fig:ladder}): We aim to provide bounds for causal fairness notions, which can be estimated from the data. \emph{Causal fairness} notions are located on \emph{level three} of Pearl’s ladder, i.e., they contain counterfactual expressions. However, \emph{sensitivity models} are interventional queries and thus located on \emph{level two}. Hence, existing bounds from sensitivity models are \emph{not} applicable. To remedy the above, we need to develop a framework to bridge the gap between levels three and two. 

Therefore, we propose the following approach to deriving bounds: (i)~we propose to decompose the nested counterfactuals into interventional (non-identifiable due to unobserved confounding) and identifiable effects. This step is non-trivial and requires customization for every causal fairness notion. Then, (ii)~we employ sensitivity analysis to derive bounds on the interventional terms. The resulting fairness bounds consist of a concatenation of sensitivity bounds and the decomposed identifiable effects. We present a schematic of our workflow in terms of the ladder of causality in Fig.~\ref{fig:ladder}.

\begin{figure}[h]
    \centering
    \includegraphics[width=0.7\linewidth]{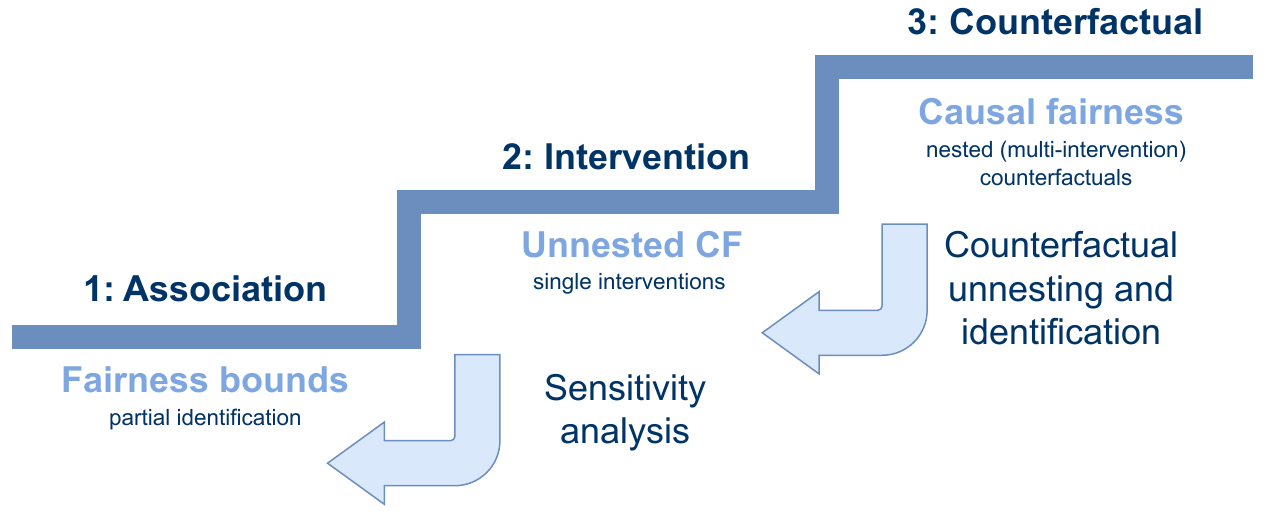}
    \caption{Our workflow in terms of Pearl's ladder of causality. Note: sensitivity models like the GMSM are interventional queries ($=$level 2 in Pearl's causality ladder), while our task involves counterfactual queries ($=$level 3), because of which a tailored framework for deriving bounds is needed.}
    \label{fig:ladder}
\end{figure}

\newpage
\section{Proofs}
\label{appendix:proofs}

In the following, we derive proofs for our main theorem from Section~\ref{sec:bounds}. According to the three-step approach presented in Section~\ref{sec:bounds}, we first provide a theoretical background on the sensitivity model $\mathcal{S}$ (the GMSM). Then, we perform counterfactual unnesting of the path-specific causal effects. Finally, we prove the overall bounds per counterfactual effect.

\subsection{Theoretical background on GMSMs}

We can derive bounds on single-counterfactual causal effects, i.e. unnested counterfactuals, in a GMSM based on the following theorem.
\begin{theorem}\citep{Frauen.2023b}\label{thm:gmsm_bounds}
    Let $S$ be a GMSM with sensitivity parameters $\Gamma_W$, $W \in \mathbf{M} \cup \{Y\}$. Further, we restrict each unobserved confounder $U \in \mathbf{U}$ to be the parent of only one element in $\mathbf{M} \cup \{Y\}$, i.e., there does not exist unobserved confounding between mediators and outcome as well as between mediators themselves. Let $\mathcal{K}(S)$ denote the class of SCMs $\mathrm{C}$ compatible with $S$. Let $F(\cdot)$ as denote the CDF corresponding to $P(\cdot \mid Z, M_W, A)$. For a continuous $W$, we define
    \begin{align}
        P^+(w \mid z, m_W, a) =  \begin{cases} \frac{1}{s^+_W}P(w \mid z, m_W, a),  & \text{if } F(w) \leq  \frac{\Gamma_W}{1+\Gamma_W}\\
        \frac{1}{s^-_W}P(w \mid z, m_W, a),  & \text{if } F(w) >  \frac{\Gamma_W}{1+\Gamma_W}
        \end{cases}
    \end{align}
    and
    \begin{align}
        P^-(w \mid z, m_W, a) =  \begin{cases} \frac{1}{s^-_W}P(w \mid z, m_W, a),  & \text{if } F(w) \leq  \frac{1}{1+\Gamma_W}\\
        \frac{1}{s^+_W}P(w \mid z, m_W, a),  & \text{if } F(w) >  \frac{1}{1+\Gamma_W}.
        \end{cases}
    \end{align}
    For a discrete $W \in \mathbb{N}$, we define
    \begin{align}
        P^+(w \mid z, m_W, a) =  \begin{cases} \frac{1}{s^+_W}P(w \mid z, m_W, a),  & \text{if } F(w) <  \frac{\Gamma_W}{1+\Gamma_W}\\
        \frac{1}{s^-_W}P(w \mid z, m_W, a),  & \text{if } F(w-1) > \frac{\Gamma_W}{1+\Gamma_W}\\
        \frac{1}{s^+_W}(\frac{\Gamma_W}{1+\Gamma_W} - F(w-1)) +  \frac{1}{s^-_W}(F(w) - \frac{\Gamma_W}{1+\Gamma_W}),  & \text{otherwise}.
        \end{cases}
    \end{align}
    and
    \begin{align}
        P^-(w \mid z, m_W, a) =  \begin{cases} \frac{1}{s^-_W}P(w \mid z, m_W, a),  & \text{if } F(w) <  \frac{1}{1+\Gamma_W}\\
        \frac{1}{s^+_W}P(w \mid z, m_W, a),  & \text{if } F(w-1) >  \frac{1}{1+\Gamma_W}\\
        \frac{1}{s^-_W}(\frac{1}{1+\Gamma_W} - F(w-1)) +  \frac{1}{s^+_W}(F(w) - \frac{1}{1+\Gamma_W}),  & \text{otherwise}.
        \end{cases}
    \end{align}    
    With $F^+(\cdot)$, $F^-(\cdot)$ and $F_{\mathcal{C}}(\cdot)$ denoting the conditional CDFs corresponding to $P^+(w \mid z, m_W, a)$, $P^-(w \mid z, m_W, a)$ and $P_{\mathcal{C}}(\cdot \mid z, m_W, \mathrm{do}(A=a))$, respectively, we yield 
    \begin{align}
        F^+(\cdot) = \inf_{\mathcal{C} \in \mathcal{K} (S)} F_{\mathcal{C}} (w) \hspace{1cm} \text{and} \hspace{1cm}  F^-(\cdot) = \sup_{\mathcal{C} \in \mathcal{K} (S)} F_{\mathcal{C}} (w)
    \end{align}
    for all $w$.
\end{theorem}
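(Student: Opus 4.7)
The plan is to reduce the optimization of $F_{\mathcal{C}}(w)$ over the class $\mathcal{K}(S)$ to a pointwise variational problem over a single density ratio, and then solve that problem by a Neyman--Pearson style threshold argument. First I would use the compatibility condition together with Bayes' rule to translate the GMSM constraint on $P(U_W \mid z,a)/P(U_W \mid z,\mathrm{do}(a))$ into a pointwise constraint on the ratio
\[
r_{\mathcal{C}}(w) \;=\; \frac{P_{\mathcal{C}}(w\mid z,m_W,\mathrm{do}(a))}{P(w\mid z,m_W,a)}.
\]
Marginalizing the GMSM bound over $U_W$, while using the restriction that each unobserved confounder parents only one node in $\mathbf{M}\cup\{Y\}$ so that every other mechanism is unaffected by $\mathrm{do}(a)$, yields a two-sided envelope $1/s_W^{+} \le r_{\mathcal{C}}(w) \le 1/s_W^{-}$, where $s_W^{\pm}$ are precisely the normalization constants appearing in the statement. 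Together with the mass constraint $\int r_{\mathcal{C}}(w)\,P(w\mid z,m_W,a)\,dw = 1$, this characterizes the admissible set of interventional densities as a box-constrained slice of a weighted $L^1$ simplex.

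Second, I would phrase $\inf_{\mathcal{C}} F_{\mathcal{C}}(w)$ (and $\sup_{\mathcal{C}} F_{\mathcal{C}}(w)$) as the minimization (resp.~maximization) of the linear functional $r \mapsto \int \mathbf{1}\{w' \le w\}\, r(w')\,P(w'\mid z,m_W,a)\,dw'$ over that feasible set. A standard rearrangement/Lagrangian argument shows that the optimizer of a linear functional on a box-constrained simplex is bang-bang: it saturates the lower envelope on one side of a threshold in $w'$ and the upper envelope on the other. To minimize $F_{\mathcal{C}}(w)$ we want to push mass toward large $w'$, so $r$ equals the small rate $1/s_W^{+}$ on the set below the threshold and the large rate $1/s_W^{-}$ above it. The threshold is pinned down uniquely by the normalization constraint and lands at the quantile level $\Gamma_W/(1+\Gamma_W)$ of $P(\cdot\mid z,m_W,a)$, which is exactly the cutoff in the definition of $P^{+}$. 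The upper bound on $F_{\mathcal{C}}$ is obtained symmetrically, with threshold level $1/(1+\Gamma_W)$, yielding $P^{-}$.

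Third, sharpness requires exhibiting a compatible SCM that realizes each extremal $r$. The natural construction is to take $U_W$ binary and set $P(U_W=1\mid z,w,a)$ equal to the indicator of $F(w)\le \Gamma_W/(1+\Gamma_W)$ (respectively the analogous indicator for the lower bound), and then to define the mechanism $f_W$ so that the two values of $U_W$ realize the two extreme rates in the envelope. Checking compatibility reduces to a direct algebraic verification that the induced $P(U_W\mid z,a)/P(U_W\mid z,\mathrm{do}(a))$ saturates the GMSM inequality with equality on each side; by construction the resulting interventional CDF equals $F^{\pm}$, proving the inf and sup are attained.

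The discrete case adds one technicality: the threshold quantile $\Gamma_W/(1+\Gamma_W)$ typically lands strictly inside an atom of $P(\cdot\mid z,m_W,a)$, so the optimal $r$ has to be split within that atom. This is precisely what the piecewise formula in the ``otherwise'' branch encodes, by mixing the two extremal rates $1/s_W^{\pm}$ in proportion to the two pieces of the atom lying on either side of the threshold. The main obstacle in the proof will be the algebraic bookkeeping for this atom-splitting step, and verifying that the mixed $r$ still corresponds to a realizable distribution of $U_W$ in a valid SCM; once that is in hand, the continuous and discrete formulas follow from the same linear-programming structure imposed by the GMSM.
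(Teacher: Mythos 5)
The paper does not actually prove this statement: Theorem~\ref{thm:gmsm_bounds} is imported verbatim from \citet{Frauen.2023b} and is used purely as background for the proofs of Lemmas~\ref{lem:bounds_single} and \ref{lem:bounds_double}, so there is no in-paper argument to compare yours against. Judged on its own merits, your sketch reconstructs essentially the argument of the cited source, and it is sound. The reduction to a box-constrained linear program over the density ratio $r_{\mathcal{C}}(w)=P_{\mathcal{C}}(w\mid z,m_W,\mathrm{do}(a))/P(w\mid z,m_W,a)$, the bang-bang/threshold structure of the optimizer, the Dorn--Guo-style construction of a binary $U_W$ that attains the envelope, and the atom-splitting in the discrete case are all the right ingredients. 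Two points deserve explicit verification rather than a gesture. First, the theorem asserts that a \emph{single} distribution $P^{+}$ attains $\inf_{\mathcal{C}}F_{\mathcal{C}}(w)$ \emph{for every} $w$ simultaneously, which is strictly stronger than solving the pointwise problem at each fixed $w$; this works only because the normalization identity
\begin{equation*}
\frac{\tau}{s_W^{+}}+\frac{1-\tau}{s_W^{-}}=1 \qquad\text{at}\qquad \tau=\frac{\Gamma_W}{1+\Gamma_W}
\end{equation*}
makes the two regimes of the pointwise solution (threshold above $w$ versus threshold below $w$) coincide with the CDF of the one fixed threshold distribution; you should check this identity rather than assert that the threshold ``lands at'' that quantile. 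Second, the passage from the GMSM constraint on $P(U_W\mid z,a)/P(U_W\mid z,\mathrm{do}(a))$ to the two-sided envelope on $r_{\mathcal{C}}$ uses both the single-parent restriction on $\mathbf{U}$ and the compatibility requirement that $\mathbf{U}_{\mathcal{C}}$ introduces no additional confounding (Supplement~\ref{appendix:extended_theory}); without the latter the marginalization over $u_W$ does not close. With those two details filled in, your outline is a complete and correct proof strategy.
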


\subsection{Counterfactual unnesting}
\label{appendix:counterfacutal_unnesting}

We perform counterfactual unnesting based on the counterfactual identifiability theory presented in  \citep{Correa.2021}. We adopt the notation therein. Let $V_{\textbf{X}}$, $V \in \textbf{V}, \textbf{X} \subseteq \textbf{V}$ denote a counterfactual expression for $V$. The ancestral set of $V_{\textbf{X}}$ is given by $\ancestors_\gG(V_{\textbf{X}}) = \{ W_{\textbf{z}} \mid W \in \ancestors_{\gG_{\underline{\textbf{X}}}}(V), \textbf{z} = \textbf{X} \cap \ancestors_{\gG_{\underline{\textbf{X}}}}(V)\}$,
where $\gG_{\underline{\textbf{X}}}$ denotes the graph deduced from $\gG$ when removing all edges going out of  variables $\textbf{X}$.

For the general case, let $\textbf{W}_*$ denote a set of arbitrary counterfactual variables. The corresponding ancestral set is defined as $\ancestors_\gG(\textbf{W}_*) = \bigcup_{W_t \in \textbf{W}_*} \ancestors_\gG(W_t)$, where $t$ denotes the respective intervention.

\noindent
\begin{theorem}[Ancestral set factorization from \citep{Correa.2021}]\label{thm:acenstral_set}
Let $\textbf{W}_*$ be an ancestral set, i.e., $ \ancestors_\gG(\textbf{W}_*) = \textbf{W}_*$, and let $\textbf{w}_*$ be a realization of $\textbf{W}_*$. Then,  
\begin{align}\label{eqn:ancestral_set}
    P(\textbf{W}_* = \textbf{w}_*) = P\left( \bigwedge_{W_{\textbf{t}} \in \textbf{W}_*} W_{\textbf{pa}(W)} = w_t \right),
\end{align}
where  the subscript $\textbf{t}$ denotes the set of interventions on $W$ present in $\textbf{W}_*$. Note that $\textbf{t}$ might be the empty set, i.e., the variable $W$ is not intervened on.
The set of interventions $\textbf{pa}(W)$ for each $W_{\textbf{t}} \in \textbf{W}_*$ corresponds to the interventions $\textbf{t}$ for non-empty $\textbf{t}$. Otherwise, $\textbf{pa}(W)$ is given by the parents $\parents_\gG(W)$ of $W$.
\end{theorem}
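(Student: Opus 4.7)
The plan is to prove the ancestral set factorization by induction on a topological ordering of the variables appearing in $\textbf{W}_*$, using the structural equations of the underlying SCM together with the consistency (composition) axiom for counterfactuals. The essential idea is that ancestral closure guarantees that, for every $W_{\textbf{t}} \in \textbf{W}_*$, each parent counterfactual that would appear when we unroll $W_{\textbf{t}} = f_W(\parents_\gG(W), U_W)$ under intervention $\textbf{t}$ is itself an element of $\textbf{W}_*$, so its realization is already pinned by $\textbf{w}_*$. That allows us to replace the multi-variable intervention on $\textbf{t}$ by a single-layer intervention on the parents of $W$, namely $W_{\textbf{pa}(W)}$.

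First I would fix a topological order $W^{(1)},\ldots,W^{(k)}$ of the endogenous variables referenced in $\textbf{W}_*$ (parents before children) and work in the canonical representation in which counterfactuals are functions of the exogenous $\mathbf{U}$. The key lemma I would establish is the pointwise identity
\begin{equation*}
W_{\textbf{t}}(\mathbf{u}) \;=\; W_{\textbf{pa}(W)}(\mathbf{u})
\qquad \text{on the event } \{\textbf{W}_* = \textbf{w}_*\},
\end{equation*}
for each $W_{\textbf{t}} \in \textbf{W}_*$. This follows by writing $W_{\textbf{t}}(\mathbf{u}) = f_W\bigl((\parents_\gG(W))_{\textbf{t}}(\mathbf{u}), u_W\bigr)$, observing that ancestral closure places every parent counterfactual appearing on the right-hand side inside $\textbf{W}_*$, and then invoking consistency: since those parents equal the corresponding entries of $\textbf{w}_*$ on the event in question, we may hard-intervene on them with those very values, which is exactly the definition of $W_{\textbf{pa}(W)}$. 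Iterating this substitution in topological order gives the event-level identity
\begin{equation*}
\bigcap_{W_{\textbf{t}} \in \textbf{W}_*} \{W_{\textbf{t}} = w_t\}
\;=\;
\bigcap_{W_{\textbf{t}} \in \textbf{W}_*} \{W_{\textbf{pa}(W)} = w_t\}
\end{equation*}
as subsets of the probability space of $\mathbf{U}$, and taking $P_{\mathbf{U}}$-measure of both sides yields the theorem.

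The main obstacle will be the bookkeeping inside the definition of $\textbf{pa}(W)$, which splits into the cases $\textbf{t} = \emptyset$ and $\textbf{t} \neq \emptyset$: in the former, $\textbf{pa}(W)$ is just the vector of parent values read off $\textbf{w}_*$; in the latter one must carefully separate the parents of $W$ that are overwritten by $\textbf{t}$ from those whose values are determined by the ancestral counterfactuals in $\textbf{W}_*$, and verify that both groups are in fact fixed to the same constants no matter how $\textbf{t}$ is originally written. Here the restriction $\textbf{z} = \textbf{X} \cap \ancestors_{\gG_{\underline{\textbf{X}}}}(V)$ in the definition of the ancestral set is crucial, because it rules out interventions that are ``dominated'' by later interventions on the same variable and thus guarantees that $\textbf{pa}(W)$ is well-defined. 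A secondary subtlety is the inductive base case: source nodes (with no parents in $\gG$) have trivial structural equations $W = f_W(U_W)$, so $W_{\textbf{t}} = W_{\textbf{pa}(W)} = W$ whenever $W \notin \textbf{t}$, giving the induction anchor. Once these bookkeeping points are handled, the argument reduces to a straightforward recursive application of the structural equations and consistency, as outlined above.
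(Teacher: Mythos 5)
Your argument is essentially the correct and standard one; note, however, that the paper itself does not prove this statement at all --- it is imported verbatim as Theorem 3.1/Lemma-level machinery from \citet{Correa.2021}, so there is no in-paper proof to compare against. Your reconstruction (unroll $W_{\mathbf{t}} = f_W((\parents_\gG(W))_{\mathbf{t}}, U_W)$, use ancestral closure to observe that every parent counterfactual lies in $\mathbf{W}_*$ and is therefore pinned by $\mathbf{w}_*$, then invoke composition/consistency to replace the ambient intervention $\mathbf{t}$ by the single-layer intervention $\mathbf{pa}(W)$, and induct along a topological order) is precisely how the result is established in the source. The one place where you should be slightly more explicit is the claimed event-level \emph{equality}: your key lemma, as stated ``on the event $\{\mathbf{W}_* = \mathbf{w}_*\}$,'' only delivers the inclusion of that event into $\bigcap_W \{W_{\mathbf{pa}(W)} = w_t\}$; the reverse inclusion needs the symmetric induction run from the other side (assume all $W_{\mathbf{pa}(W)}$ hit their targets, deduce in topological order that each $W_{\mathbf{t}}$ does too). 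Your base case and inductive step supply exactly the ingredients for that direction as well, so this is a matter of presentation rather than a genuine gap.
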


Analyzing counterfactuals such as in \eqref{eqn:ancestral_set} based on their counterfactual factors (`ctf-factors') can aid in deciding its identifiability in a given graph. 
A counterfactual factor $P(W_{1_{[\textbf{pa}(W_1)]}} = w_1, W_{2_{[\textbf{pa}(W_2)]}} = w_2, \ldots ,W_{l_{[\textbf{pa}(W_l)]}}=w_l)$ for $W_i \in \mathbf{V}$, where potentially $W_i = W_j$ for some $i,j \in \{1, \ldots ,l\}$ generalizes the parent-child relationship encoded in c-factors to the counterfactual domain. The brackets around the subscript denote the interventions on the enumerated variables $W_i$ for $i \in \{1, \ldots ,l\}$.

\noindent
\begin{theorem}[Counterfactual factorization from \citep{Correa.2021}] \label{thm:ctf_factorization}
Let $P(\textbf{W}_* = \textbf{w}_*)$ be a counterfactual factor with a topological order over the variables in $\mathcal{G}[\textbf{V}(\textbf{W}_*)]$. Further, let $\textbf{C}_1, \ldots ,\textbf{C}_k$ be the c-components of the stated graph, $\textbf{C}_{j*} := \{ W_{\textbf{pa}(W)} \in \textbf{W}_* \mid W \in \textbf{C}_j\}$ with $\textbf{c}_j$ the values in $\textbf{w}_*$. Then, one yields 
\begin{align}
    P(\textbf{W}_* = \textbf{w}_*) = \prod_j P\left( \textbf{C}_{j*} = \textbf{c}_{j*} \right).
\end{align}
\end{theorem}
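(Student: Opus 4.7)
The plan is to reduce the counterfactual statement to one about the exogenous variables of the underlying SCM and then exploit the independence structure encoded by the c-components. First I would unfold the ctf-factor using the structural equations: since each variable $W_i$ appearing in $\textbf{W}_*$ is intervened on by exactly its graph parents $\textbf{pa}(W_i)$, the value of $W_{i,\textbf{pa}(W_i)}$ under a fixed realization of $\mathbf{U}$ is simply $f_{W_i}(\textbf{pa}(W_i), \mathbf{U}_{W_i})$, where $\mathbf{U}_{W_i}$ denotes the exogenous parents of $W_i$. Hence, with the intervention values $\textbf{pa}(W_i)$ treated as constants, the event $\bigwedge_i \{W_{i,\textbf{pa}(W_i)} = w_i\}$ translates into a measurable event on the joint exogenous distribution $P(\mathbf{U})$, and its probability depends only on the marginal of $\mathbf{U}$ restricted to $\bigcup_i \mathbf{U}_{W_i}$.

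Next I would invoke the defining property of c-components in a semi-Markovian SCM: two endogenous variables lie in the same c-component iff they are connected by a path of bidirected edges, which in turn corresponds to sharing (transitively) a common unobserved ancestor. Consequently the sets of exogenous parents attached to the distinct c-components $\textbf{C}_1,\ldots,\textbf{C}_k$ partition the exogenous variables that actually appear in the rewriting of step one, and, crucially, they are mutually independent under $P(\mathbf{U})$. This yields a clean product factorization of the joint exogenous distribution across c-components, and by the step-one translation each factor is exactly the ctf-factor $P(\textbf{C}_{j*} = \textbf{c}_{j*})$ associated with component $j$.

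Finally I would verify a consistency point: within each component $\textbf{C}_j$, the intervention sets $\textbf{pa}(W)$ attached to its members may reference variables that are not themselves members of $\textbf{C}_j$. The assumed topological order on $\mathcal{G}[\textbf{V}(\textbf{W}_*)]$ ensures that these referenced parental values are well defined as constants inherited from the intervention specification, so that the block $\textbf{C}_{j*}$ is itself a legitimate ctf-factor of the same form as the original. Combining the translation, the independence-based factorization, and this well-definedness check yields the claimed product.

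The main obstacle I anticipate is the rigorous bookkeeping in step one. The structural equations make it morally clear that fixing the parents renders $W_{\textbf{pa}(W)}$ a pure function of exogenous noise, but turning this into a precise measure-theoretic identity for the counterfactual random variable requires invoking the standard three-step abduction-action-prediction semantics uniformly across all $W_i$ and carefully tracking which exogenous variables enter each $f_{W_i}$, so that the partition of $\mathbf{U}$ induced by the c-components truly matches the partition of the rewritten event. Once this is pinned down, the remainder of the argument is essentially mechanical: independence of the c-component-indexed blocks of $\mathbf{U}$, followed by re-encoding each block probability as a ctf-factor on the corresponding $\textbf{C}_j$.
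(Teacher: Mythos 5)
This statement is imported verbatim from \citep{Correa.2021}; the paper offers no proof of its own, only the citation. Your sketch --- rewriting each $W_{\textbf{pa}(W)}$ as a deterministic function of its exogenous parents alone (since all endogenous parents are fixed by the intervention), then factorizing $P(\mathbf{U})$ across the mutually independent blocks of exogenous variables induced by the c-components --- is precisely the standard argument behind the cited result, so it is correct and takes the same route as the original source.
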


In the following, we will use the stated counterfactual unnesting theory to split the expressions $P(y_{a_i} \mid a)$ and $P(y_{a_i, m_{a_j}} \mid a)$ into the respective counterfactual factors. We thus obtain
\begin{align}
    P(y_{a_i} \mid a) 
    \overset{(*)}{=} \frac{1}{P(a)} \sum_{\substack{z \in \mathbf{Z},\\ m \in \mathbf{M}}} P(y_{a_i}, a, z, m_{a_i})
    \overset{(**)}{=} \frac{1}{P(a)} \sum_{\substack{z \in \mathbf{Z},\\ m \in \mathbf{M}}} P(y_{a_i, z, m},a_z, z, m_{a_i, z}),
\end{align}
where we leverage the ancestral set expansion $(*)$ and the counterfactual factorization $(**)$ from above, respectively. If we assume there exists unobserved confounding between sensitive attribute and mediator ($U_{\mathrm{IE}}$), sensitive attribute and covariates ($U_{\mathrm{SE}}$), and sensitive attribute and outcome ($U_{\mathrm{DE}}$), this expression cannot be reduced further.

The expression for the second term of interest follows similarly via
\begin{align}
    P(y_{a_i, m_{a_j}} \mid a)
     = \frac{1}{P(a)} \sum_{m \in \mathbf{M}} P(y_{a_i, m},a, m_{a_j})
     = \frac{1}{P(a)} \sum_{\substack{z \in \mathbf{Z},\\ m \in \mathbf{M}}} P(y_{a_i, z, m}, a_z, z, m_{a_j, z}).
\end{align}

As a result, we can write path-specific causal fairness as a combination of identifiable effects and single-counterfactual effects, which subsequently can be bounded following Theorem~\ref{thm:gmsm_bounds}. Hence, we yield
\begin{align}
    \mathrm{DE}_{a_i, a_j} (y \mid a_i) =&  P(y_{a_j, m_{a_i}} \mid a_i) - P(y_{a_i} \mid a_i)\\
    =& \frac{1}{P(a_i)} \sum_{\substack{z \in \mathbf{Z},\\ m \in \mathbf{M}}} P(y_{a_j, z, m}, a_z, z, m_{a_i, z}) - P(y \mid a_i)\\
    =& \frac{1}{P(a_i)} \sum_{\substack{z \in \mathbf{Z},\\ m \in \mathbf{M}}} P(y| z, m, do(a_j))P(m\mid do(a_i), z)P(z)\\
    &- \frac{P(a_j)}{P(a_i)}\sum_{\substack{z \in \mathbf{Z},\\ m \in \mathbf{M}}} P(y\mid m, z, a_j)P(m, z\mid a_i)  - P(y \mid a_i), \nonumber
\end{align}
\begin{align}
    \mathrm{IE}_{a_i, a_j} (y \mid a_j) =&  P(y_{a_i, m_{a_j}} \mid a_j) - P(y_{a_i} \mid a_j)\\
    =&  \frac{1}{P(a_j)} (\sum_{\substack{z \in \mathbf{Z},\\ m \in \mathbf{M}}} P(y_{a_i, z, m}, a_z, z, m_{a_j, z}) - \sum_{\substack{z \in \mathbf{Z},\\ m \in \mathbf{M}}} P(y_{a_i, z, m}, a_z, z, m_{a_i, z}))\\
    =& \frac{1}{P(a_j)} \sum_{\substack{z \in \mathbf{Z},\\ m \in \mathbf{M}}} P(y| z, m, do(a_i))P(m\mid do(a_j), z)P(z) \\
    & - \frac{P(a_i)}{P(a_j)}\sum_{\substack{z \in \mathbf{Z},\\ m \in \mathbf{M}}} P(y\mid m, z, a_i)P(m, z\mid a_j) \nonumber\\
    &- \frac{1}{P(a_j)} \sum_{\substack{z \in \mathbf{Z},\\ m \in \mathbf{M}}} (P(y, z, m \mid \mathrm{do}(a_i)) - P( y, z, m \mid a_i)P(a_i)), \nonumber
\end{align}
\begin{align}
    \mathrm{SE}_{a_i, a_j} (y) =& P(y_{a_i}\mid a_j) - P(y \mid a_i)\\
    =& \frac{1}{P(a_j)} \sum_{\substack{z \in \mathbf{Z},\\ m \in \mathbf{M}}} P(y_{a_i, z, m},a_z, z, m_{a_i, z})  - P(y \mid a_i)\\
    =& \frac{1}{P(a_j)} \sum_{\substack{z \in \mathbf{Z},\\ 
    m \in \mathbf{M}}} (P(y, z, m \mid \mathrm{do}(a_i)) - P( y, z, m \mid a_i)P(a_i)) - P(y \mid a_i). \nonumber
\end{align}

In the above equations, the effects with do-interventions are unidentifiable in the presence of unobserved confounding. Furthermore, we present details on the performed calculations in the proofs of Lemma~\ref{lem:bounds_single} and Lemma~\ref{lem:bounds_double}.

\subsection{Counterfactual identification and bounds}
\label{appendix:counterfactual_identification}

We use the following corollary.
\begin{corollary}
    Let $X$ and $A$ be two variables of a structural causal model. By consistency it holds
\begin{align}
    P(X_a) = P(X \mid A=a)P(A=a) + P(X_a \mid A\neq a)P(A \neq a).
\end{align}
\end{corollary}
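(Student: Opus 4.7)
The plan is to derive this identity as a one-line consequence of the law of total probability combined with the consistency axiom of structural causal models, which is precisely what the hint ``by consistency'' in the statement is pointing at. Recall that consistency states that whenever the factual treatment value equals the intervention level, the potential outcome coincides with the observed outcome, i.e., $P(X_a \mid A = a) = P(X \mid A = a)$ in any SCM $\mathcal{C}$.

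First I would apply the law of total probability to $P(X_a)$, partitioning the sample space according to whether $A = a$ or $A \neq a$, which gives
\begin{equation*}
    P(X_a) = P(X_a \mid A = a)\, P(A = a) + P(X_a \mid A \neq a)\, P(A \neq a).
\end{equation*}
Then I would invoke consistency on the first summand to rewrite $P(X_a \mid A = a)$ as $P(X \mid A = a)$, obtaining the claimed identity immediately. Note that no analogous simplification is available for the second summand, since on the event $\{A \neq a\}$ the observed value of $X$ need not agree with $X_a$; the term $P(X_a \mid A \neq a)$ therefore remains as a genuine counterfactual quantity, which is exactly why the identity is useful as a bridge between interventional and observational quantities.

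There is essentially no technical obstacle here: the proof reduces to a single application of total probability followed by a single invocation of consistency. The only subtlety worth flagging for the reader is that consistency is a foundational axiom of the SCM framework (derivable from the structural equations $f_X$ evaluated at the factual values of the parents) rather than a derived property, so no deeper justification is required. Within the larger workflow of the paper, this corollary then serves as a convenient algebraic tool that will be used to relate the unidentifiable interventional terms appearing in the unnested expressions for $\mathrm{DE}$, $\mathrm{IE}$, and $\mathrm{SE}$ (see Appendix~\ref{appendix:counterfacutal_unnesting}) to observational conditional probabilities, before the GMSM bounds of Theorem~\ref{thm:gmsm_bounds} are applied to the remaining counterfactual residuals.
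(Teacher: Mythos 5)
Your argument is correct and is exactly the content the paper compresses into its one-line proof (``follows directly from basic probability theory''): the law of total probability over the partition $\{A=a\}$ versus $\{A\neq a\}$, followed by consistency to replace $P(X_a \mid A=a)$ with $P(X\mid A=a)$. No gaps; you simply spell out what the paper leaves implicit.
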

\begin{proof}
    Follows directly from basic probability theory.
\end{proof}

We start by deriving bounds for expressions of the form $P(y_{a_i} \mid a)$. If $a=a_i$, there is nothing to show, since, by consistency, we have $P(y_{a_i} \mid a_i) = P(y \mid a_i)$. Therefore, let $a \neq a_i$ in the following.

\begin{lemma}\label{lem:bounds_single}
    For a binary sensitive attribute with $a_i \neq a_j$, it holds
    \begin{align}
        P^+(y_{a_i} \mid a_j) = \frac{1}{P(a_j)} \sum_{\substack{z \in \mathbf{Z},\\ m \in \mathbf{M}}}P^+(y \mid z, m, a_i)P^+(m \mid a_i, z)P(z)
        - \frac{P(a_i)}{P(a_j)}\sum_{\substack{z \in \mathbf{Z},\\ m \in \mathbf{M}}} P(y, z, m \mid a_i),
    \end{align}
    \begin{align}
        P^-(y_{a_i} \mid a_j) = \frac{1}{P(a_j)} \sum_{\substack{z \in \mathbf{Z},\\ m \in \mathbf{M}}}P^-(y \mid z, m, a_i)P^-(m \mid a_i, z)P(z)
        - \frac{P(a_i)}{P(a_j)}\sum_{\substack{z \in \mathbf{Z},\\ m \in \mathbf{M}}} P(y, z, m \mid a_i).
    \end{align}
\end{lemma}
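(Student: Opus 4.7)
The plan is to reduce the conditional counterfactual $P(y_{a_i}\mid a_j)$ to a combination of (i) an identifiable observational term and (ii) a marginal counterfactual $P(y_{a_i})$, and then bound the latter by applying Theorem~\ref{thm:gmsm_bounds} to each of its two interventional factors.

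First I would apply the corollary on consistency to a binary sensitive attribute $A$ to write $P(y_{a_i}) = P(y\mid a_i)\,P(a_i) + P(y_{a_i}\mid a_j)\,P(a_j)$, and solve for the quantity of interest:
\begin{equation*}
P(y_{a_i}\mid a_j) = \frac{1}{P(a_j)}\,P(y_{a_i}) - \frac{P(a_i)}{P(a_j)}\,P(y\mid a_i).
\end{equation*}
The second term matches the identifiable piece $\frac{P(a_i)}{P(a_j)}\sum_{z,m} P(y,z,m\mid a_i)$ claimed in the lemma, so what remains is to bound $P(y_{a_i})$. Using the counterfactual unnesting from Supplement~\ref{appendix:counterfacutal_unnesting} (ancestral set factorization plus counterfactual factorization, noting that $Z$ has no ancestor intervened upon so $P(z\mid\mathrm{do}(a_i)) = P(z)$), I would rewrite
\begin{equation*}
P(y_{a_i}) \;=\; \sum_{z\in\mathbf{Z},\,m\in\mathbf{M}} P\bigl(y\mid z,m,\mathrm{do}(a_i)\bigr)\,P\bigl(m\mid z,\mathrm{do}(a_i)\bigr)\,P(z).
\end{equation*}
Only the two interventional factors are unidentifiable, and both are pointwise non-negative.

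Next, I would invoke Theorem~\ref{thm:gmsm_bounds} twice, once with $W=Y$ and once with $W=M$, obtaining the GMSM bounds $P^{\pm}(y\mid z,m,a_i)$ and $P^{\pm}(m\mid z,a_i)$ for the two interventional quantities. Because each summand in the display above is a product of two non-negative factors times the data-identifiable weight $P(z)$, its pointwise supremum (infimum) over the class $\mathcal{K}(S)$ is obtained by replacing each factor independently by its $+$ ($-$) bound. Combining this with the decomposition from Step~1 yields exactly the expressions stated in the lemma for $P^{\pm}(y_{a_i}\mid a_j)$.

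The main obstacle I anticipate is justifying \emph{sharpness}, i.e.\ that the pointwise-optimal product is actually attainable by a single SCM in $\mathcal{K}(S)$. Here the restriction imposed in Theorem~\ref{thm:gmsm_bounds} is essential: each unobserved confounder is the parent of only one variable in $\mathbf{M}\cup\{Y\}$, so $U_{\mathrm{IE}}$ and $U_{\mathrm{DE}}$ vary independently and one can simultaneously realise the worst case for $P(m\mid z,\mathrm{do}(a_i))$ and $P(y\mid z,m,\mathrm{do}(a_i))$ within the same compatible SCM. I would make this explicit by citing the sharpness statement of Theorem~\ref{thm:gmsm_bounds} (from \citep{Frauen.2023b}) and noting that the resulting joint optimum matches the pointwise product, which then translates into sharp bounds on $P(y_{a_i}\mid a_j)$ because the identifiable term is unaffected by the choice of SCM.
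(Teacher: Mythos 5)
Your proposal is correct and follows essentially the same route as the paper: the paper applies the consistency decomposition to the joint counterfactual $P(y_{a_i}, z, m_{a_i}\mid a_j)$ and then sums over $z,m$, whereas you apply it directly to the marginal $P(y_{a_i})$ before expanding — an equivalent rearrangement — and both proofs then factorize the interventional term as $P(y\mid z,m,\mathrm{do}(a_i))P(m\mid z,\mathrm{do}(a_i))P(z)$ and invoke Theorem~\ref{thm:gmsm_bounds}. Your explicit attention to why the two factor-wise optima are simultaneously attainable (via the independence of the confounders on $M$ and $Y$) is a point the paper's proof leaves implicit, so it is a welcome addition rather than a deviation.
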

\begin{proof}
    By Theorem~\ref{thm:acenstral_set}, we can write
    \begin{align}
    \begin{aligned}
    P(y_{a_i} \mid a_j) &= \sum_{\substack{z \in \mathbf{Z},\\ m \in \mathbf{M}}} P(y_{a_i},z, m_{a_i} \mid a_j)\\
    &= \frac{1}{P(a_j)} \sum_{\substack{z \in \mathbf{Z},\\ m \in \mathbf{M}}} \Big[ P(y, z, m \mid \mathrm{do}(a_i)) - P(y, z, m \mid a_i)P(a_i)\\
    &\hspace{0.5cm} - \sum_{\substack{a' \neq a_i,\\a' \neq a_j}} P(y_{a_i}, z, m_{a_i} \mid a')P(a')\Big] . 
    \end{aligned}
    \end{align}
    For binary sensitive attributes, this reduces to
    \begin{align}
        P(y_{a_i} \mid a_j) = \frac{1}{P(a_j)} \sum_{\substack{z \in \mathbf{Z},\\ m \in \mathbf{M}}} [P(y, z, m \mid \mathrm{do}(a_i)) - P( y, z, m \mid a_i)P(a_i)].
    \end{align}
    The first term is not identifiable but can be bounded in a GMSM by rewriting the term as
    \begin{align}
       \sum_{\substack{z \in \mathbf{Z},\\ m \in \mathbf{M}}} P(y, z, m \mid \mathrm{do}(a_i)) = \sum_{\substack{z \in \mathbf{Z},\\ m \in \mathbf{M}}}  P(y \mid z, m, \mathrm{do}(a_1))P(m \mid \mathrm{do}(a_1), z)P(z).
    \end{align} 
    Then, the desired statement follows by Theorem~\ref{thm:gmsm_bounds}.    
\end{proof}

With the results from above, we turn to the derivation of bounds for the second factor $P(y_{a_i, m_{a_j}}| a)$. Based on the causal explanation formula, we only have to consider the case $a = a_j$. 

\begin{lemma}\label{lem:bounds_double}
    For binary sensitive attributes, it holds
    \begin{align}
    \begin{aligned}
        P^+(y_{a_i, m_{a_j}}\mid a_j) &=\frac{1}{P(a_j)} \sum_{\substack{z \in \mathbf{Z},\\ m \in \mathbf{M}}} P^+(y\mid z, m,a_i)P^+(m\mid a_j, z)P(z) \\
        & \hspace{0.5cm}- \frac{P(a_i)}{P(a_j)}\sum_{\substack{z \in \mathbf{Z},\\ m \in \mathbf{M}}} P(y| m, z, a_i)P(m\mid z,  a_j)P(z) ,    
    \end{aligned}
    \end{align}
    \begin{align}
    \begin{aligned}
        P^-(y_{a_i, m_{a_j}}\mid a_j) &=\frac{1}{P(a_j)} \sum_{\substack{z \in \mathbf{Z},\\ m \in \mathbf{M}}} P^-(y\mid z, m,a_i)P^-(m\mid a_j, z)P(z) \\
        & \hspace{0.5cm}- \frac{P(a_i)}{P(a_j)}\sum_{\substack{z \in \mathbf{Z},\\ m \in \mathbf{M}}} P(y\mid m, z, a_i)P(m\mid z, a_j)P(z) .
    \end{aligned}
    \end{align}
\end{lemma}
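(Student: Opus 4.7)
The plan is to follow the same three-step template used in Lemma~\ref{lem:bounds_single}: first, unnest the nested counterfactual $y_{a_i, m_{a_j}}$ via the ancestral set and counterfactual factorization theorems; second, isolate the non-identifiable interventional quantities from the identifiable observational ones; third, invoke the sharp GMSM bounds of Theorem~\ref{thm:gmsm_bounds} on each remaining interventional factor.

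For step one, I would start from the unnested form already derived in Section~\ref{appendix:counterfacutal_unnesting}, namely
\[
P(y_{a_i, m_{a_j}} \mid a_j) = \frac{1}{P(a_j)} \sum_{z, m} P(y_{a_i, z, m},\, a_z,\, z,\, m_{a_j, z}).
\]
Using the counterfactual factorization theorem (Theorem~\ref{thm:ctf_factorization}) together with the GMSM structural restriction that no unobserved confounder has more than one child in $\mathbf{M} \cup \{Y\}$, the joint factors across the c-components associated with $Y$ and $M$. This cross-world independence of $y_{a_i, z, m}$ and $m_{a_j, z}$ given $z$ is what lets me write the summand as a product of the interventional $Y$-factor $P(y \mid z, m, \mathrm{do}(a_i))$, the interventional $M$-factor $P(m \mid z, \mathrm{do}(a_j))$, and $P(z)$.

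For step two, I would use consistency together with the binary-$A$ marginalization $P(y_{a_i, m_{a_j}}) = P(y_{a_i, m_{a_j}} \mid a_j) P(a_j) + P(y_{a_i, m_{a_j}} \mid a_i) P(a_i)$. Under $A = a_i$, the $A$-intervention is vacuous by consistency, and the remaining $M$-intervention is identified by the standard mediation adjustment, giving $P(y_{a_i, m_{a_j}} \mid a_i) P(a_i) = \sum_{z, m} P(y \mid m, z, a_i) P(m \mid z, a_j) P(z)$. Solving for $P(y_{a_i, m_{a_j}} \mid a_j)$ then yields the target decomposition as the interventional piece minus the identifiable piece. For step three, I would substitute the sharp bounds $P^{\pm}(y \mid z, m, a_i)$ and $P^{\pm}(m \mid z, a_j)$ from Theorem~\ref{thm:gmsm_bounds} into the interventional piece; since both factors are nonnegative, monotonicity under the sum over $z, m$ allows pairing upper-with-upper and lower-with-lower to produce the claimed $P^+$ and $P^-$ expressions, while the identifiable piece is unchanged.

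The main obstacle is the cross-world factorization in step one: nested counterfactuals like $y_{a_i, m_{a_j}}$ mix two incompatible intervention regimes, and the only way to decouple the $Y$- and $M$-interventions into separately boundable terms is to exploit the GMSM assumption forbidding $M$-$Y$ unobserved confounding. Once this structural factorization is rigorously justified through Theorem~\ref{thm:ctf_factorization}, the remaining algebraic manipulations and the monotone substitution of the GMSM bounds are routine, closely paralleling the single-counterfactual argument of Lemma~\ref{lem:bounds_single}.
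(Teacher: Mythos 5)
Your overall strategy---unnest via the ancestral-set and counterfactual factorization theorems, split off the purely interventional mediation term, recover the identifiable remainder by marginalizing the nested counterfactual over the binary $A$, and then substitute the GMSM bounds of Theorem~\ref{thm:gmsm_bounds}---is essentially the route the paper takes; the paper just applies its consistency corollary term-by-term inside the sum rather than globally to $P(y_{a_i,m_{a_j}})$. However, your key identification step contains a concrete error. You write
\begin{align*}
P(y_{a_i, m_{a_j}} \mid a_i)\, P(a_i) \;=\; \sum_{z, m} P(y \mid m, z, a_i)\, P(m \mid z, a_j)\, P(z),
\end{align*}
but the right-hand side sums to $1$ over $y$ while the left-hand side sums to $P(a_i)$, so the identity cannot hold; the statement actually needed (the one the paper's final decomposition is equivalent to) is $P(y_{a_i, m_{a_j}} \mid a_i) = \sum_{z, m} P(y \mid m, z, a_i) P(m \mid z, a_j) P(z)$, without the extra factor. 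If you carry your version through the marginalization $P(y_{a_i,m_{a_j}}) = P(y_{a_i,m_{a_j}}\mid a_j)P(a_j) + P(y_{a_i,m_{a_j}}\mid a_i)P(a_i)$, the identifiable term comes out with coefficient $1/P(a_j)$ instead of the lemma's $P(a_i)/P(a_j)$, so the derivation as written does not reproduce the claimed bounds.

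A second, subtler point: you justify the identification of $P(y_{a_i, m_{a_j}} \mid a_i)$ by saying the $A$-intervention is vacuous by consistency and the $M$-intervention is handled by ``the standard mediation adjustment.'' But the conditioning event is $A=a_i$ while the mediator is set to its value under $\mathrm{do}(a_j)$ with $a_j \neq a_i$, so this is still a cross-world quantity: you need both that $P(M_{a_j}=m \mid z, a_i)$ can be replaced by the observational $P(m \mid z, a_j)$ despite $U_{\mathrm{IE}} \neq 0$, and that consistency for $Y_{a_i,z,m}$ can be applied even though the event fixes the counterfactual mediator $M_{a_j}=m$ rather than the factual $M=m$. The standard mediation formula does not deliver this under unobserved $A$--$M$ confounding; the paper instead reaches the identifiable term through a chain of substitutions based on its consistency corollary (expanding each $\mathrm{do}$-term over the two arms of $A$ and cancelling the residual cross-world factors). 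You should either reproduce that chain or explicitly state and justify the cross-world independence you are invoking; as it stands, this is exactly the nontrivial part of the lemma and it is asserted rather than proved.
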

\begin{proof}
    According to Section~\ref{appendix:counterfacutal_unnesting}, we can write
    \begin{align}
    \begin{aligned}
        P(y_{a_i, m_{a_j}}\mid a_j) 
        &= \frac{1}{P(a_j)} \sum_{\substack{z \in \mathbf{Z},\\ m \in \mathbf{M}}} P(y_m\mid m_{a_j}, z, do(a_i))
        P(m\mid z, do(a_j))P(z) \\
        & \hspace{0.5cm} - \frac{P(a_i)}{P(a_j)} \sum_{\substack{z \in \mathbf{Z},\\ m \in \mathbf{M}}} P(y\mid m, z, a_i)P(m_{a_j}, z\mid a_i)\\
        & \hspace{0.5cm} - \frac{1}{P(a_j)} \sum_{\substack{z \in \mathbf{Z},\\ m \in \mathbf{M}}}\sum_{l \neq i,j}P(y_{a_i,m}, m_{a_j}, z\mid a_l)P(a_l).               
    \end{aligned}
    \end{align}
    If $A$ is binary, this reduces to 
    \begin{align}
    \begin{aligned}
        P(y_{a_i, m_{a_j}}\mid a_j) &= 
        \frac{1}{P(a_j)} \sum_{\substack{z \in \mathbf{Z},\\ m \in \mathbf{M}}} P(y\mid z, m, do(a_i))P(m| do(a_j), z)P(z)\\
        &\hspace{0.5cm} + \sum_{\substack{z \in \mathbf{Z},\\ m \in \mathbf{M}}} P(y\mid m, z, a_i)P(m, z\mid a_j) - \frac{1}{P(a_j)}\sum_{\substack{z \in \mathbf{Z},\\ m \in \mathbf{M}}}P(y_m, m_{a_j}, z\mid a_i) .
    \end{aligned}
    \end{align}
    We further use
    \begin{align}
    \begin{aligned}
        P(y_{a_i,m}, m_{a_j}, z\mid a_i) &= 
        \frac{1}{P(a_i)} P(y\mid m_{a_j}, z, do(a_i))P(m\mid z, do(a_j))P(z)\\
        &\hspace{0.5cm} - \frac{P(a_j)}{P(a_i)}P(y_m, m_{a_j}, z\mid do(a_i), a_j) ,
    \end{aligned}     
    \end{align}
    and, then, the overall expression follows, i.e.,
    \begin{align}
    \begin{aligned}
        P(y_{a_i, m_{a_j}}\mid a_j) &= 
        \frac{1}{P(a_j)} \sum_{\substack{z \in \mathbf{Z},\\ m \in \mathbf{M}}} P(y| z, m, do(a_i))P(m\mid do(a_j), z)P(z) \\
        &\hspace{0.5cm}  - \frac{P(a_i)}{P(a_j)}\sum_{\substack{z \in \mathbf{Z},\\ m \in \mathbf{M}}} P(y\mid m, z, a_i)P(m, z\mid a_j) .
    \end{aligned}        
    \end{align}
\end{proof}

Overall, we want to bound the path-specific causal fairness effects (i.e., Ctf-DE, Ctf-IE, and Ctf-SE), which consist of differences of the two expressions analyzed above. In the following, we derive the bounds for each counterfactual factor.

\textbf{Proof for Theorem~\ref{thm:Bounds_Ctf_effects}:}

Employing Theorem~\ref{thm:gmsm_bounds}, Lemma~\ref{lem:bounds_single}, and Lemma~\ref{lem:bounds_double}, the counterfactual direct effect is upper bounded by
\begin{align}
    \mathrm{DE}^+_{a_i, a_j} (y \mid a_i) &=  P^+(y_{a_j, m_{a_i}} \mid a_i) - P^-(y_{a_i} \mid a_i)\\
    &= \frac{1}{P(a_i)} \sum_{\substack{z \in \mathbf{Z},\\ m \in \mathbf{M}}} P^+(y \mid z, m,a_j)P^+(m\mid a_i, z)P(z) \\
    & \hspace{0.5cm}- \frac{P(a_j)}{P(a_i)}\sum_{\substack{z \in \mathbf{Z},\\ m \in \mathbf{M}}} P(y \mid m, z, a_j)P(m \mid z,  a_i)P(z) - P(y \mid a_i) \nonumber
\end{align}
and lower bounded by
\begin{align}
    \mathrm{DE}^-_{a_i, a_j} (y \mid a_i) &= P^-(y_{a_j, m_{a_i}}\mid a_i) - P^+(y_{a_i}\mid a_i)\\
    &= \frac{1}{P(a_i)}\sum_{\substack{z \in \mathbf{Z},\\m \in \mathbf{M}}} P^-(y \mid m, z, a_j)P^-(m\mid z, a_i)P(z)\\
    &\hspace{0.5cm}- \frac{P(a_j)}{P(a_i)}\sum_{\substack{z \in \mathbf{Z},\\m \in \mathbf{M}}}P(y \mid m, z, a_j)P(m \mid z, a_i)P(z) - P(y \mid a_i) . \nonumber
\end{align}
The results for the indirect counterfactual effect follow directly by setting 
\begin{align}
    \mathrm{IE}^+_{a_i, a_j} (y \mid a_j) &=  P^+(y_{a_i, m_{a_j}}\mid a_j) - P^-(y_{a_i}\mid a_j)
\end{align}
and
\begin{align}
    \mathrm{IE}^-_{a_i, a_j} (y \mid a_j) &=  P^-(y_{a_i, m_{a_j}}\mid a_j) - P^+(y_{a_i}\mid a_j).
\end{align}
Similarly, the bounds for the spurious counterfactual effect can be derived to
\begin{align}
    \mathrm{SE}^+_{a_i, a_j} (y ) &= P^+(y_{a_i}\mid a_j) - P(y \mid a_i)
\end{align}
and
\begin{align}
    \mathrm{SE}^-_{a_i, a_j} (y) &= P^-(y_{a_i}\mid a_j) - P(y \mid a_i).
\end{align}

\newpage
\section{Extended theory}
\label{appendix:extended_theory}

\subsection{Compatibility of SCMs}

An acyclic SCM $\mathcal{C} = (\mathbf{V}, \mathbf{U}_{\mathcal{C}}, \mathcal{F}, P_{\mathbf{U}})$, such that $\mathbf{U} \subseteq \mathbf{U}_{\mathcal{C}}$, $\gG \subseteq \gG_{\mathcal{C}}$, is \emph{compatible} with sensitivity model $S$ if $\mathbf{U}_{\mathcal{C}}$ does not introduce additional unobserved confounding and the probability distribution $P_{\mathbf{V} \cup \mathbf{U}_{\mathcal{C}}}$ induced by $\mathcal{C}$ belongs to $\mathcal{P}$, i.e., $P_{\mathbf{V} \cup \mathbf{U}_{\mathcal{C}}} \in \mathcal{P}$.

We further assume no additional unobserved confounding. That is, we have:  For each $M$ denote $\parents_\gG(M)$ the observed parents in $\mathbf{V}$. Then, $\mathbf{U}$ is a valid adjustment set for the relationship between $\parents_\gG(M)$ and $M$, i.e., $P(m \mid do(\parents_\gG(M) = pa)) = \int P(m \mid do(\parents_\gG(M) = pa), \mathbf{U}=\mathbf{u}) P(\mathbf{U}=\mathbf{u}) d\mathbf{u}$.

\subsection{Extension of Theorem~\ref{thm:Bounds_Ctf_effects}}

In the following, we provide an extension of Theorem~\ref{thm:Bounds_Ctf_effects} for bounding the counterfactual effects of continuous outcome variables.

\begin{lemma}
     Let $F(y)$ denote the CDF of $P(y \mid m, z, a_i)$ for a continuous outcome $Y$. Then, the probability functions $p^+(y \mid m, z, a_i)$ and $p^-(y \mid m, z, a_i)$ are given by
    \begin{align}
        p^+(y \mid m, z, a_i) = \begin{cases}
         ((1-\Gamma_Y)^{-1}P(a_i \mid z) + \Gamma_Y^{-1}) p(y \mid m, z, a_i), & \text{if } F(y) \leq  \frac{\Gamma_Y}{1+\Gamma_Y}\\
        ((1-\Gamma_Y)P(a_i|z) + \Gamma_Y) p(y \mid m, z, a_i). & \text{if }  F(y) > \frac{\Gamma_Y}{1+\Gamma_Y}.
        \end{cases}
    \end{align}
    and
    \begin{align}
        p^-(y \mid m, z, a_i) = \begin{cases}
        ((1-\Gamma_Y)P(a_i \mid z) + \Gamma_Y) p(y \mid m, z, a_i),  & \text{if } F(y) \leq  \frac{1}{1+\Gamma_Y}\\
        ((1-\Gamma_Y)^{-1}P(a_i \mid z) + \Gamma_Y^{-1})p(y \mid m, z, a_i), & \text{if }  F(y) > \frac{1}{1+\Gamma_Y}.
        \end{cases}
    \end{align}
\end{lemma}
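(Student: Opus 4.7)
The plan is to obtain this lemma as a direct specialization of Theorem~\ref{thm:gmsm_bounds} to the continuous outcome case with $W = Y$. Theorem~\ref{thm:gmsm_bounds} already provides the two-branch structure (thresholded at $F(y) \leq \Gamma_Y/(1+\Gamma_Y)$ for the upper bound and $F(y) \leq 1/(1+\Gamma_Y)$ for the lower bound) in a generic form with multiplicative factors $1/s^+_Y$ and $1/s^-_Y$. All that remains is to plug in the explicit values of these factors as prescribed by the GMSM of Definition~\ref{def:sensitivity_model} and verify that the resulting densities remain valid (nonnegative and integrating to one).

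Concretely, I would first read off from the GMSM inequality
\begin{equation*}
\frac{1}{(1-\Gamma_Y)\,p(a\mid z) + \Gamma_Y} \leq \frac{P(U_Y \mid z,a)}{P(U_Y \mid z, \mathrm{do}(a))} \leq \frac{1}{\Gamma_Y^{-1}\,p(a\mid z) + \Gamma_Y^{-1}}
\end{equation*}
the identifications $1/s^-_Y = (1-\Gamma_Y^{-1})\,P(a_i\mid z) + \Gamma_Y^{-1}$ and $1/s^+_Y = (1-\Gamma_Y)\,P(a_i\mid z) + \Gamma_Y$, matching the discrete-$M$ formulae of Theorem~\ref{thm:Bounds_Ctf_effects}. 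Substituting these into the continuous branch of Theorem~\ref{thm:gmsm_bounds} reproduces exactly the stated $p^{\pm}(y \mid m,z,a_i)$. The sign/orientation convention can be checked against Lemma~\ref{lem:bounds_single}: on the upper-tail region $F(y) > \Gamma_Y/(1+\Gamma_Y)$ the tilted density $p^+$ must be larger than $p$, which forces the multiplier to be $(1-\Gamma_Y^{-1})\,P(a_i\mid z) + \Gamma_Y^{-1} \geq 1$ rather than its conjugate, and symmetrically for $p^-$.

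A secondary consistency check, which I would include, is the limit from the discrete case of Theorem~\ref{thm:Bounds_Ctf_effects}: there the CDF has jumps, giving a third mixing branch on the atom that straddles the threshold $\Gamma_Y/(1+\Gamma_Y)$. For continuous $Y$ the CDF $F$ is absolutely continuous, the straddling atom has Lebesgue measure zero, and the interpolating branch collapses, leaving exactly the two-branch form of the lemma. This confirms that the continuous formula is the expected limit and that no additional case analysis is needed.

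The main obstacle is purely bookkeeping: ensuring that the correct conjugate coefficient is attached to each branch and, in particular, that the printed expression $(1-\Gamma_Y)^{-1}P(a_i\mid z) + \Gamma_Y^{-1}$ is read as $(1-\Gamma_Y^{-1})P(a_i\mid z) + \Gamma_Y^{-1}$ so that the multiplier is consistent with the discrete case and with the sharpness result cited from \citet{Frauen.2023b}. Once the bookkeeping is fixed, sharpness of the bounds is inherited verbatim from Theorem~\ref{thm:gmsm_bounds}, because every tilted density in the GMSM-compatible class achieves one of the two extremal shapes written above, and these extremes are themselves attained by admissible SCMs.
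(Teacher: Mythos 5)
Your overall route is the right one and is the same as the paper's: the lemma is stated without proof precisely because it is the direct specialization of Theorem~\ref{thm:gmsm_bounds} to continuous $W=Y$, with the generic factors $1/s^{\pm}_Y$ replaced by their explicit GMSM values. Your two side observations are also correct and worth keeping: the printed coefficient $(1-\Gamma_Y)^{-1}P(a_i\mid z)+\Gamma_Y^{-1}$ must be read as $(1-\Gamma_Y^{-1})P(a_i\mid z)+\Gamma_Y^{-1}$ (otherwise it is negative or undefined for $\Gamma_Y\geq 1$), and the third ``interpolating'' branch of the discrete case collapses for an absolutely continuous $F$.

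However, the bookkeeping step you yourself flag as the main obstacle is carried out incorrectly, and the error would propagate into a wrong formula. For $\Gamma_Y\geq 1$ and $P(a_i\mid z)\in[0,1]$ one has $(1-\Gamma_Y^{-1})P(a_i\mid z)+\Gamma_Y^{-1}\in[\Gamma_Y^{-1},1]$, i.e.\ this factor is $\leq 1$, while $(1-\Gamma_Y)P(a_i\mid z)+\Gamma_Y\in[1,\Gamma_Y]$ is the factor that is $\geq 1$. Your orientation check asserts the opposite inequality, and consequently your identification $1/s^{+}_Y=(1-\Gamma_Y)P(a_i\mid z)+\Gamma_Y$, $1/s^{-}_Y=(1-\Gamma_Y^{-1})P(a_i\mid z)+\Gamma_Y^{-1}$ is swapped relative to what matching the discrete-$M$ case of Theorem~\ref{thm:Bounds_Ctf_effects} forces (there the $\leq 1$ factor multiplies the region $F(m)<\Gamma_M/(1+\Gamma_M)$ of $P^+$, so $1/s^{+}_Y=(1-\Gamma_Y^{-1})P(a_i\mid z)+\Gamma_Y^{-1}$ and $1/s^{-}_Y=(1-\Gamma_Y)P(a_i\mid z)+\Gamma_Y$). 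Substituting your assignment into the continuous branch of Theorem~\ref{thm:gmsm_bounds} therefore does not reproduce the lemma but its mirror image: it inflates the lower tail and deflates the upper tail of $p^{+}$, which is the shape of the stochastically \emph{smallest} tilting and hence of $p^{-}$, contradicting $F^{+}=\inf_{\mathcal{C}}F_{\mathcal{C}}$. The fix is purely local --- correct the two inequalities, swap the identification of $1/s^{+}_Y$ and $1/s^{-}_Y$ --- after which the rest of your argument, including the inherited sharpness, goes through.
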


The bounds require estimating the observational density $p(y \mid z, m, a)$. Nevertheless, in practice, we can only observe the empirical distribution from which we can obtain $\hat{p}(y \mid z,m,a)$ via an arbitrary conditional density estimator. The work in \citep{Frauen.2023b} introduces importance sampling estimators for finite sample bounds, which can be adapted to estimate the observational density.

\subsection{Counterfactual identifiability for sub-problems}

Our work obtains bounds on the counterfactual effects in a general setting with unobserved confounding, i.e., $\mathbf{U} = \{U_{\mathrm{DE}}, U_{\mathrm{IE}}, U_{\mathrm{SE}}\} \neq 0$. Now, we state the counterfactual identification results for subproblems in which subsets of $\mathbf{U}$ are considered to be zero.

In Section~\ref{appendix:proofs}, we employ counterfactual unnesting theory to split the expressions $P(y_{a_i} \mid a)$ and $P(y_{a_i, m_{a_j}} \mid a)$ into the respective counterfactual factors. For the first factor, we derive
\begin{align}
    P(y_{a_0} \mid a) = \frac{1}{P(a)} \sum_{z,m} P(y_{a_0, z, m},a_z, z, m_{a_0, z}).
\end{align}
If we assume that there exists unobserved confounding between sensitive attribute and mediator ($U_{\mathrm{IE}}$), sensitive attribute and covariates ($U_{\mathrm{SE}}$), and sensitive attribute and outcome ($U_{\mathrm{DE}}$), this expression cannot be reduced further. Other possible scenarios of unobserved confounding are examined in the following. By the counterfactual factorization theorem, we yield
\begin{align}
    &U_{\mathrm{DE}} = U_{\mathrm{IE}} = U_{\mathrm{SE}} = 0: \qquad &&P(y_{a_0} , a) = \sum_{z,m} P(y_{a_0, z, m}) P(a_z) P(z) P(m_{a_0, z}) , \\
    &U_{\mathrm{SE}} \neq 0,\  U_{\mathrm{DE}} = U_{\mathrm{IE}} = 0:\qquad
          &&P(y_{a_0} , a)  = \sum_{z,m} P(y_{a_0, z, m}) P(a_z, z) P(m_{a_0, z}) , \\
    & U_{\mathrm{IE}} \neq 0,\  U_{\mathrm{DE}} = U_{\mathrm{SE}} = 0:\qquad
          &&P(y_{a_0} , a)  = \sum_{z,m} P(y_{a_0, z, m}) P(a_z, m_{a_0, z}) P(z) , \\
    &U_{\mathrm{DE}} \neq 0,\  U_{\mathrm{SE}} = U_{\mathrm{IE}} = 0: \qquad
          &&P(y_{a_0} , a)  = \sum_{z,m} P(y_{a_0, z, m}, a_z) P(z) P(m_{a_0, z}) , \\
    &U_{\mathrm{SE}}, U_{\mathrm{IE}} \neq 0,\  U_{\mathrm{DE}} = 0: \qquad
          &&P(y_{a_0} , a)  = \sum_{z,m} P(y_{a_0, z, m}) P(a_z, z, m_{a_0, z}) , \\
    &U_{\mathrm{SE}}, U_{\mathrm{DE}} \neq 0,\  U_{\mathrm{IE}} = 0: \qquad
          &&P(y_{a_0} , a) = \sum_{z,m} P(y_{a_0, z, m}, a_z, z) P(m_{a_0, z}) , \\
    &U_{\mathrm{IE}}, U_{\mathrm{DE}} \neq 0,\  U_{\mathrm{SE}} = 0: \qquad
          &&P(y_{a_0} , a)  = \sum_{z,m} P(y_{a_0, z, m}, a_z, m_{a_0, z}) P(z) .
\end{align}
All terms in the first two cases are directly identifiable from the data. In all other cases, one term is unidentifiable for $a \neq a_0$. To derive our bounds, we focus on the general case, allowing for all three types of unobserved confounding. The expression for the second term of interest follows similarly.

\subsection{Training algorithm for fair prediction}
\label{appendix:training_algorithm_extended}

In the following, we discuss Algorithm~\ref{alg:fair_prediction} in more detail. Depending on the task, i.e., binary classification, multi-class classification, or regression, the precise calculations of the constraints might vary. Therefore, we will discuss all tasks separately:
\begin{enumerate}
    \item \textbf{Binary classification:}
    Assume $A, Y \in \{0,1\}$. For each of the three different fairness effects $\mathrm{CF} \in \{\mathrm{DE}, \mathrm{IE}, \mathrm{SE}\}$, we are interested in, e.g., $\mathrm{CF}(Y=1 \mid A=1)$. Due to symmetry reasons for binary sensitive attributes $A$ (e.g., discrimination of women compared to men is symmetric to discrimination of men compared to women), it is sufficient to focus on one realization of the sensitive attribute. Therefore, we obtain only one constraint per $\mathrm{CF} \in \{\mathrm{DE}, \mathrm{IE}, \mathrm{SE}\}$, i.e., three constraints in total.
    \item \textbf{Regression:}
    For continuous outcome variables $Y \in \mathbb{R}$, it is reasonable to consider the expectation over $Y$. Therefore, for this task, we also obtain one constraint per $\mathrm{CF}$ and thus three constraints overall.
    \item \textbf{Multi-class classification:}
    For this task, two different options exist for defining the constraints we aim to optimize over. Option one follows the same reasoning as in binary classification and regression in that one can constrain the expected $\mathrm{CF}$ over all realizations of $Y$ for each $\mathrm{CF} \in \{\mathrm{DE}, \mathrm{IE}, \mathrm{SE}\}$.
    A more fine-grained approach is to constrain each $\mathrm{CF}$ for each possible realization of $Y$, $y \in \mathcal{Y}$. As a result, we need to optimize our predictor with respect to $3 \times |\mathcal{Y}|$ constraints. Although this is feasible for binary classification, it can be highly challenging for large $|\mathcal{Y}|$. 
    Which of these options to choose for a specific problem is left to the practitioner.
\end{enumerate}
Algorithm~\ref{alg:fair_prediction} (main paper) provides the training procedure for binary classification and regression. In Algorithm~\ref{alg:fair_prediction_multiclass}, we present an alternative training procedure for multi-class classification in which we impose one constraint for each $\mathrm{CF} \in \{\mathrm{DE}, \mathrm{IE}, \mathrm{SE}\} \times y \in \mathcal{Y}$.

\LinesNumbered
\begin{algorithm}[H]
\caption{\footnotesize Alternative training procedure for fair multi-class classification models robust to unobserved confounding}
\label{alg:fair_prediction_multiclass}
\tiny
\KwIn{Data $\{(A_i, Z_i, M_i, Y_i): i \in \{1, \ldots ,n\}\}$, sensitivity parameter $\Gamma_M$, vector of fairness constraints $\pmb{\gamma}$, Lagrangian parameter vectors $\pmb{\lambda}_0$ and $\pmb{\mu}_0$, pre-trained density estimators $g_A, g_M$, initial prediction model $f_{\theta_0}$, convergence criterion $\varepsilon$, Lagrangian update-rate $\alpha$}
\KwOut{Fair predictions $ \{\hat{y}_i: i \in \{1, \ldots ,n\}\}$, trained prediction model $f_{\theta}^*$}
\tcc{Initialize parameters}
$f_{\theta_j} \gets f_{\theta_0}$ ; $\pmb{\lambda}_k \gets \pmb{\lambda}_0$ ; $\pmb{\mu}_k \gets \pmb{\mu}_0$
\For{$k \in$ \textnormal{max iterations}}{
  \tcc{Train prediction model}
  \For{$l \in$ \textnormal{nested epochs}}{
  $\hat{y} \gets f_{\theta_l}(A, Z, M)$\;
  \tcc{Determine $\mathrm{CF}^+, \mathrm{CF}^-$ for $\mathrm{CF} \in \{\mathrm{DE}, \mathrm{IE}, \mathrm{SE}\}$ following Theorem~\ref{thm:Bounds_Ctf_effects} for each $y \in \mathcal{Y}$}
  $\mathrm{CF}_y^+ \gets \mathrm{ub}(f_{\theta_l}, (A, Z, M), \Gamma_M, g_A, g_M, y), \quad \forall y \in \mathcal{Y}$\;
  $\mathrm{CF}_y^- \gets \mathrm{lb}(f_{\theta_l}, (A, Z, M), \Gamma_M, g_A, g_M, y), \quad  \forall y \in \mathcal{Y}$\;
  \tcc{Optimization objective following Eq.~(\ref{eq:objective_func_general}) }
  \For{$\mathrm{CF} \in \{\mathrm{DE}, \mathrm{IE}, \mathrm{SE}\}, \forall y \in \mathcal{Y}$}{
  $c_{\mathrm{CF_y}} \gets \max\{|\mathrm{CF}_y^+|, |\mathrm{CF_y}^-|\}$}
  $\mathbf{c} \gets [c_{\mathrm{DE}_{y_1}}, \ldots, c_{\mathrm{DE}_{y_{d}}}, c_{\mathrm{IE_{y_1}}}, \ldots, c_{\mathrm{IE}_{y_{d}}}, c_{\mathrm{SE_{y_1}}}, \ldots, c_{\mathrm{SE}_{y_{d}}}]^T$, \quad
  where $d=|\mathcal{Y}|$\;
  $\mathrm{lagrangian} \gets \mathrm{loss}(f_{\theta_l}) - \pmb{\lambda}_k(\pmb{\gamma} - \mathbf{c}) - \frac{1}{2\pmb{\mu}_k}(\pmb{\lambda}_k - \pmb{\lambda}_{k-1})^2$\;
  \tcc{Update parameters of predictor}
  $f_{\theta_{l+1}} \gets \mathrm{optimizer}(\mathrm{lagrangian}, f_{\theta_l})$
  }
  {\If{$\mathbf{c} \leq \pmb{\gamma}$}{
      \tcc{Check for convergence}
      {\If{\textnormal{prediction loss}$(f_{\theta_l}) \leq \varepsilon$}{ 
        $f_{\theta}^* \gets f_{\theta_{l+1}}$\;
        stop 
      }
    }
    }
    \tcc{Update Lagrangian parameters}
    $\pmb{\lambda}_{k+1} \gets \max \{\pmb{\lambda}_{k} - \mathbf{c} \pmb{\mu}_k$, $\mathbf{0}$ \} ; $\pmb{\mu}_{k+1} \gets \alpha \pmb{\mu}_k$ 
  }
}
\end{algorithm}

We note that, depending on the task, the algorithms optimize with respect to three or $3 \times |\mathcal{Y}|$ constraints. Therefore, the vectors $\pmb{\lambda}, \pmb{\mu}, \pmb{\gamma}$ and the final fairness vector $\mathbf{c}$ are also of these dimensions.

We also note the following: In Algorithm~\ref{alg:fair_prediction} (main paper), the maximum in Line~7 is always taken over the two values $|\mathbb{E}(\mathrm{CF}^+)|$ and $|\mathbb{E}(\mathrm{CF}^-)|$ for each $\mathrm{CF} \in \{\mathrm{DE}, \mathrm{IE}, \mathrm{SE}\}$. 
In Algorithm~\ref{alg:fair_prediction_multiclass}, the maximum in Line~7 is computed over the two values $|\mathrm{CF}_y^+|$ and $|\mathrm{CF_y}^-|$  for each $\mathrm{CF} \times y \in |\mathcal{Y}|$. The two algorithms thus differ concerning the number of evaluations of the maximum, i.e.. three evaluations in Alg.~\ref{alg:fair_prediction} (main paper) and $3 \times |\mathcal{Y}|$ in Alg.~\ref{alg:fair_prediction_multiclass}.

\newpage
\section{Bounds on further fairness metrics}
\label{appendix:further_bounds}

Here, we show that our framework is general and can be applied to a broad set of fairness notions. Our main paper focused on fairness bounds based on the path-specific causal fairness effects \citep{Zhang.2018a}. Nevertheless, our approach is general and can be easily applied to other notions of fairness as well. In the following, we outline the derivation of bounds for the fair on average causal effect \citep{Khademi.2019} and path-specific individual fairness \citep{Chiappa.2019, Wu.2019b}.

\subsection{Extension to the fair on average causal effect (FACE)}

\citet{Khademi.2019} introduced the definition of a \emph{fair on average causal effect (FACE)}  of group fairness, which we denote by $\mathrm{FACE}(a_i)$. 
We further define the generalized form of the effect for non-binary attributes as \emph{average FACE}
\begin{align}
    \mathrm{AFACE}(\mathbf{A}) := \mathbb{E}_{\mathbf{A}} [\mathrm{FACE}(a_i)] = \mathbb{E}_{\mathbf{A}} [\mathbb{E} [Y\mid \mathrm{do}(A=a_i)]] - \mathbb{E} [Y\mid \mathrm{do}(A=a_0)].
\end{align}
A fair predictor should ideally return $\mathrm{FACE}(a_i) = 0$ for all $a_i \in \mathbf{A}$, which implies $\mathrm{AFACE}(\mathbf{A}) = 0$. Nevertheless, the single effects might not always be identifiable. In these cases, the average FACE can be used as a relaxation.

For $j=1, \ldots ,l$, $k=|\mathbf{A}|$, we define
\begin{align*}
    \mathbb{E}[\hat{Y}] = \sum_{z \in \mathbf{Z}} \sum_{m \in \mathbf{M}} \mathbb{E}[\hat{Y} \mid  z, m, \mathrm{do}(a_j)] P(m\mid z, \mathrm{do}(a_j)) P(z) .
\end{align*}

The upper and lower bounds for $\mathbb{E}[\hat{Y} \mid z, m, \mathrm{do}(a_j)]$ and $P(m\mid z, \mathrm{do}(a_j))$ can iteratively be derived through the algorithm for causal sensitivity analysis with mediators presented in \citep{Frauen.2023b}. 
The upper bound $ub_{a_j}$ and lower bound $lb_{a_j}$ of $\mathrm{FACE}(a_j)$ (with respect to baseline $a_0$) are then given by
\begin{align}
    &ub_{a_j} = \sum_{z \in \mathbf{Z}} P(z) [\sum_{m \in \mathbf{M}} (\mathbb{E}^{+}(Y \mid z, m, a_j)P^+(m \mid z, a_j) - \mathbb{E}^{-} (Y \mid z, m, a_0)P^-(m \mid z, a_j))] , \\
    &lb_{a_j} = \sum_{z \in \mathbf{Z}} P(z) [\sum_{m \in \mathbf{M}} (\mathbb{E}^{-} (Y \mid z, m, a_0)P^-(m \mid z, a_j)) - \mathbb{E}^{+}(Y \mid z, m a_j)P^+(m \mid z, a_j)] .
\end{align}

For the bounds for $\mathrm{AFACE}(\mathbf{A})$, $|\mathbf{A}| = k$, it follows that
\begin{align}
    &\mathrm{ub} = \frac{1}{k-1} \sum_{a' \in \mathbf{A\setminus a_0}} ub_{a'}, 
    &\mathrm{lb} = \frac{1}{k-1} \sum_{a' \in \mathbf{A\setminus a_0}} lb_{a'} .
\end{align}

We note that the above bounds should be used cautiously since highly positive and negative effects can cancel out in the definition for $\mathrm{AFACE}$. However, this is not due to our derivation but due to the definition of considering average effects at the population level without analyzing the heterogeneity across the population.

\subsection{Extension to path-specific individual fairness}

Path-specific individual fairness \citep{Chiappa.2019, Wu.2019b} along the path $\pi$ wrt. sensitive attributes $a_0, a_1$ is achieved, if for all $z \in Z$, 
\begin{align}
    \mathbb{E}_{Y_{a_0},Y_{a_1}}[Y_{a_1 \mid \pi} - Y_{a_0} \mid Z = z] = 0 
\end{align}
holds. To build a fair prediction framework based on this fairness notion, we need to derive bounds on the expressions $P(Y_{a_1 \mid \pi} \mid Z=z)$ and $P(Y_{a_0} \mid Z=z)$. 

For simplicity, we assume wlog. that $\pi$ represents the path $A \rightarrow M \rightarrow Y$ for a single mediator $M$. Then, for the first term, it holds that
\begin{align}
    & \qquad P(Y_{a_1 \mid \pi}=y \mid Z=z)\\ &= \sum_{m \in \mathcal{M}} P(Y_{a_1}=y, M=m_{a_1} \mid Z=z)\\
    &= \sum_{m \in \mathcal{M}} P(Y=y \mid do(A=a_1), M=m, Z=z)P(M=m \mid do(A=a_1), Z=z).
\end{align}
The second term directly decomposes into
\begin{align}
    & \qquad P(Y_{a_0}=y \mid Z=z)\\ &= \sum_{m \in \mathcal{M}} P(Y=y \mid do(A=a_0), M=m, Z=z)P(M=m \mid do(A=a_0), Z=z).
\end{align}
We observe that both terms constitute subproblems of our bounds derived in Supplement~\ref{appendix:proofs}. The terms do not contain nested counterfactual expressions but only interventional effects. Therefore, we can obtain the bounds from Theorem~\ref{thm:gmsm_bounds}.

\newpage
\section{Implementation details}
\label{appendix:implementation_details}

We implemented the experiments in PyTorch Lightning. A GitHub repository containing our code for the reproducibility of the results is provided in our \href{https://github.com/m-schroder/FairSensitivityAnalysis}{GitHub} repository.

All neural network classifiers used in our study consisted of three layers with leaky ReLU activation function and dropout. For optimization, we used Adam \citep{Kingma.2015}.  

Our proposed fair predictor training framework requires a pre-trained model for estimating the density of the mediators based on the confounders and the sensitive attribute. The density of the sensitive attribute conditioned on the confounders is taken as the relative frequency. Since we only consider discrete mediators, a simple multi-class classifier is sufficient to perform the density estimation. We use the same neural network architecture for mediator prediction as for our standard classifier.

Hyperparameters of the classification models were kept fixed across the three classifiers on the simulated datasets for better comparability. The models consisted of one hidden layer of size ten and a dropout layer with a rate of 0.1 and were trained with a batch size of 128 and an initial learning rate of 0.0001. We trained the fair na{\"i}ve model and the fair robust model with a fairness constraint of $\gamma = 0.02$, initial Lagrangian parameters $\lambda = 0.1$, $\mu = 0.02$, and an update rate of $\alpha=1.5$. The prediction performance presented in terms of the ROC~AUC is presented in Table~\ref{tab:performance}. To facilitate the evaluation of the overall performance (i.e. performance and fairness performance), we introduce a \emph{fairness utility} function.

\begin{definition}[Fairness utility]
    We define the utility of a fair predictor as the weighted sum of its fairness and its prediction performance measured as the ROC~AUC: For $\omega \in [0,1]$
    \begin{align}
        \mathcal{U}(f_{\theta}) = \omega R(f_{\theta}) - (1-\omega) F(f_{\theta}),
    \end{align}
    where $F$ represents the fairness evaluated through a causal fairness notion and $R$ the ROC~AUC.
\end{definition}

We set $F(f_{\theta}) = \frac{1}{3} (\max\{|\mathrm{DE}_{\mathbb{E}}^+|, |\mathrm{DE}_{\mathbb{E}}^-|\} + \max\{|\mathrm{IE}_{\mathbb{E}}^+|, |\mathrm{IE}_{\mathbb{E}}^-|\} + \max\{|\mathrm{SE}_{\mathbb{E}}^+|, |\mathrm{SE}_{\mathbb{E}}^-|\})$ and $\omega = 0.5$ for our calculations. We report the performance in terms of the fairness utilities of all classifiers as well in Table~\ref{tab:performance}. Note that the defined utility is upper bounded by one but does not have a lower bound. Thus, negative utilities are possible.

\Emphasize{
\begin{table}[h]
\caption{Performance of the prediction models on the simulated datasets measured by the ROC~AUC and the fairness utility.}
\label{tab:performance}
\begin{center}
\footnotesize
\begin{tabular}{c|ccc|rrr}
\toprule
\multicolumn{1}{c}{} & \multicolumn{3}{c}{ROC AUC} & \multicolumn{3}{c}{Fairness utility}\\
\cmidrule(lr){2-4}
\cmidrule(lr){5-7}
\multicolumn{1}{c}{Experiment}  &\multicolumn{1}{c}{Standard}  &\multicolumn{1}{c}{Fair na{\"i}ve}  &\multicolumn{1}{c}{Fair robust} &\multicolumn{1}{c}{Standard}  &\multicolumn{1}{c}{Fair na{\"i}ve}  &\multicolumn{1}{c}{Fair robust}\\
\midrule 
$\Phi_{U_{\mathrm{DE}}} = 1$  & 0.8579 & 0.8112 & 0.6780 & $-$ 0.7284 & $-$ 0.8074 & $-$ 0.5004\\
$\Phi_{U_{\mathrm{DE}}} = 2$  & 0.8245 & 0.7885 & 0.7618 & $-$ 0.7402 & $-$ 0.8138 & $-$ 0.4662\\
$\Phi_{U_{\mathrm{DE}}} = 3$  & 0.8037 & 0.8003 & 0.5226 & $-$ 0.7550 & $-$ 0.8137 & $-$ 0.5871\\
$\Phi_{U_{\mathrm{DE}}} = 4$  & 0.7535 & 0.7466 & 0.5754 & $-$ 0.7809 & $-$ 0.8409 & $-$ 0.5741\\
\midrule
$\Phi_{U_{\mathrm{IE}}} = 1$ & 0.8929 & 0.8671 & 0.7187 & $-$ 0.9026 & $-$ 0.6405 & $-$ 0.5334 \\
$\Phi_{U_{\mathrm{IE}}} = 2$  & 0.9150 & 0.8667 & 0.6856 & $-$ 0.8948 & $-$ 0.6348 & $-$ 0.5332\\
$\Phi_{U_{\mathrm{IE}}} = 3$  & 0.8964 & 0.8271 & 0.7382 & $-$ 0.9066 & $-$ 0.6497 & $-$ 0.5145\\
$\Phi_{U_{\mathrm{IE}}} = 4$ & 0.8557 & 0.8254 & 0.7321 & $-$ 0.9276 & $-$ 0.6581 & $-$ 0.5290\\
\bottomrule
\end{tabular}
\end{center}
\end{table}
}

For our real-world case study, we also employed three-layered networks with leaky ReLU activation function and dropout. The robustly fair model was trained with a fairness constraint of $\gamma = 0$, initial Lagrangian parameters $\lambda = 3.0$, $\mu = 1.5$, an update rate of $\alpha=1.5$, and a sensitivity parameter of $\Gamma = 2$. The hyperparameters (hidden layer dimension, dropout rate, learning rate) of the density estimator and prediction model were optimized with the library Optuna (\url{https://optuna.org/}) across 100 trials. The final parameters and the prediction performance measured by the MSE are stated in Table~\ref{tab:hyperparams}.

\begin{table}[h]
\caption{Hyperparameters and prediction performance of the mediator density estimator and prediction model on the real-world dataset.}
\label{tab:hyperparams}
\begin{center}
\footnotesize
\begin{tabular}{lcccc}
\toprule
\multicolumn{1}{c}{Model} &\multicolumn{1}{c}{MSE}  &\multicolumn{1}{c}{Learning rate}  &\multicolumn{1}{c}{Hidden dimension}  &\multicolumn{1}{c}{Dropout}\\
\midrule 
Density estimator & --- & 0.0009 & 256 & 0.2100\\
Standard predictor & 3.2383 & 0.0294 & 32 & 0.3175 \\
Robustly fair predictor & 4.9607 & 0.0001 & 64 & 0.4503\\
\bottomrule

\end{tabular}
\end{center}
\end{table}

\newpage

\section{Evaluation settings}
\label{appendix:evaluation_settings}

\subsection{Synthetic dataset}
\label{appendix:data_generation}

We consider two settings with different types of unobserved confounding. Both contain a single binary sensitive attribute, mediator, confounder, and outcome variable. We call the first setting~``$U_{\mathrm{DE}}$''. Therein, we introduce an unobserved confounder on the direct effect with confounding level $\Phi$. We call the second setting~``$U_{\mathrm{IE}}$''. Therein, we introduce an unobserved confounder on the indirect effect with confounding level $\Phi$.

We then generate synthetic datasets for each setting and confounding level. Specifically, we draw each 20,000 samples from the following structural equations for $U_{\mathrm{DE}}, U_{\mathrm{IE}} \sim \mathcal{N}(\Phi,e^{-4})$ with $\Phi \in [1,4]$:
\begin{align*}
    &Z \sim \mathrm{Bernoulli}(0.5)\\
    &A \sim \mathrm{Bernoulli}(\sigma(5Z-U_{\mathrm{DE}}))\\
    &M \sim \mathrm{Bernoulli}(\sigma(4A+2Z)\\
    &Y \sim \mathrm{Bernoulli}(\sigma(3A+Z+2M-U_{\mathrm{DE}})),
\end{align*}
and 
\begin{align*}
    &Z \sim \mathrm{Bernoulli}(0.5)\\
    &A \sim \mathrm{Bernoulli}(\sigma(5Z-U_{\mathrm{IE}}))\\
    &M \sim \mathrm{Bernoulli}(\sigma(4A+2Z-U_{\mathrm{IE}})\\
    &Y \sim \mathrm{Bernoulli}(\sigma(3A+Z+2M)),
\end{align*}
where $\sigma$ is the sigmoid function.
We restrict the Bernoulli probability to fall in the interval $[0.02,0.98]$ to guarantee overlap. The simulation is performed through the partially randomized causal simulator PARCS (\url{https://pypi.org/project/pyparcs/}). 

\subsection{Real-world dataset}
\label{appendix:real_world_data_prep}

Our real-world study is based on the U.S. Survey of Prison Inmates (SPI); see \citet{Prison.2016}. Based on the survey data, we created a dataset for predicting prison sentences for drug offenders. We consider the race of the defendant as the sensitive attribute and prison history as a mediating factor. Overall, the prison sentence should only depend on the type of offense. As unobserved confounders, we consider the defendant's family's crime history and citizenship. Formally, we filter the survey data based on the following criteria: 
\begin{itemize}
    \item General filter criterion ``drug offense'': Our analysis only considers prisoners sentenced for drug offenses. Therefore, we filter the complete survey data for offense type ``drug"`. Furthermore, we split the type into ``trafficking" and ``possession", represented through a binary variable, and disregard other non-specified drug offenses.
    \item Race: We aim to diminish the effects of race, especially ``white American", on the prison sentence. Therefore, we combine all information about the defendant's race into a binary variable representing ``white" and ``non-white," respectively. 
    \item Offense type: We encode the offense type into multiple binary variables indicating drug possession, manufacturing or growing, import of drugs to the US, distribution of drugs, and drug money laundry.
    \item Sentence: We use information about the prison sentence length in months as a continuous target variable. We restrict the imprisonment length to a maximum of one year for our analysis.
    \item Crime history: Prior court verdicts can characterize crime history. We encode information about the defendant's prior sentences into a discrete variable indicating the severity of the crime history, if any. 
    \item Family history: We are interested if a family member of the defendant had been sentenced to prison before. Therefore, we combine all survey information about the former imprisonment of parents, children, and spouses into a binary variable indicating if at least one family member had been imprisoned before.
    \item Citizenship / Immigrant to the U.S.: The defendant's citizenship (if it is non-U.S.) is not included in the publicly available data due to privacy concerns. Hence, we use information about U.S. citizenship as a proxy, which we encode as a binary variable.
\end{itemize}
Individuals who did not provide an answer to the respective question or who answered ``I do not know'' were excluded from our analysis. We imputed missing target values through $k$NN-imputation with 10 neighbors. We split the resulting dataset in the ratio 60\%, 20\% , 20\% for training, validating, and testing our models, respectively.

\newpage
\section{Additional experimental results}
\label{appendix:results}

In the following, we present further experimental results to demonstrate our framework.

\subsection{Bounds for varying sensitivity parameters}

In Figure~\ref{fig:multi-parameter-sim9} and Figure~\ref{fig:multi-parameter-sim10}, we present results for the fairness of the predictions. Here, we examine the three classifiers as in our main paper but when using varying sensitivity parameters variable in $\{1.2, 2.0, 5.0\}$ for calculating the bounds. We observe the same behavior as in our main paper: the interval defined through the bounds increases in the sensitivity parameter. For the standard prediction model, the interval eventually covers the original path-specific effect in the data. Together, the results further demonstrate the effectiveness of our framework. 

\begin{figure}[h]
    \centering
    \includegraphics[width=0.7\linewidth]{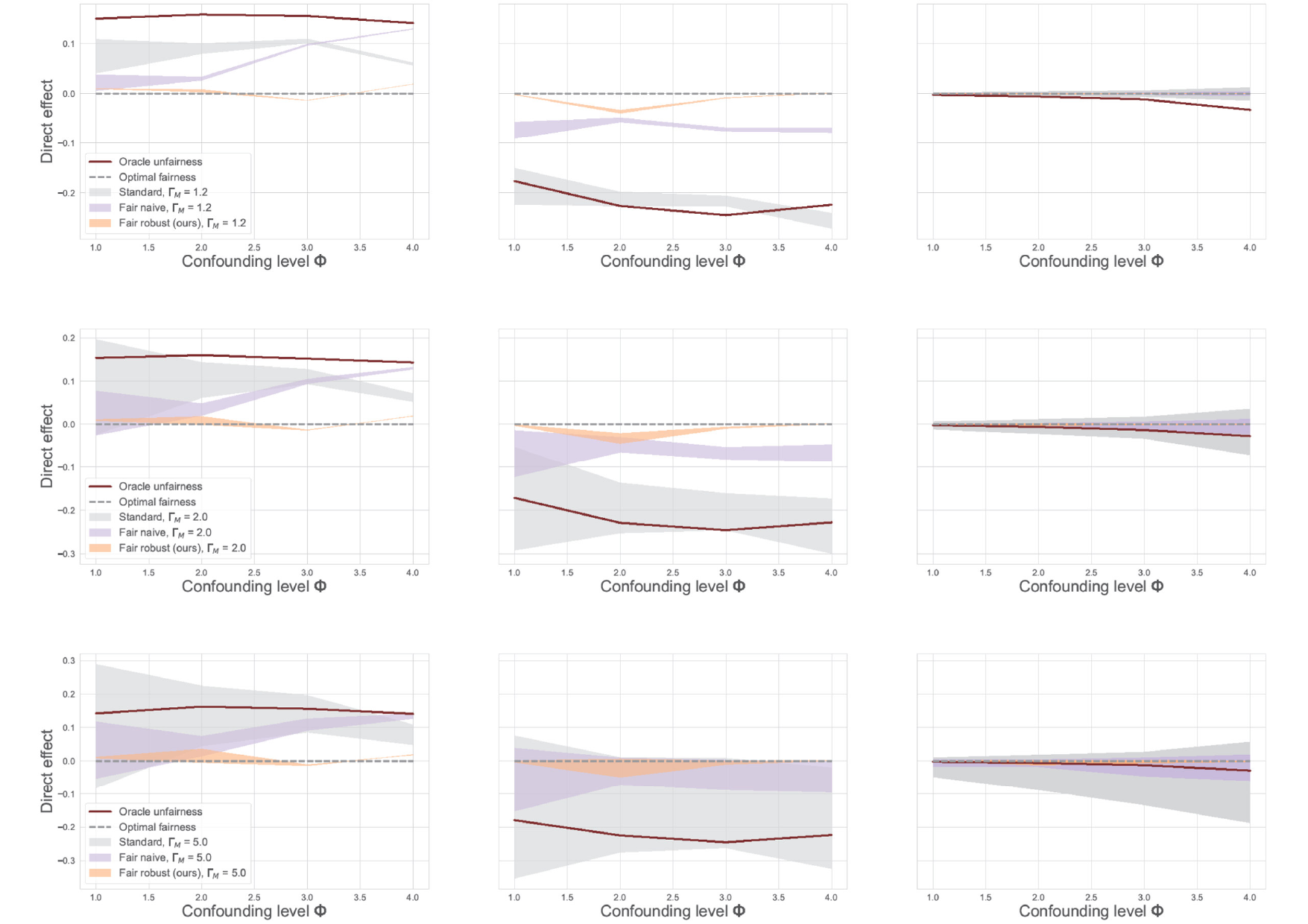}
    \caption{Bounds of the three prediction models on the dataset including \emph{direct unobserved confounding} on DE, IE, SE (left to right) for increasing sensitivity parameter (top to bottom).}
    \label{fig:multi-parameter-sim9}
\end{figure}

\begin{figure}[h]
    \centering
    \includegraphics[width=0.7\linewidth]{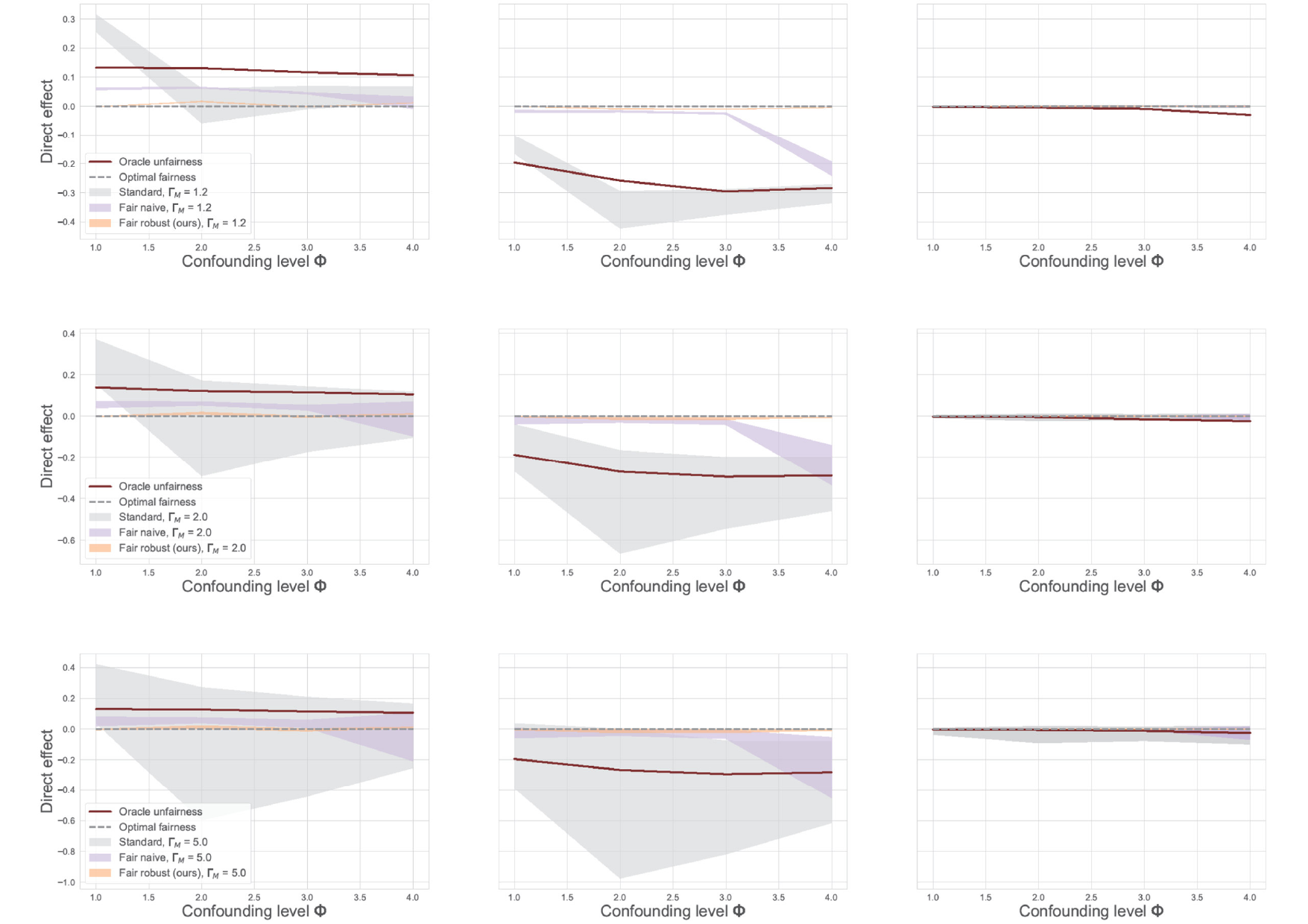}
    \caption{Bounds of the three prediction models on the dataset including \emph{indirect unobserved confounding} on DE, IE, SE (left to right) for increasing sensitivity parameter (top to bottom).}
    \label{fig:multi-parameter-sim10}
\end{figure}

\newpage
\subsection{Performance comparison for varying sensitivity parameter}

\begin{wrapfigure}[13]{r}{0.5\textwidth}
    \centering
    \includegraphics[width=\linewidth]{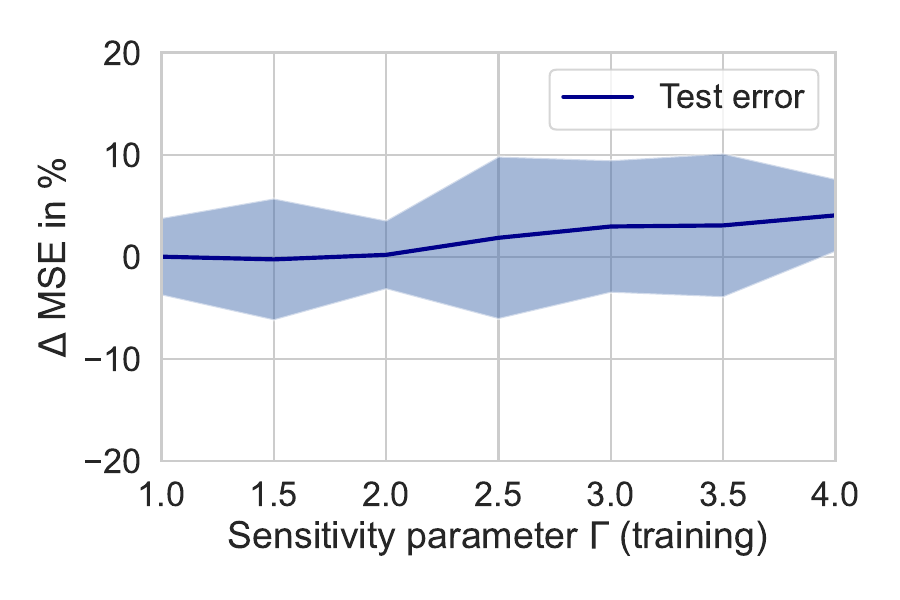}
    \caption{Increase in prediction loss (MSE) in percent for increasing sensitivity parameter chosen for training the fair regressor. Mean and standard deviation calculated over 5 runs.}
    \label{fig:performance_over_sensitivity_parameter}
\end{wrapfigure}
We now assess the robustness of our framework with respect to the sensitivity parameter used for training the fair prediction model, i.e., the level of unobserved confounding assumed to be present in the data. To do so, we trained multiple fair regression models on the real-world dataset for increasing sensitivity parameters $\Gamma \in [1,4]$. In Figure~\ref{fig:performance_over_sensitivity_parameter}, we show the relative increase in performance loss ($\Delta$ MSE) compared to the na{\"i}vely fair prediciton model, i.e., $\Gamma = 1$. We observe only a very small increase in performance loss, which demonstrates the robustness of our framework.

\newpage
\section{Extension to continuous features}
\label{appendix:cont_setting}

For simplicity, we have introduced our framework framework for discrete features in the main paper. Nevertheless, our framework is \emph{directly applicable to continuous features}. The derivations of the bounds are straightforward; they replace the sum over all realizations of $Z$ with the integral. For the direct effect, it follows that
\begin{equation}
    \small
    \begin{split}
       \mathrm{DE}^{\pm}_{a_i, a_j} (y \mid a_i) &= \frac{1}{P(a_i)} \int_{\mathcal{Z}} \sum_{m \in \mathbf{M}} P^{\pm}(y \mid m, z, a_j)P^{\pm}(m \mid z, a_i)p(z)dz\\
        &\hspace{0.5cm}- \frac{P(a_j)}{P(a_i)} \int_{\mathcal{Z}} \sum_{m \in \mathbf{M}}P(y \mid m, z, a_j)P(m \mid z, a_i)p(z)dz - P(y \mid a_i) ,
    \end{split}
\end{equation}
where $p(z)$ denotes the density of $Z$. The other effects follow in the same manner.

To show the applicability of our framework for continuous features, we perform binary classification on a dataset with multiple continuous features. The dataset is generated as 
\begin{align*}
    &Z_1 \sim \mathrm{Uniform}[0.5-0.02U_{\mathrm{SE}}, 1.5-0.02U_{\mathrm{SE}}]\\
    &Z_2 \sim \mathrm{Uniform}[1-0.02U_{\mathrm{SE}}, 2-0.02U_{\mathrm{SE}}]\\
    &Z_3 \sim \mathrm{Uniform}[1.5-0.02U_{\mathrm{SE}}, 2.5-0.02U_{\mathrm{SE}}]\\
    &Z_4 \sim \mathrm{Uniform}[2-0.02U_{\mathrm{SE}}, 3-0.02U_{\mathrm{SE}}]\\
    &A \sim \mathrm{Bernoulli}(\sigma(0.1U_{\mathrm{IE}} + 0.1U_{\mathrm{DE}} + 0.05U_{\mathrm{SE}} + 0.25Z_1 + 0.25Z_2 + 0.25Z_3 - 0.5Z_4))\\
    &M \sim \mathrm{Bernoulli}(\sigma(0.1Z_1 + 0.1Z_2 + 0.1Z_3 - 0.5Z_4 + 2A - 0.1U_{\mathrm{IE}})\\
    &Y = \mathds{1}_{[(0.1Z_1 + 0.1Z_2 + 0.1Z_3 + 0.1Z_4 + M + 2A - 0.1U_{\mathrm{DE}}) \geq 2]},
\end{align*}
for $U_{\mathrm{DE}}, U_{\mathrm{IE}}, U_{\mathrm{SE}} \sim \mathcal{N}(\Phi,e^{-4})$ with $\Phi \in [1,4]$.

We train the three classifiers (analogous to our main paper): (1)~standard, (2)~fair na{\"i}ve, and (3)~fair robust (=our). We train them on the above datasets with sensitivity parameter $\Phi = 2$ and fairness constraints of 0.5 on all effects. For simplicity, we optimize the constraint classifiers with a fixed penalty constraint with weight $\lambda = 2$ instead of augmented lagrangian optimization.

In Figure \ref{fig:continuous_bounds}, we present bounds on the three classifiers for sensitivity parameter 2.0. We observe that training the classifier via our framework successfully restricts the bounds to be lower than the constraint level of 0.5. In sum, the results confirm the effectiveness of our framework also in settings with continuous features.  

\begin{figure}[h]
    \centering
    \includegraphics[width=1\linewidth]{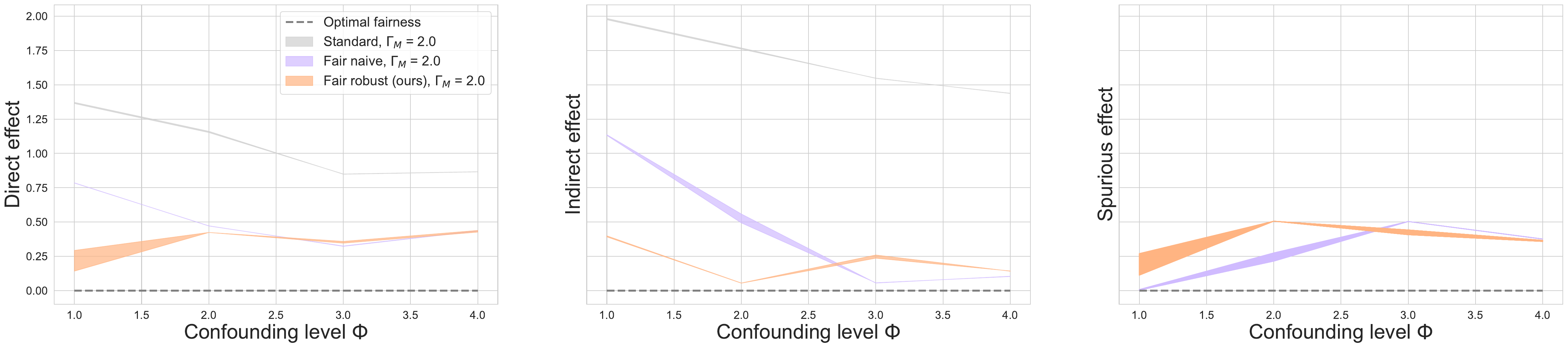}
    \caption{Fairness bounds on the three classifiers on experiments with multiple continuous confounders.}
    \label{fig:continuous_bounds}
\end{figure}

We report the performance of the prediction models in Table~\ref{tab:performance_continuous}. We observe that our fair robust model only suffers from a low loss in prediction performance while achieving fairness below the threshold of 0.5 for all three effects.

\begin{table}[h]
\caption{Performance of the prediction models on the continuous datasets measured by the ROC~AUC and the fairness utility.}
\label{tab:performance_continuous}
\begin{center}
\footnotesize
\begin{tabular}{c|ccc|rrr}
\toprule
\multicolumn{1}{c}{} & \multicolumn{3}{c}{ROC AUC} & \multicolumn{3}{c}{Fairness utility}\\
\cmidrule(lr){2-4}
\cmidrule(lr){5-7}
\multicolumn{1}{c}{Experiment} &\multicolumn{1}{c}{Standard}  &\multicolumn{1}{c}{Fair na{\"i}ve}  &\multicolumn{1}{c}{Fair robust} &\multicolumn{1}{c}{Standard}  &\multicolumn{1}{c}{Fair na{\"i}ve}  &\multicolumn{1}{c}{Fair robust}\\
\midrule 
$\Phi_{U_{\mathrm{DE}}}, \Phi_{U_{\mathrm{IE}}}, \Phi_{U_{\mathrm{SE}}} = 1$ &1.0000 & 0.9921 & 0.7048 & $-$ 0.5360 & $-$ 0.0094 & $-$ 0.0302\\
$\Phi_{U_{\mathrm{DE}}}, \Phi_{U_{\mathrm{IE}}}, \Phi_{U_{\mathrm{SE}}} = 2$ & 1.0000 & 0.9326 & 0.9774 & $-$ 0.5361 & $-$ 0.0369 & 0.0737\\
$\Phi_{U_{\mathrm{DE}}}, \Phi_{U_{\mathrm{IE}}}, \Phi_{U_{\mathrm{SE}}} = 3$ & 1.0000 & 0.9383 & 0.9647 & $-$ 0.5356 & $-$ 0.0339 & 0.1158\\
$\Phi_{U_{\mathrm{DE}}}, \Phi_{U_{\mathrm{IE}}}, \Phi_{U_{\mathrm{SE}}} = 4$ & 1.0000 & 0.8783 & 0.9536 & $-$ 0.5356 & $-$ 0.0723 & 0.1072\\
\bottomrule
\end{tabular}
\end{center}
\end{table}

Beyond continuous features, it may also be interesting to use our framework for continuous mediators. Deriving sharp bounds for continuous mediators in sensitivity analysis is highly non-trivial. Therefore, \emph{no} non-parametric method for such a problem has been developed so far. Nevertheless, this posits an interesting direction for future research. As such, we suggest the following solution approaches:
\begin{itemize}
    \item Discretization: Discretization is commonly used in causal inference and policy learning to handle continuous variables (e.g., actions in reinforcement learning are commonly discrete to yield continuous values). Discretization may induce an overall performance loss, yet, recently, there have been several methods for handling and minimizing this loss \citep[e.g.,][]{Rajbahadur.2021}. For certain machine learning models, discretization can even increase the model performance \citep[e.g.,][]{Lustgarten.2008, Varghese.2023}.  
    \item Proxy variables: Another idea is to employ proxy variables instead of sensitive attributes to enforce fairness \citep[e.g.,][]{Gupta.2018, Kilbertus.2017} or learning causal effects based on proxies to circumvent unidentifiability due to unobserved confounders \citep[e.g.,][]{Miao.2018, Xu.2021}. Hence, one solution is to incorporate continuous mediators into our framework through a discrete proxy variable of the mediator.
\end{itemize}

\end{document}